\documentclass[twoside,11pt]{article}

%

%
%
%

\usepackage[preprint]{jmlr2e}

\usepackage[utf8]{inputenc} 
\usepackage[T1]{fontenc}    
\usepackage[toc,page,header]{appendix}
\usepackage{minitoc}
\usepackage{url}            
\usepackage{booktabs}       
\usepackage{nicefrac}       
\usepackage{microtype}      
\usepackage[dvipsnames]{xcolor}     
\usepackage{listings}
\usepackage{stmaryrd}
\usepackage{graphicx}
\usepackage{subfigure}
\usepackage{appendix}

\definecolor{codegreen}{rgb}{0,0.6,0}
\definecolor{codegray}{rgb}{0.5,0.5,0.5}
\definecolor{codepurple}{rgb}{0.58,0,0.82}
\definecolor{backcolour}{rgb}{0.95,0.95,0.92}

\usepackage{xr}
\usepackage{booktabs} 
\usepackage{caption}
\usepackage{lipsum}
\usepackage{hyperref}       
\usepackage{url}            

\RequirePackage{amsmath,amsfonts}
\usepackage{dsfont}

\usepackage{algorithm}
\usepackage{algorithmic}
\DeclareCaptionFormat{myformat}{#3}
\captionsetup[algorithm]{format=myformat}
\def\rset{\mathbb{R}}

\def\nset{\mathbb{N}}
\def\mcf{\mathcal{F}}
\def\argmax{\mathop{\rm arg\, max}}

\newtheorem{my_definition}{Definition}

\newtheorem{my_remark}{Remark}
\newtheorem{my_proposition}{Proposition}
\newtheorem{assumption}{Assumption}

\DeclareMathOperator{\Diag}{Diag}

\DeclareMathOperator{\spann}{Span}
\DeclareMathOperator{\prob}{\mathbb{P}}
\DeclareMathOperator{\expec}{\mathbb{E}}

\newcommand{\qed}{$\blacksquare$}

\newcommand{\mdf}[1]{\bgroup\color{Sepia}{#1}\egroup}

\usepackage{lastpage}
\jmlrheading{23}{2022}{1-\pageref{LastPage}}{10/21; Revised 6/22}{10/22}{21-1240}{R\'{e}mi Leluc and Fran\c{c}ois Portier}


\ShortHeadings{SGD with Coordinate Sampling: Theory and Practice}{Leluc and Portier}
\firstpageno{1}

\begin{document}

\doparttoc 
\faketableofcontents 


\title{SGD with Coordinate Sampling: Theory and Practice}

\author{\name R\'{e}mi Leluc \email remi.leluc@gmail.com \\
       \addr Department of Statistics, LTCI\\
       Télécom Paris, Institut Polytechnique de Paris \\
       91120 Palaiseau, France
       \AND 
       \name Fran\c{c}ois Portier \email francois.portier@gmail.com \\
       \addr Department of Statistics, CREST\\
       Ecole Nationale de la Statistique et de l'Analyse de l'Information (ENSAI)\\
       35170 Bruz, France}

\editor{Karthik Sridharan}

\maketitle

\begin{abstract}
While classical forms of stochastic gradient descent algorithm treat the different coordinates in the same way, a framework allowing for adaptive (\textit{non uniform}) coordinate sampling is developed to leverage structure in data. In a non-convex setting and including zeroth-order gradient estimate, almost sure convergence as well as non-asymptotic bounds are established. Within the proposed framework, we develop an algorithm, MUSKETEER, based on a reinforcement strategy: after collecting information on the noisy gradients, it samples the most promising coordinate (\textit{all for one}); then it moves along the one direction yielding an important decrease of the objective (\textit{one for all}). Numerical experiments on both synthetic and real data examples confirm the effectiveness of MUSKETEER in large scale problems.
\end{abstract}

\begin{keywords}
stochastic optimization, stochastic gradient algorithms, zeroth-order optimization, coordinate descent, adaptive methods.
\end{keywords}

\section{Introduction}
 
Coordinate Descent (CD) algorithms have become unavoidable in modern machine learning because they are tractable \citep{nesterov2012efficiency} and competitive to other methods when dealing with key problems such as support vector machines, logistic regression, LASSO regression and other $\ell_1$-regularized learning problems \citep{wu2008coordinate,friedman2010regularization}. They are applied in a wide variety of problems ranging from linear systems \citep{lee2013efficient,beck2013convergence} to finite sum optimization \citep{necoara2014random,lu2015complexity} and composite functions \citep{richtarik2014iteration} with parallel \citep{fercoq2015accelerated,richtarik2016parallel}, distributed \citep{fercoq2014fast,qu2015quartz} and dual \citep{shalev2013stochastic,csiba2015stochastic,perekrestenko2017faster} variants. In many contributions \citep{loshchilov2011adaptive,richtarik2016optimal,glasmachers2013accelerated,qu2016coordinate,allen2016even,namkoong2017adaptive}, 
the choice of the coordinate sampling policy is conducted through some optimality criterion estimated along the algorithm.
On the one hand, efficient forms of CD methods rely on a deterministic procedure \citep{nutini2015coordinate} which adapts to the underlying structure in data at the expense of higher calculation and thus, may be costly. On the other hand, stochastic gradient descent (SGD) methods are computationally efficient but often treat all coordinates equally and thus, may be sub-optimal. In the spirit of adaptive schemes, we tend to bridge the gap between the best of both worlds by developing, within a noisy gradient framework, a general stochastic coordinate descent method with a particular selection strategy. 

We are interested in solving unconstrained optimization problems of the form $\min_{\theta \in \rset^p} f(\theta)$, where the objective function $f$ may be either known exactly or accessed through noisy observations. When $f$ is differentiable, a common appproach is to rely on the gradient of $f$. However, in many scenarios and particularly in large-scale learning, the gradient may be hard to evaluate or even intractable. Hence, one usually approximates the gradient using zeroth or first order estimates \citep{ghadimi2013stochastic,lian2016comprehensive}. The former constructs pseudo-gradients by sampling some perturbed points or using finite differences \citep{flaxman2005,duchi2012randomized,nesterov2017random,shamir2017optimal} (see \citet{liu2020primer} for a recent survey and numerous references) leading to biased gradient estimates while the latter often relies on data sampling techniques \citep{needell2014stochastic,papa2015adaptive} to obtain unbiased gradient estimates. 
In both cases, a random gradient estimate is available at a cheap computing cost and the method consists in moving along this estimate at each iteration. Early seminal works on such stochastic algorithms include \citet{robbins1951stochastic,kiefer1952stochastic} and a recent review dealing with large scale learning problems is given in \citet{bottou2018optimization}. 

Starting from an initial point $\theta_0 \in \rset^p$, the SGD algorithm is defined by the update rule
\begin{equation*} 
\forall t \geq 0, \quad \theta_{t+1} = \theta_{t} - \gamma_{t+1} g_t
\end{equation*}
where $g_t\in \mathbb R^p $ is a gradient estimate at $\theta_t$ (possibly biased) and $(\gamma_t)_{t\geq 1}$ is some learning rate sequence that should decrease throughout the algorithm. While the computation of $g_t$ may be cheap, it still requires the computation of a vector of size $p$ which may be a critical issue in high-dimensional problems. To address this difficulty, we rely on sampling well-chosen coordinates of the gradient estimate at each iteration. 

We consider the framework of stochastic coordinate gradient descent (SCGD) which modifies standard stochastic gradient descent methods by adding a selection step to perform random coordinate descent. The SCGD algorithm is defined by the following iteration
\begin{align*}
\left\{\begin{array}{ll}
\theta_{t+1}^{(k)} = \theta_{t}^{(k)} &\text { if } k \neq \zeta_{t+1} \\
\theta_{t+1}^{(k)} = \theta_{t}^{(k)} - \gamma_{t+1}  g_t^{(k)} &\text{ if } k=\zeta_{t+1} 
\end{array}\right.
\end{align*} 
where $ \zeta_{t+1}$ is a random variable valued in $\llbracket 1,p \rrbracket$ which selects a coordinate of the gradient estimate. The distribution of $\zeta_t$ is called the \textit{coordinate sampling policy}. Note that the SCGD framework is very general as it contains as many methods as there are ways to generate both the gradient estimate $g_t$ and the random variables $\zeta_t$. 

\medskip
\noindent \textbf{Contributions.} The main contributions are as follows

\textit{(i)(Theory)} 
We show the almost-sure convergence of the SCGD iterates $(\theta_t)_{t \in \nset}$ towards stationary points in the sense that $\nabla f(\theta_t) \to 0$ almost surely as well as non-asymptotic bounds on the optimality gap $\expec[f(\theta_t)-f^\star]$ where $f^\star$ is a lower bound of $f$. The working conditions are relatively weak as the function $f$ is only required to be $L$-smooth (classical in non-convex problems) and the stochastic gradients are possibly biased  with unbounded variance, using a growth condition related to \textit{expected smoothness} \citep{gower2019sgd}. 

 \textit{(ii)(Practice)} We develop  a new algorithm, called MUSKETEER, for \textit{MUltivariate Stochastic Knowledge Extraction Through Exploration Exploitation Reinforcement}. In the image of the motto 'all for one and one for all', this procedure belongs to the SCGD framework with a particular design for the \textit{coordinate sampling policy}. It compares the value of all past gradient estimates $g_t$ to select a descent direction (\textit{all for one}) and then moves the current iterate according to the chosen direction (\textit{one for all}). The heuristic is the one of reinforcement learning in the sense that large gradient coordinates represent large decrease of the objective and can be seen as high rewards. The resulting directions should be favored compared to the path associated to small gradient coordinates. By updating the \textit{coordinate sampling policy}, the algorithm is able to detect when a direction becomes rewarding and when another one stops being engaging. 

\medskip
\noindent \textbf{Related work.} 
The authors of \citep{nutini2015coordinate} investigate the deterministic Gauss-Southwell rule which consists of picking the coordinate with maximum gradient value. In trusting large gradients, this rule looks like the one of MUSKETEER except that no stochastic noise -neither in the gradient evaluation nor in the coordinate selection- is present in their algorithm. In that aspect, our method differs from all the previous CD studies \citep{loshchilov2011adaptive,richtarik2016optimal,glasmachers2013accelerated,qu2016coordinate,allen2016even,namkoong2017adaptive}
 which rely on $\nabla f$. Among the SGD literature, compression and sparsification methods \citep{alistarh2017qsgd,wangni2018gradient} were developed  for communication efficiency. The former use compression operators to select a few components of the gradient estimates at the cost of full gradient computation and coordinate sorting. The latter use a gradient estimate $g$ which is sparsified using probability weights to reach an unbiased estimate of the gradient. In contrast, the SCGD framework allows the gradient to be biased as no importance re-weighting is performed. Note also that, to cover zeroth-order methods, the gradient estimate itself $g_t$ is allowed to be biased as for instance in the recent study of \citet{ajalloeian2020analysis}. The proofs of the asymptotic convergence results are based on ideas from \cite{bertsekas2000gradient} with particular extensions in the framework of biased gradient estimates. Finally, the non-asymptotic bounds are inspired from \citet{moulines2011non} where the authors provide a non-asymptotic analysis for standard SGD.

\medskip
\noindent \textbf{Outline.} Section \ref{sec:math_background} introduces the mathematical framework with the different sampling strategies and Section \ref{sec:main_results} contains our main theoretical results. Section \ref{sec:main_algo} is dedicated to MUSKETEER algorithm and a numerical analysis is performed in Section \ref{sec:simus}. Proofs, technical details and additional experiments may be found in the appendix.

\section{Mathematical Background}\label{sec:math_background}

\subsection{Notation and problem set-up}

\textbf{Notation.} Denote by $(e_1,\ldots,e_p)$ the canonical basis of $\rset^p$ and for $k \in \llbracket 1,p \rrbracket$, $D(k)=e_k^{} e_k^{T} \in \{0,1\}^{p \times p}$ is a diagonal matrix with a $1$ in position $k$. $\|\cdot\|_2$ and $\|\cdot\|_\infty$ are respectively the Euclidian and infinity norm. For any $u \in \rset^p$, $u^{(k)}$ is the k-th coordinate of $u$; $\mathds{1}_A$ is the indicator function of the event $A$, \textit{i.e.}, $\mathds{1}_A=1$ is $A$ is true and $\mathds{1}_A=0$ otherwise. Denote by $\mathcal{U(}\llbracket 1,p \rrbracket)$ the uniform distribution over $\llbracket 1,p \rrbracket$. For a vector of probability weights $d=(d^{(1)},\ldots,d^{(p)})$ with $\sum_{k=1}^p d^{(k)}=1$, denote by $Q(d)$ the associated categorical distribution.

\medskip
\noindent \textbf{Problem set-up.} Consider the classical stochastic optimization problem 
\begin{align*}
\min_{\theta \in \mathbb R^p} \left\{f(\theta) = \expec_{\xi}[f(\theta,\xi)] \right\},
\end{align*}
where $\xi$ is a random variable. In many scenarios, \textit{e.g.} empirical risk minimization or reinforcement learning, the gradient $\nabla f$ cannot be computed in a reasonable time and only a stochastic version, possibly biased, is available. 
The distribution of $\xi$ is called the \textit{data sampling policy} as it refers to the sampling mechanism in the empirical risk minimization (ERM) framework.
This running example is presented below and shall be considered throughout the paper. 
 Other classical optimization problems where stochastic gradients are available include adaptive importance sampling \citep{delyon+p:2018}, policy gradient methods \citep{hanna2019importance} and optimal transport \citep{genevay2016stochastic}.

\medskip
\noindent \textbf{Running Example (ERM).} Given some observed data $z_1 ,\ldots,z_n\subset \mathcal Z$ and a loss function $\ell: \rset^p \times \mathcal Z \to \rset$, the objective function $f$ approximates the risk $\expec_{z}[\ell(\theta,z)]$ by the so-called empirical risk defined as
$$ \forall \theta \in \rset^p, \quad  f(\theta) =  \frac{1}{n} \sum_{i=1}^n \ell(\theta,z_i). $$
Evaluating $f$ or its gradient is prohibitive in large scale machine learning as it requires seeing all the samples in the dataset. 
Instead, after picking at random an index $ j = \xi$, uniformly distributed over $\llbracket 1, n\rrbracket$, the $k$-th coordinate of the gradient estimate may be computed as $  (\ell(\theta + h e_k,z_j)-\ell(\theta,z_j)) / {h}$. When differentiation is possible, another gradient estimate is offered by $\nabla_\theta  \ell(\theta , z_j) $. These two gradient estimates are of a different nature: the first one, often referred to as zeroth-order estimate, is biased whereas the second one, often referred to as first order estimate, is unbiased.

\subsection{Gradient estimates}
Throughout the paper, the gradient generator is denoted by $g_h(\cdot, \xi)$ where the parameter $h\geq 0$ represents the underlying bias as claimed in the next assumption. This level of generality allows to include zeroth-order estimate as discussed right after the assumption.

\begin{assumption} [Biased gradient]\label{cond:biased_grad}
There exists a constant $c\geq 0$ such that: $$\forall h>0 ,\, \forall \theta\in \mathbb R^p,\quad  \|\expec_{\xi}[ g_h(\theta,\xi ) ] - \nabla f(\theta)\|_2 \leq c h.$$ 
\end{assumption}

\noindent This general assumption enables to work with classical unbiased gradient in the framework of first order estimates by taking $c=0$. Furthermore, Assumption \ref{cond:biased_grad} is satisfied for the following well-spread zeroth-order estimates.

\medskip
\noindent \textbf{Example 1 (smoothing).} 
The smoothed gradient estimate \citep{nesterov2017random} is given for all $\theta\in \mathbb R^p$ by $g_{h}(\theta,\xi) = h^{-1} [f(\theta + h U,\xi)-f(\theta,\xi)]U$ where $U $ is a standard Gaussian vector (independent from $\xi$). An alternative version consists in taking  $U$ uniformly distributed over the unit sphere. 

\medskip
\noindent \textbf{Example 2 (finite differences).} The finite differences gradient estimate is given for all $\theta\in \mathbb R^p$ by $g_{h}(\theta,\xi) = \sum_{k=1}^p g_{h}(\theta,\xi)^{(k)}e_k$ where for all $k=1,\ldots,p$ the coordinates are $g_{h}(\theta,\xi)^{(k)} = h^{-1}[f(\theta + h e_k,\xi)-f(\theta,\xi)] $.

\medskip
\noindent Both previous examples share the following general property. There exists a probability measure $\nu$ satisfying $\int_{\rset^p} x x^\top \nu (\mathrm{d}x) = I_p$ such that, 
\begin{align}\label{cond:grad_NEW2}
\forall h>0,\theta\in \mathbb R^p, \quad \expec_{\xi}[ g_h (\theta,\xi ) ] = \int_{\rset^p} x   \left\{  \frac{ f(\theta + h x ) - f(\theta ) } {h}   \right\}   \nu (\mathrm{d}x).
\end{align}
The smoothed gradient estimate is recovered when $\nu $ is the standard Gaussian measure and taking $\nu = \sum_{k=1}^p \delta_{e_k}/p $ covers the finite differences estimate. As detailed in the next subsection, an interesting framework is to use a measure $\nu$ that evolves through time and put different weights on the different directions. As stated in the following proposition, when the function $f$ is $L$-smooth, i.e., $\nabla f$ is $L$-Lipschitz, the bias of the gradient estimate \eqref{cond:grad_NEW2} is of order $ h$ and thus satisfies Assumption \ref{cond:biased_grad}. 

\begin{my_proposition}\label{useful_bound}
Under Eq. \eqref{cond:grad_NEW2}, if $f$ is $L$-smooth, then Assumption \ref{cond:biased_grad} holds true with 
$c =  \sqrt C L / 2$ where $ C = \int_{\rset^p} \|x\|_2 ^6 \nu(\mathrm{d}x) <\infty $.
\end{my_proposition}

The previous proposition allows us to cover the two methods: smoothing and finite difference. Note that for the latter, the constant $C$ is equal to $1$. 

\subsection{Coordinate Sampling Policy}\label{sec:policy}

Let $(\xi_t)_{t\geq 1} $ be a sequence of independent and identically distributed random variables. Let $(\gamma_t)_{t\geq 1} $ be a sequence of positive numbers called \textit{learning rates}. Let $(h_t)_{t\geq 1} $ be a sequence of positive numbers called \textit{smoothing parameters}. Denote by $ g_t = g _{h_{t+1} }  ( \theta_t,  \xi_{t+1} )$ the gradient estimate at time $t$. The classical SGD update rule is given by
\begin{align}\label{eq:sgd_iteration}
\theta_{t+1}  =  \theta_{t}  - \gamma_{t+1} g_t  ,\quad t\geq 0,
\end{align}
For any $t \in \nset, \mcf_t   = \sigma( \theta_0, \theta_1,\ldots, \theta_t)$ is the $\sigma$-field associated to the sequence of iterates $(\theta_t)_{t \in \nset}$.

The framework of SCGD is introduced thanks to random coordinate sampling. At each step, only one coordinate of the parameter of interest is updated. This coordinate is selected at random according to a distribution valued in $\llbracket 1,p \rrbracket$ which is allowed to evolve during the algorithm. The iteration of the coordinate sampling algorithm is given coordinate-wise by
\begin{align}\label{eq:cond_sgd}
\left\{\begin{array}{ll}
\theta_{t+1}^{(k)} = \theta_{t}^{(k)} &\text { if } k \neq \zeta_{t+1} \\
\theta_{t+1}^{(k)} = \theta_{t}^{(k)} - \gamma_{t+1}  g_t^{(k)} &\text{ if } k=\zeta_{t+1} 
\end{array}\right.
\end{align} 
where $ \zeta_{t+1}$ is a random variable valued in $\llbracket 1,p \rrbracket$. Hence $\zeta_{t+1}$ selects the coordinate along which the $t$-th descent shall proceed.  The distribution of $\zeta_{t+1}$  is called the \textit{coordinate sampling policy} as opposed to the \textit{data sampling policy} governed by the random variable $\xi_{t+1}$. The distribution of $\zeta_{t+1}$ is characterized by the probability weights vector $d_{t} = (d_{t}^{(1)},\ldots,d_{t}^{(p)})$ defined by 
\begin{equation*}
d_{t}^{(k)}  = \mathbb P ( \zeta_{t+1} = k |\mcf_{t}), \quad k\in \llbracket 1,p \rrbracket.
\end{equation*}
The categorical distribution on $\llbracket 1,p \rrbracket$ associated to $d_t$ is denoted by $Q(d_t)$, \textit{i.e.}, conditionally to $\mathcal F_t$, we have: $$\forall t \geq 0, \quad  \zeta_{t+1}  \sim Q (d_t) \quad \text{ with } \quad d_{t} = (d_{t}^{(1)},\ldots,d_{t}^{(p)}).$$

\noindent \textbf{Running Example (ERM).} The CD algorithm defined by Equation \eqref{eq:cond_sgd} can easily be applied in the ERM framework. The \textit{coordinate sampling} strategy $\zeta \sim Q(d_t)$ combined with the uniform \textit{data sampling} $\xi \sim \mathcal{U}(\llbracket 1,n \rrbracket)$ leads to $\theta_{t+1}^{(\zeta)}  = \theta_{t}^{(\zeta)} - (\gamma_{t+1}  / h_{t+1} )   (\ell(\theta_t + h_{t+1} e_\zeta , z_\xi)-\ell(\theta_t  , z_\xi))  $ (zeroth-order) and $\theta_{t+1}^{(\zeta)}  = \theta_{t}^{(\zeta)} - \gamma_{t+1}  \partial_{\theta_{\zeta}}  \ell(\theta_t, z_\xi)$ (first order).

Given the past, the \textit{data sampling} and \textit{coordinate sampling} draws should not be related.

\begin{assumption}[Conditional Independence]\label{ass:indep_random}
$\zeta_{t+1}$ is independent from $\xi_{t+1} $ conditionally on $\mathcal F_t$.
\end{assumption}

This assumption is natural in the  ERM context as in most cases there is no particular link between the sample indexes and the coordinates. Futhermore, the independence property plays an important role in our proofs. The SCGD algorithm defined in \eqref{eq:cond_sgd} is simply written with matrix notation as
\begin{equation*} 
\theta _ {t+1}  = \theta _ {t} -  \gamma_{t+1}  D(\zeta_{t+1})  g _t,
\end{equation*} 
where $D(k) = e_{k}^{} e_{k}^\top \in \mathbb R^{p\times p}$ has its entries equal to $0$ except the $(k,k)$ which is $1$. Observe that the distribution of the random matrix $D(\zeta_{t+1})$  is fully characterized by the matrix 
\begin{align*}
D_t = \expec [ D(\zeta_{t+1})|\mathcal F_{t}] = \Diag (d_{t}^{(1)},\ldots,d_{t}^{(p)}).
\end{align*}
Note that under Assumptions \ref{cond:biased_grad} and \ref{ass:indep_random}, the average move of SCGD follows a biased gradient direction. For instance, when $c=0$, the average move of SCGD is given by $\expec [\theta _ {t+1}  - \theta _ {t}  |\mathcal F_t] = -\gamma_{t+1} D_t \nabla f (\theta_t ) $ which bears resemblance to the Conditioned-SGD iteration  \citep[Section 6.2]{bottou2018optimization}. Such preprocessing is meant to refine the gradient direction through a matrix mulitplication for a better understanding of the underlying structure of the data. A natural question rises on the choice of the matrix $D_t$ among all the possible coordinate sampling distributions. 

The SCGD framework is efficient as soon as one can compute each coordinate of the gradient estimate. This is the case for zeroth-order (ZO) optimization with finite differences where the full gradient estimate uses $p$ partial derivatives, each of them requiring two queries of the objective function. SCGD reduces this cost to a single coordinate update.

\begin{my_remark}[Batch coordinates] A natural extension is to consider subsets of coordinates, \textit{a.k.a.} block-coordinate descent. Note that this framework is covered by our approach as the proofs can be extended by summing different matrices $D(\zeta)$. Similarly to mini-batching \citep{gower2019sgd}, one can consider multiple draws for the coordinates that are to be updated. The selecting random matrix $D(\zeta_{t+1})$ may be replaced by a diagonal matrix with $m(<p)$ non-zero coefficients. For that matter, it is enough to have multiple draws from the categorical distribution $Q(d_t)$.
\end{my_remark}

\begin{my_remark}[Parallelization]
Several families of communication-reduction methods such as quantization \citep{alistarh2017qsgd}, gradient sparsification \citep{wangni2018gradient,alistarh2018convergence} or local-SGD \citep{patel2019communication} have been proposed to reduce the overheads of distribution. The SCGD framework can benefit from such data parallelization techniques. When a fixed number $m$ of machines is available, it is then possible to gain computational acceleration by drawing $m$ times the coordinate distribution $Q(d_t)$ on the different machines and then transmit the batch of selected coordinates to the workers.
\end{my_remark}

\subsection{Adaptive and Unbiased Policies}

To understand more clearly the differences between SGD and SCGD, we shall rely on a more general iteration scheme. This framework is useful to compare different algorithms in terms of adaptive policies and unbiased estimates. Consider the following general update rule
\begin{equation} \label{eq:general_rule}
\theta _ {t+1}  = \theta _ {t} -  \gamma_{t+1} h  ( \theta_t,  \omega_{t+1} ), \quad t \geq 0
\end{equation} 
where $h$ is a gradient generator and $(\omega_t)_{t\geq 1}$ is a sequence of random variables which are not necessarily independent nor identically distributed. Observe that both frameworks, SGD and SCGD, are instances of \eqref{eq:general_rule}. For example, the randomness of SCGD can be expressed through $\omega_t = (\xi_t,\zeta_t)$.

\begin{my_definition}[Policy]
Denote by $P_t$ the distribution of $\omega_{t+1}$ given $\mathcal F_t$. The sequence $(P_{t})_{t \geq 0} $ is  called the policy of the stochastic algorithm. 
\end{my_definition}

The policy of a stochastic algorithm is an important tool as it determines the randomness introduced over time. On the one hand, it provides insights on the expected behavior of the algorithm. On the other hand, it measures the ability to adapt through the iterations.

\begin{my_definition}[Unbiased and Adaptive] A policy $(P_{t})_{t \geq 0} $ is called "unbiased" if: $\forall \theta \in  \rset^p,t\geq 0$, $\int  h ( \theta,  \omega  ) P_t(\mathrm{d} \omega) \propto \nabla f (\theta) $. It is called "naive" if $P_t$ does not change with $t$, otherwise it is adaptive.
\end{my_definition}

With these definitions in mind, it is clear that the SGD policy \eqref{eq:sgd_iteration} under Assumption \ref{cond:biased_grad}  with $c=0$ is unbiased and naive, and so does the policy induced by first order gradient in ERM.

Within the framework of SCGD, a policy cannot be unbiased and adaptive as claimed in the next proposition.
 
\begin{my_proposition}[Unbiased coordinate policy] \label{prop:unb_pol}
Suppose that Assumption \ref{cond:biased_grad} is fulfilled with $c=0$ and that $ {\spann} \{\nabla f(\theta) \, : \, \theta \in \mathbb R^p \} $ is dense in $\mathbb R^p $, then the only unbiased coordinate sampling policy is $D_t = I_p / p$. It corresponds to uniform coordinate sampling. 
\end{my_proposition}
When working under Assumption \ref{cond:biased_grad} with $c=0$, SCGD with uniform coordinate sampling is unbiased and hence similar to SGD. This is confirmed in the numerical experiments (Appendix \ref{sec:simus_first} and \ref{sec:more_num}). 
However, a uniform sampling does not use any available information to favor coordinates among others. Thus, the approach promoted in the paper is different: past gradient values are used to update the probability weights of $D_t$. The resulting method is an adaptive algorithm which is biased.

\begin{my_remark}[Importance Coordinate Sampling]\label{rk:IS}
Note that the general framework defined above includes the particular case where the coordinates are selected according to $\zeta$ then reweighted as proposed in \citep{wangni2018gradient}. This corresponds to the choice $h(\theta, \omega_{t+1} ) = D^{-1}_t D(\zeta_{t+1}) g(\theta , \xi_{t+1}).$ Even though such a policy is adaptive and unbiased, it turns out -from our numerical experiments (Appendix \ref{sec:more_zo_simus})- that it behaves similarly to the uniform version and is therefore sub-optimal.
\end{my_remark}

\section{Main Theoretical Results} \label{sec:main_results}

In a general non-convex setting, we investigate the almost sure convergence of SCGD algorithms as well as non-asymptotic bounds. The following assumptions on the objective function $f$ are classical among the SGD literature.

\begin{assumption}[Lower bound] \label{ass:lower_bound} There exists $f^\star \in \rset$ such that: $\forall \theta \in \rset^p, f(\theta) \geq f^\star$.
\end{assumption}

\begin{assumption}[Smoothness] \label{ass:smooth} The objective $f:\mathbb{R}^{p} \rightarrow \mathbb{R}$ is twicely continuously differentiable and $L$-smooth: $\forall \theta,\eta \in \rset^p, \quad \|\nabla f(\theta) - \nabla f(\eta)\|_2 \leq L \|\theta - \eta\|_2$.
\end{assumption}

\begin{my_remark}[Coordinate smoothness]\label{rem:coord_smooth}
Note that this assumption may be refined using the notion of coordinate smoothness with parameters $(L_1,\ldots,L_p)$ where for all $k=1,\ldots,p$, $\partial_k f(\cdot)$ is $L_k$-Lipschitz, \textit{i.e.}, for all $\theta \in \rset^p, \delta \in \rset, |\partial_k f(\theta + \delta e_k)-\partial_k f(\theta)| \leq L_k |\delta|$. Within this framework, small values of $L_k$ are associated to a high degree of smoothness in the $k$-th direction. Conversely, large values of $L_k$ are associated to more difficult minimization problems along that direction. Intuitively, it requires more energy to minimize $f$ along these directions and one should assign more sampling probability on coordinates with larger $L_k$ (see Proposition \ref{prop:regret} in the appendix).
\end{my_remark}

When dealing with stochastic algorithms, the stochastic noise associated to the gradient estimates is the keystone for the theoretical analysis. To treat this term, we consider a weak growth condition, related to the notion of \textit{expected smoothness} as introduced in \citet{gower2019sgd} (see also \citet{gazagnadou2019optimal,gower2021stochastic}). 

\begin{assumption}[Growth condition] \label{ass:exp_smooth} With probability $1$, there exist  $0 \leq \mathcal{L},\sigma^2 < \infty$ such that for all $\theta \in \rset^p$ and $h> 0$, we have: $\expec\left[\|  g_h(\theta,\xi)\|_{2}^2 \right] \leq 2 \mathcal{L} \left( f(\theta) - f^\star\right) + \sigma^2.$
\end{assumption}

This bound on the stochastic noise $ \expec\left[ \| g (\theta,\xi)  \|_{2}^2  \right] $ is the key to prove the almost sure convergence of the algorithm.  Note that Assumption \ref{ass:exp_smooth} is weak as it allows the noise to be large when the iterate is far away from the optimal point. In that aspect, it contrasts with uniform bounds of the form  $\expec\left[\| g(\theta ,\xi)\|_2^2 \right] \leq \sigma^2$ for some deterministic $\sigma^2 >0$ \citep{nemirovsky1983problem,nemirovski2009robust,shalev2011pegasos}. Observe that such uniform bound is recovered by taking $\mathcal{L}=0$ in Assumption \ref{ass:exp_smooth} but cannot hold when the objective function $f$ is strongly convex \citep{nguyen2018sgd}. The standard Robbins-Monro condition,  $\sum_{t \geq 1} \gamma_{t} = +\infty$ and $\sum_{t \geq 1} \gamma_t ^2 < +\infty$ is required in the next theorem which serves as a starting point for a comparison between SGD and SCGD methods.

\begin{theorem}[Almost sure convergence of biased SGD] \label{th:convergence_as_sgd} 
Suppose that Assumptions \ref{cond:grad_NEW2} to \ref{ass:exp_smooth} are fulfilled and let $(\theta_t)_{t \in \nset}$ be the sequence of iterates defined by \eqref{eq:sgd_iteration}. If the learning rates satisfy the Robbins-Monro condition and $h_t^2 = O(  \gamma_t) $ then $\nabla f(\theta_t) \to 0$ a.s. when $t \to +\infty$.
\end{theorem}

The SCGD framework is very general in the sense that it covers as many algorithms as there are ways to generate both the gradient estimate $g_t$ and the random variables $\zeta_t$ that select the coordinates. The next theorem provides the almost sure convergence of particular instances of SCGD algorithms where the true gradient is known and used to define the \textit{coordinate sampling} policy. It recovers the deterministic Gauss-Southwell rule \citep{nutini2015coordinate} and extends it to the case where the coordinate weights are proportional to any norm of the current gradient $\nabla f(\theta_t)$. 

\begin{theorem}[Almost sure convergence of particular SCGD] \label{th:convergence_as_csgd_grad} 
Suppose that Assumptions \ref{cond:grad_NEW2} to \ref{ass:exp_smooth} are fulfilled  and let $(\theta_t)_{t \in \nset}$ be the sequence of iterates defined by \eqref{eq:cond_sgd}, i.e., $\theta _ {t+1}  = \theta _ {t} -  \gamma_{t+1}  D(\zeta_{t+1})  g _t $.  If the learning rates satisfy the standard Robbins-Monro and $h_t^2 = O(  \gamma_t) $, then the two following results hold: 
\begin{itemize}
\item (a) (maximum gradient) if the selected coordinate follows the maximum coordinate of the gradient $\zeta_{t+1} = \argmax_{k=1,\ldots,p} |\partial_k f(\theta_t)|$ then $\nabla f(\theta_t) \to 0$ almost surely as $t \to +\infty$.
\item (b) (gradient weights) if the selection weights are proportional to the gradient norm $D_t \propto (|\nabla_k f(\theta_t)|^q)_{1 \leq k \leq p}$ with $q>0$ then $\nabla f(\theta_t) \to 0$ almost surely as $t \to +\infty$.
\end{itemize}
\end{theorem}

\medskip
\begin{my_remark}[Sparse Gradient]\label{rem:sparsity_func} In light of the sparsity assumption used in \citet{pmlr-v84-wang18e}(Assumption A5), note that SCGD methods with weights proportional to the gradient coordinates can outperform uniform coordinate sampling as they only select the relevant directions throughout the procedure. Such sparsity framework happens for instance in hyper-parameter tuning problems of learning systems: usually the performance of the system is insensitive to some hyper-parameters which implies the sparsity of the gradients.
\end{my_remark}

In the general case, one may not have access to the true gradient and can only rely on the estimate $g_t$. Another assumption is therefore needed on the weights of the \textit{coordinate sampling} policy to ensure that all the coordinates of interest are selected throughout the algorithm. The success of the proposed approach relies on the following restrictions between the \textit{learning rates} sequence $(\gamma_t)_{t \in \nset}$ and the weights of the \textit{coordinate policy}. This is formally stated in the following assumption, referred to as the extended Robbins-Monro condition. Denote by $\beta_{t+1}$ the smallest probability weight at time $t$, \textit{i.e.}, $\beta_{t+1} = \min_{1 \leq k \leq p}  d_{t}^{(k)} .$

\begin{assumption}[Extended Robbins-Monro condition] \label{ass:lr} $(\gamma_t)_{t\geq 1}$, $(\beta_t)_{t\geq 1}$ are positive sequences such that 
$ \sum_{t \geq 1} \gamma_{t}  \beta_{t}  = +\infty$ and $ \sum_{t \geq 1} \gamma_{t}^2 < +\infty.$
\end{assumption}
From a practical point of view, those are not restrictive as they can always be implemented by the user. In the case $D_t = I_p$,  this is simply the standard Robbins-Monro condition.

\begin{theorem}[Almost sure convergence of general SCGD] \label{th:convergence_as_csgd} 
Suppose that Assumptions \ref{cond:grad_NEW2} to \ref{ass:exp_smooth} are fulfilled  and let $(\theta_t)_{t \in \nset}$ be the sequence of iterates defined by \eqref{eq:cond_sgd}. Assume moreover that the learning rates satisfy Assumption \ref{ass:lr}, $h_t^2 = O(  \gamma_t) $ and that 
$(\beta_t)$ has a positive lower bound, then $\nabla f(\theta_t) \to 0$ almost surely as $t \to +\infty$.
\end{theorem}

\begin{my_remark}[Global convergence] \label{rem:global}
Other convergence results concerning the sequence of iterates towards global minimizers may be obtained by considering stronger assumptions including that $f$ is coercive and the level sets of stationary points $\{\theta,\nabla f(\theta)=0\} \cap \{\theta,f(\theta)=y\}$ are locally finite for every $y \in \rset^d$ (see \citet{gadat2018stochastic} or Appendix \ref{subsec:cv_stronger_assumptions}).
\end{my_remark}

For a non-asymptotic analysis, we place ourselves under the Polyak–Łojasiewicz (PL) condition \citep{polyak1963gradient} which does not assume convexity of $f$ but retains many properties of strong convexity, \textit{e.g.} the fact that every stationary point is a global minimum.
\begin{assumption}[PL inequality] \label{ass:pl_ineq} There exists a constant $\mu >0$ such that:
\begin{align*}
\forall \theta \in \rset^p, \| \nabla f(\theta) \|_2^2 \geq 2\mu\left( f(\theta) - f^\star \right).
\end{align*}
\end{assumption}

Similarly to \citep{moulines2011non}, we introduce $\varphi_\alpha:\rset_+^\star \to \rset, \varphi_\alpha(t) = \alpha^{-1}(t^\alpha - 1)$ if $\alpha \neq 0$ and $\varphi_\alpha(t) =\log(t)$ if $\alpha=0$. Denoting $\delta_t = \expec\left [ f(\theta_t) - f^\star \right]$ and assuming that $\beta_{t+1} \geq \beta >0$, one can obtain the recursion equation: $\delta_t \leq \left( 1 - 2\mu \beta \gamma_t + L \mathcal{L} \gamma_t^2 \right) \delta_{t-1} + \gamma_{t}^2 (\sigma^2 L + c^2)/2$, leading to the following theorem on non-asymptotic bounds for SCGD methods.

\begin{theorem}[Non-asymptotic bounds] \label{th:non_as_bound}
Suppose that Assumptions \ref{cond:biased_grad} to \ref{ass:pl_ineq} are fulfilled and let $(\theta_t)_{t \in \nset}$ defined in \eqref{eq:cond_sgd} with $\gamma_t = \gamma t^{-\alpha}$ and $h_t = \sqrt{\gamma_t}$. Denote by $\delta_t = \expec\left [ f(\theta_t) - f^\star \right]$ and assume that there exists $\beta >0$ such that $\beta_{t+1} \geq \beta >0$. We have for $\alpha \in [0,1]$: \\
\textbullet \ If $0 \leq \alpha < 1$ then 
\begin{align*}
\delta_t \leq 2 \exp\left( 2 L \mathcal{L} \gamma^2 \varphi_{1-2\alpha}(t)\right) \exp\left(-\frac{\mu \beta \gamma}{4} t^{1-\alpha}\right) \left( \delta_0 + \frac{\sigma^2+2c^2}{2 \mathcal{L}} \right) + \frac{\gamma (\sigma^2 L + 2c^2) }{\mu \beta} t^{-\alpha} 
\end{align*}
\textbullet \ If $\alpha = 1$ then 
\begin{align*}
\delta_t \leq 2 \exp\left( L \mathcal{L} \gamma^2\right)  \left( \delta_0 + \frac{\sigma^2 + 2c^2}{2 \mathcal{L}} \right) t^{-\mu \beta \gamma} + \left(\frac{\sigma^2L}{2}+c^2\right) \gamma^2 \varphi_{\mu \beta \gamma/2 - 1}(t) t^{-\mu \beta \gamma/2}
\end{align*}
\end{theorem}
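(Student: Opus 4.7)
The plan is to derive the one-step recursion for $\delta_t$ announced just before the theorem statement, and then unroll it using the standard deterministic sequence arguments used for SGD in \cite{moulines2011non}. The hard part will be balancing constants when the bias is absorbed, and turning the product–sum expression into the explicit form involving $\varphi_\alpha$.

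First I would start from the $L$-smoothness inequality applied to the SCGD update $\theta_{t+1}=\theta_t-\gamma_{t+1}D(\zeta_{t+1})g_t$, giving
\[ f(\theta_{t+1}) \leq f(\theta_t)-\gamma_{t+1}\langle\nabla f(\theta_t),D(\zeta_{t+1})g_t\rangle+\tfrac{L\gamma_{t+1}^2}{2}\|D(\zeta_{t+1})g_t\|_2^2. \]
Taking conditional expectation on $\mathcal F_t$, Assumption \ref{ass:indep_random} lets me factor the cross term as $-\gamma_{t+1}\nabla f(\theta_t)^TD_t\expec[g_t\mid\mathcal F_t]$. I would split $\expec[g_t\mid\mathcal F_t]=\nabla f(\theta_t)+b_t$ with $\|b_t\|_2\leq c h_{t+1}$ (Assumption \ref{cond:biased_grad}), use $D_t\succeq\beta I_p$ so that $\nabla f(\theta_t)^TD_t\nabla f(\theta_t)\geq \beta\|\nabla f(\theta_t)\|_2^2$, and invoke PL (Assumption \ref{ass:pl_ineq}). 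The bias contribution $\gamma_{t+1}\|\nabla f(\theta_t)\|_2\cdot ch_{t+1}$ is controlled by Young's inequality at a scale chosen to leave a clean descent factor of $-2\mu\beta\gamma_{t+1}\delta_t$ (up to the standard absorbing argument). For the quadratic term, $\|D(\zeta_{t+1})g_t\|_2^2=(g_t^{(\zeta_{t+1})})^2\leq\|g_t\|_2^2$, so Assumption \ref{ass:exp_smooth} yields $\expec[\|D(\zeta_{t+1})g_t\|_2^2\mid\mathcal F_t]\leq 2\mathcal L(f(\theta_t)-f(\theta^\star))+\sigma^2$. With $h_{t+1}^2\leq\gamma_{t+1}$, taking expectations and combining pieces produces exactly the announced one-step recursion $\delta_{t+1}\leq(1-2\mu\beta\gamma_{t+1}+L\mathcal L\gamma_{t+1}^2)\delta_t+\gamma_{t+1}^2(\sigma^2 L+2c^2)/2$.

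Next I would unroll. Setting $a_t=1-2\mu\beta\gamma_t+L\mathcal L\gamma_t^2$ and $b_t=\gamma_t^2(\sigma^2 L+2c^2)/2$, iteration yields
\[ \delta_t\leq\Bigl(\prod_{s=1}^t a_s\Bigr)\delta_0+\sum_{s=1}^t b_s\prod_{u=s+1}^t a_u. \]
Using $1+x\leq e^x$, $\prod_{s=k}^t a_s\leq\exp\bigl(-2\mu\beta\sum_{s=k}^t\gamma_s+L\mathcal L\sum_{s=k}^t\gamma_s^2\bigr)$. With $\gamma_t=\gamma t^{-\alpha}$, integral comparison gives $\sum_{s=1}^t\gamma_s^2\leq\gamma^2(1+\varphi_{1-2\alpha}(t))$, while a lower bound on $\sum_{s=\lceil t/2\rceil}^t\gamma_s$ of order $\gamma t^{1-\alpha}$ supplies the driving decay factor $\exp(-\mu\beta\gamma t^{1-\alpha}/4)$ (for $\alpha<1$) or the polynomial factor $t^{-\mu\beta\gamma}$ (for $\alpha=1$, via $\sum\gamma_s=\gamma\log t$). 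The residual sum $\sum b_s\prod_{u>s}a_u$ is then controlled by the classical Moulines–Bach dichotomy: split at $s=\lceil t/2\rceil$, bound the early part by the exponential decay and the late part by the geometric-series trick $\sum_{s>t/2}b_s\prod_u a_u\lesssim b_t/(2\mu\beta\gamma_t)$, giving the $t^{-\alpha}$ residual with constant $\gamma(\sigma^2 L+2c^2)/(\mu\beta)$.

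The main obstacle is the bookkeeping: I must choose the Young parameter so the descent coefficient stays $\geq-\mu\beta\gamma_t$ after absorbing the bias, and justify the telescoping/integral estimates uniformly in $t$ for both ranges of $\alpha$. The $\alpha=1$ case is separately delicate because $\prod a_s$ decays only polynomially, so the residual must be handled by a weighted-sum estimate that produces the $\varphi_{\mu\beta\gamma/2-1}(t)t^{-\mu\beta\gamma/2}$ factor rather than a pure $t^{-1}$ rate; this case follows the same split but with $\log$-type comparisons replacing the integral bounds used for $\alpha<1$.
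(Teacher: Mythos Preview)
Your proposal is correct and follows essentially the same route as the paper: establish the one-step recursion via $L$-smoothness, the bias bound, the growth condition, and PL, then unroll it with the Moulines--Bach estimates (which the paper simply cites without rederiving). The only cosmetic difference is that the paper integrates out $\xi_{t+1}$ first and applies the scalar inequality $2ab\geq a^2-(a-b)^2$ on the selected coordinate $\zeta_{t+1}$ before averaging over it, whereas you factor $\expec[D(\zeta_{t+1})g_t\mid\mathcal F_t]=D_t\,\expec[g_t\mid\mathcal F_t]$ in one shot and control the bias by Young; when you do so, take Young in the $D_t$-weighted inner product (so the sacrificed piece is $\tfrac12\nabla f^\top D_t\nabla f$ rather than $\tfrac12\|\nabla f\|_2^2$), otherwise the Euclidean version would swallow more descent than the $\beta\|\nabla f\|_2^2$ term can afford when $\beta<1/2$.
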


\begin{my_remark}[Importance weights]\label{rh:corollay_weights}
The conclusion of Theorem \ref{th:convergence_as_csgd} remains valid for the update rule $\theta _ {t+1}  = \theta _ {t} -  \gamma_{t+1} W_t D(\zeta_{t+1})  g _t$  where $W_t $ is a diagonal matrix with coefficients $ (w_{t}^{(1)},\ldots,  w_{t}^{(p)})$ such that $\beta_{t+1} = \min_{1 \leq k \leq p}  w_{t}^{(k)}d_{t}^{(k)} $.
\end{my_remark}

\begin{my_remark}[Norms and constants] \label{rem:const}
 A quick inspection of the proof reveals that Assumptions \ref{cond:biased_grad} and \ref{ass:exp_smooth} may be replaced respectively by: $\forall \theta\in \mathbb R^p,h>0$, $\| \mathbb E _\xi [ g_h (\theta, \xi) ] - \nabla f(\theta) \|_\infty\leq c h$ and $\max_{k= 1,\ldots, p} \expec[  g_h^{(k)} (\theta,\xi) ^2] \leq 2 \mathcal{L} \left( f(\theta) - f(\theta^\star)\right) + \sigma^2$. Since $\|\cdot\|_{\infty}\leq \|\cdot\|_2 \leq \sqrt{p} \|\cdot\|_{\infty}$, the above constant scales more efficiently with the dimension.
\end{my_remark}

\begin{my_remark}[Rates] \label{rem:const}
The optimal convergence rate in Theorem \ref{th:non_as_bound} is of order $O(1/t)$, obtained with $\alpha=1$ under the condition $ \mu \beta \gamma>2$. Such rate matches optimal asymptotic minimax rate for stochastic approximation \citep{agarwal2012information} and recovers the rate of \citep{ajalloeian2020analysis} for SGD with biased gradients. 
\end{my_remark}

\section{MUSKETEER Algorithm} \label{sec:main_algo}

This section is dedicated to the algorithm MUSKETEER which performs an adaptive reweighting of the coordinate sampling probabilities to leverage the data structure. Note that this procedure is general and may be applied on top of any stochastic optimization algorithm as soon as one has acces to coordinates of a gradient estimate. In view of Theorem \ref{th:convergence_as_csgd_grad} and Remark \ref{rem:sparsity_func}, the main idea is to rely on a stochastic version of the Gauss-southwell rule where the coordinates of the gradients are only available through random estimates.
The algorithm of interest alternates between two elementary blocks: one for the \textit{exploration} phase and another one for the \textit{exploitation} phase. 

\medskip
\noindent\textbf{Exploration phase.} The goal of this phase is twofold: perform stochastic coordinate gradient descent and collect information about the noisy directions of the gradient. The former task is done using the current coordinate sampling distribution $Q(d_n)$ which is fixed during this phase whereas the latter is computed through cumulative gains.

\medskip
\noindent \textbf{Exploitation phase.} This phase is the cornerstone of the probability updates since it exploits the knowledge of the cumulative gains to update the coordinate sampling probability vector $d_{n}$ in order to sample more often the relevant directions of the optimization problem.

\begin{algorithm}[H]

\caption{MUSKETEER}
\algsetup{linenodelimiter=.}
\begin{algorithmic}[1]
\REQUIRE $\theta_0 \in \rset^p$, \ $N,T \in \nset$, \ $(\gamma_t)_{t \geq 0}, \ (\lambda_n)_{n \geq 0}, \ \eta >0$.
\STATE Initialize probability weights $d_0 = (1/p,\ldots,1/p)$ \ 	 \textcolor{blue}{// start with uniform sampling}
\STATE Initialize cumulative gains $G_0 = (0,\ldots,0)$
\FOR{$n=0,\ldots,N-1$} 
\STATE Initialize current gain $\widetilde{G}_0 = (0,\ldots,0)$
\STATE Run  \textbf{Explore}$(T,d_n)$ \hspace{1.6in} \textcolor{blue}{// to compute current gain $\widetilde{G}_T$}
\STATE Run \textbf{Exploit}$(G_n, \widetilde{G}_T,\lambda_n ,\eta)$ \hspace{1.05in} \ \textcolor{blue}{// to update weights $d_{n+1}$ }
\ENDFOR
\STATE Return final point $\theta_N$
\end{algorithmic}
\label{algo1}
\end{algorithm}

Consider a fixed iteration $n \in \nset$ of MUSKETEER's main loop. The \textit{exploration} phase may be seen as a multi-armed bandit problem \citep{auer2002finite} where the arms are the gradient coordinates for $k\in \llbracket 1,p \rrbracket$. At each time step $t\in \llbracket 1,T \rrbracket$, a coordinate $\zeta$ is drawn according to $Q(d_n)$ and the relative gradient $g_t^{(\zeta)}/d_{n}^{(\zeta)}$, representing the reward, is observed. 
Note that an importance sampling strategy is used to produce an unbiased estimate of the gradient when dealing with first order methods.
The rewards are then used to build cumulative gains $\widetilde{G}_{T}$ which can be written in a vectorized form as an empirical sum of the visited gradients during the \textit{exploration} phase
\begin{align} \label{eq:gains}
\forall k \in \llbracket 1,p \rrbracket, \quad \widetilde{G}_{T}^{(k)} = \frac{1}{T} \sum_{t=1}^{T} \frac{g_t^{(k)}}{d_{n}^{(k)}} \mathds{1}_{\{\zeta_{t+1} = k \}}, \quad i.e. \quad \widetilde{G}_T = \frac{1}{T} \sum_{t=1}^{T} D_n^{-1} D(\zeta_{t+1})g(\theta_t,\xi_{t+1}).
\end{align}
This average reduces the noise induced by the gradient estimates but may be sign-dependent. Thus, one may rely on the following cumulative gains which are also considered in the experiments,
\begin{align} \label{eq:gains_variants}
\widetilde{G}_T = \frac{1}{T} \sum_{t=1}^{T} D_n^{-1} D(\zeta_{t+1}) |g(\theta_t,\xi_{t+1})| \quad \text{or} \quad \widetilde{G}_T = \frac{1}{T} \sum_{t=1}^{T} D_n^{-1} D(\zeta_{t+1})g(\theta_t,\xi_{t+1})^2.
\end{align}

Starting from $G_0=(0,\ldots,0)$, the total gain $G_n$ is updated in a online manner during the \textit{exploitation} phase using the update rule $G_{n+1} = G_{n} + (\widetilde{G}_T-G_n)/(n+1)$. Once the average cumulative gains are computed, one needs to normalize them  to obtain probability weights. Such normalization can be done by a natural $\ell_1$-reweighting or a softmax operator with a parameter $\eta >0$. To cover both cases, consider the normalizing function $\varphi:\rset^p \to \rset^p$ defined by $\varphi(x)^{(k)} = |x^{(k)}|/\sum_{j=1}^p |x^{(j)}|$ or $\varphi(x)^{(k)} = \exp(\eta x^{(k)})/\sum_{j=1}^p \exp(\eta x^{(j)})$. Following the sequential approach of the EXP3 algorithm \citep{auer2002finite,auer2002nonstochastic}, the probability weights are updated through a mixture between the normalized average cumulative gains $\varphi(G_n)$ and a uniform distribution. The former term takes into account the knowledge of the gains by exploiting the rewards while the latter ensures exploration. Given a sequence $(\lambda_n) \in [0,1]^{\nset}$, we have for all $k\in \llbracket 1,p \rrbracket$,
\begin{equation} \label{eq:probas_update}
d_{n+1}^{(k)} = (1-\lambda_n) \varphi(G_n)^{(k)} + \lambda_n \frac{1}{p}\cdot
\end{equation} 

\begin{minipage}[t]{0.52\textwidth}
\vspace*{-0.3in}
\centering
\begin{algorithm}[H]
\caption{\textbf{Explore}$(T,d_n)$}
\algsetup{linenodelimiter=.}
\begin{algorithmic}[1]
\FOR{$t=1,\ldots,T$} 
\STATE Sample coordinate $\zeta \sim Q(d_n)$ and data  $\xi$
\STATE Move iterate: $\theta_{t+1}^{(\zeta)} = \theta_{t}^{(\zeta)} - \gamma_{t+1} g_h^{(\zeta)}(\theta_t,\xi)$
\STATE Update gain $\widetilde{G}_{t+1}^{(\zeta)}$ using \eqref{eq:gains} or \eqref{eq:gains_variants} 
\ENDFOR
\STATE Return vector of gains $\widetilde{G}_{T}$
\end{algorithmic}
\label{alg:explore}
\end{algorithm}
\end{minipage}
\hfill
\begin{minipage}[t]{0.4\textwidth}
\vspace*{-0.3in}
\centering
\begin{algorithm}[H]

\caption{\textbf{Exploit}$(G_n,\widetilde{G}_{T},\lambda_n, \eta)$}
\algsetup{linenodelimiter=.}
\begin{algorithmic}[1]
\STATE Update total average gain $G_n$ in an online manner
\STATE Compute normalized gains $\varphi(G_n)$ with $\ell_1$-weights or softmax
\STATE Update probability weights $d_{n+1}$ with the mixture of Eq.\eqref{eq:probas_update}
\end{algorithmic}
\label{alg:exploit}
\end{algorithm}
\end{minipage}

\medskip
\noindent
In view of Theorem \ref{th:convergence_as_csgd}, the convergence of the sequence of iterates  $(\theta_t)_{t \in \nset}$ obtained by MUSKETEER relies on the extended Robbins-Monro condition $\sum_{t \geq 1} \beta_t \gamma_t = +\infty$ which is implied by the weaker condition $\sum_{t\geq 1} \lambda_t \gamma_t = +\infty$ for both $\ell_1$ and softmax weights. Observe that such a constraint is easily verified with either a fixed value $\lambda_t \equiv \lambda$ in the mixture update or more generally a slowly decreasing sequence, e.g. $\lambda_t = 1/\log(t)$. Since the gradients $\nabla f(\theta_t)$ get smaller through the iterations, the softmax weights get closer to $1/p$. Thus, in the asymptotic regime, there is no favorable directions among all the possible gradient directions. Hence, near the optimum, the \textit{coordinate sampling policy} of MUSKETEER with softmax weights is likely to treat all the coordinates equally. 

\begin{theorem} \label{th:convergence_law} (Weak convergence)
Suppose that Assumptions \ref{cond:biased_grad} to \ref{ass:exp_smooth} are fulfilled and that the learning rates satisfy the standard Robbins-Monro condition. Then  MUSKETEER's coordinate policy $(Q(d_n))_{n \in \nset}$ with softmax normalization converges weakly to the uniform distribution, i.e., $Q(d_n) \leadsto \mathcal{U}(\llbracket 1,p \rrbracket)$ as $n \to +\infty$.
\end{theorem}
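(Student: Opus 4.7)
The target is that the softmax-normalized, mixture-smoothed weight vector $d_n$ converges to $(1/p,\ldots,1/p)$; since this is convergence of a probability vector on the finite set $\llbracket 1,p \rrbracket$, it immediately gives $Q(d_n) \leadsto \mathcal U(\llbracket 1,p\rrbracket)$. The strategy is therefore to reduce weak convergence of $Q(d_n)$ to an almost-sure convergence statement about $G_n$, and then use continuity of the softmax $\varphi$.

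First I would invoke Theorem \ref{th:convergence_as_csgd}: under the stated assumptions, $\theta_t \to \theta^\star$ almost surely. Since $f \in C^2$ with $\nabla f(\theta^\star) = 0$, continuity of $\nabla f$ gives $\nabla f(\theta_t)\to 0$ a.s.\ along the whole trajectory, hence in particular along every exploration window. Combined with Assumption \ref{cond:biased_grad} and $h_t\to 0$, this yields $\expec[g_t\mid \mathcal F_t] = \nabla f(\theta_t) + O(h_{t+1}) \to 0$ a.s.

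Second, I would analyze the running gain. The online update $G_{n+1}=G_n+(\widetilde G_T - G_n)/(n+1)$ means $G_n=\tfrac1n\sum_{i=1}^{n}\widetilde G_T^{(i)}$, where $\widetilde G_T^{(i)}$ is the inverse-propensity gain from the $i$-th exploration phase. Conditioning on the $\sigma$-field $\mathcal G_i$ at the start of phase $i$ and using Assumption \ref{ass:indep_random} together with the fact that $\expec[D_n^{-1}D(\zeta_{t+1})\mid \mathcal F_t]=I_p$, the conditional mean of $\widetilde G_T^{(i)}$ equals $\tfrac1T\sum_{t}\expec[\nabla f(\theta_t)+O(h_{t+1})\mid \mathcal G_i]$, which converges to $0$ almost surely by dominated convergence. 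Decomposing $\widetilde G_T^{(i)} = \expec[\widetilde G_T^{(i)}\mid \mathcal G_i] + M_i$ splits $G_n$ into a deterministic-mean part that vanishes by Cesàro and a martingale-difference part $\tfrac1n\sum M_i$. Under the lower bound $\beta_n\ge\beta>0$ (required to keep the importance weights $1/d_n^{(k)}$ bounded) and Assumption \ref{ass:exp_smooth} applied along the bounded trajectory, each $M_i$ has uniformly bounded conditional second moment, so the martingale strong law gives $\tfrac1n\sum_{i=1}^n M_i \to 0$ a.s. Hence $G_n\to 0$ almost surely.

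Finally, the softmax $\varphi(x)^{(k)}=e^{\eta x^{(k)}}/\sum_j e^{\eta x^{(j)}}$ is continuous on $\rset^p$ with $\varphi(0)=(1/p,\ldots,1/p)$, so $\varphi(G_n)\to (1/p,\ldots,1/p)$ a.s. Since $\lambda_n\in[0,1]$ and the uniform component already equals $(1/p,\ldots,1/p)$, the mixture \eqref{eq:probas_update} gives $d_{n+1}\to(1/p,\ldots,1/p)$ a.s.\ regardless of the behavior of $\lambda_n$. Almost-sure convergence of the parameter of a categorical distribution on a finite set implies convergence in distribution, which is the claim.

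\textbf{Main obstacle.} The delicate step is the a.s.\ convergence $G_n\to 0$: the importance weights $1/d_n^{(k)}$ make $\widetilde G_T^{(i)}$ potentially large, so one must genuinely use $\beta_n\ge\beta>0$ to get a uniform variance bound, and must justify the interchange of conditional expectation and the limit $\theta_t\to\theta^\star$ inside each (bounded-length) exploration window. Everything else is continuity and averaging.
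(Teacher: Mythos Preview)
Your proposal is correct and follows essentially the same route as the paper: invoke Theorem \ref{th:convergence_as_csgd} to get $\theta_t\to\theta^\star$ a.s., decompose $G_n$ into a compensator part (handled by Ces\`aro since $\nabla f(\theta_t)\to 0$) plus martingale increments (handled by the strong law for martingales, with quadratic variation bounded via Assumption \ref{ass:exp_smooth}, boundedness of the trajectory, and the lower bound $d_n^{(k)}\ge \lambda/p$), conclude $G_n\to 0$ a.s., and finish with continuity of the softmax. The only cosmetic difference is that the paper runs the martingale decomposition at the level of individual time steps $t=1,\ldots,nT$ rather than exploration phases $i=1,\ldots,n$, and it leaves the final softmax-continuity step implicit; your handling of the bias term $O(h_{t+1})$ is in fact slightly more careful than the paper's.
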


\begin{my_remark}(On the choice of $\lambda_n$ and $\eta$) The uniform term in Equation \eqref{eq:probas_update} ensures that all coordinates are eventually visited. Taking $\lambda_n \to 0$ at a specific rate (which can be derived from the proof) gives more importance to the cumulative gains. The parameter $\eta$ is fixed during the algorithm and may be tuned through an analysis of the regret \citep{auer2002finite}.
\end{my_remark}

\begin{my_remark}(Choice of Exploration Size $T$) Choosing the value of $T$ is a central question known as the exploration-exploitation dilemma in reinforcement learning. As $T$ gets large, the exploration phase gathers more information leading to fewer but more accurate updates. Conversely, with a small value of $T$, the probabilities get updated more often, at the price of less collected information. Setting $T=p$ ensures that, in average, all the coordinates are visited once during the exploration phase. Nevertheless, a smaller value $T=\lfloor \sqrt{p} \rfloor$ is taken in the experiments and lead to great performance.
\end{my_remark}

\begin{my_remark} (Asymptotic behavior) \label{rem:asymp_conjecture}
The previous results highlight two main features of MUSKETEER: the sequence of iterates converges almost surely and the coordinate policy converges weakly. The latter point suggests that, in the long run, MUSKETEER is similar to the uniform coordinate version of SCGD. However, the weak convergence of the rescaled process $(\theta_t - \theta^\star) /  \sqrt { \gamma_t}$ remains an open question. In light of the link between SCGD and Conditioned-SGD, discussed in Section \ref{sec:policy}, we conjecture that the behavior of MUSKETEER with softmax weights is asymptotically equivalent to SCGD with uniform policy. This is in line with the continuity property  obtained in \citet{leluc2020towards} within the Conditioned-SGD framework and relates to the convergence of stochastic Newton algorithms \citep{boyer2020asymptotic}.
\end{my_remark}

\section{Numerical Experiments} \label{sec:simus}

In this section, we empirically validate the SCGD framework by running MUSKETEER and competitors on synthetic and real datasets. First, we focus on regularized regression problems adopting the data generation process of \citep{namkoong2017adaptive} in which the covariates exhibit a certain block structure. Second, MUSKETEER is employed to  train different neural networks models on real datasets for multi-label classification task. For ease of reproducibility, the code is available online\footnote{https://github.com/RemiLELUC/SCGD-Musketeer}. Technical details and additional results (with different data settings, normalization and hyperparameters) are available in the appendix.

\medskip
\noindent \textbf{Methods in competition.} 
The set of methods is restricted to zeroth-order methods. This choice leads to an honest comparison based on the number of function queries. MUSKETEER is implemented according to Section \ref{sec:main_algo} with $T = \lfloor \sqrt{p} \rfloor $, softmax and $\ell_1$ normalization for the simulated and real data respectively. The different cumulative gains of Eq. \eqref{eq:gains_variants} are considered, namely AVG, SQR and ABS for the gradients, their squares or their absolute value respectively. The method FULL is the finite difference gradient estimate computed over all coordinates and UNIFORM stands for the uniform coordinate sampling policy. NESTEROV implements the gaussian smoothing of \citep{nesterov2017random}. In all cases, the initial parameter is set to $\theta_0 = (0,\ldots, 0)^\top \in \rset^p$ and the optimal SGD learning rate of the form $\gamma_k = \gamma/(k+k_0)$ is used. 

\medskip
\noindent \textbf{Regularized linear models.} We apply the Empirical Risk Minimization paradigm to regularized linear problems. Given a data matrix $X=(x_{i,j}) \in \rset^{n \times p}$, labels $y \in \rset^n$ or $\{-1,+1\}^n$ and a regularization parameter $\mu>0$, the \textit{Ridge regression} objective is defined by 
\begin{align*}
f(\theta)=\frac{1}{2n}   \sum_{i=1}^n (y_i - \sum_{j=1}^p x_{i,j} \theta_j)^2 +  \frac{\mu}{2} \|\theta\|_2^2
\end{align*}
and the $\ell_2$\textit{-regularized logistic regression} is given by
\begin{align*}
f(\theta)= \frac{1}{n}  \sum_{i=1}^n \log(1+\exp(-y_i \sum_{j=1}^p x_{i,j} \theta_j)) + \mu \|\theta\|_2^2.
\end{align*}
Similarly to \citep{namkoong2017adaptive}, we endow the data matrix $X$ with a block structure. The columns are drawn as $X[:,k] \sim \mathcal{N}(0,\sigma_k^2 I_n)$ with $\sigma_k^2 = k^{-\alpha}$ for all $k\in \llbracket 1,p \rrbracket$. The parameters are set to $n=10,000$ samples in dimension $p=250$ with an exploration size equal to $T = \lfloor \sqrt{p} \rfloor = 15$. The regularization parameter is set to the classical value $\mu=1/n$. Figure \ref{fig:zo_losses_linear} provides the graphs of the optimaliy gap $t\mapsto f(\theta_t)-f(\theta^\star)$ averaged over $20$ independent simulations for different values of $\alpha \in \{2;5;10\}$. First, note that the uniform sampling strategy shows similar performance to the classical full gradient estimate. Besides, MUSKETEER with average or absolute gains shows the best performance in all configurations. Greater values of $\alpha$, \textit{i.e.} stronger block structure, improve our relative performance with respect to the other methods as shown by Figures \ref{fig:ridge_zo_10} and \ref{fig:log_zo_10}.  

\begin{figure}[h]
  \centering
  \subfigure[Ridge $\alpha=5$]{
  \includegraphics[scale=0.3]{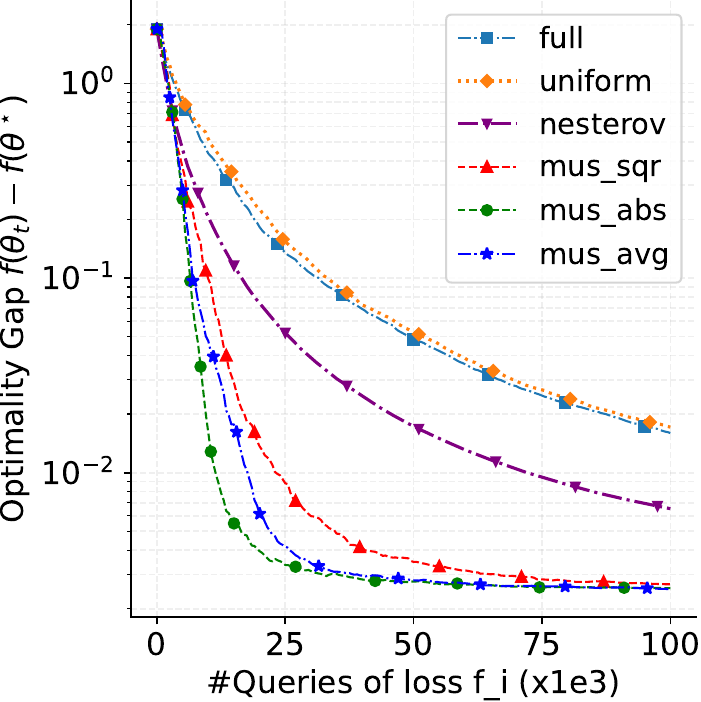}\label{fig:ridge_zo_5}}
  \subfigure[Ridge $\alpha=10$]{
  \includegraphics[scale=0.3]{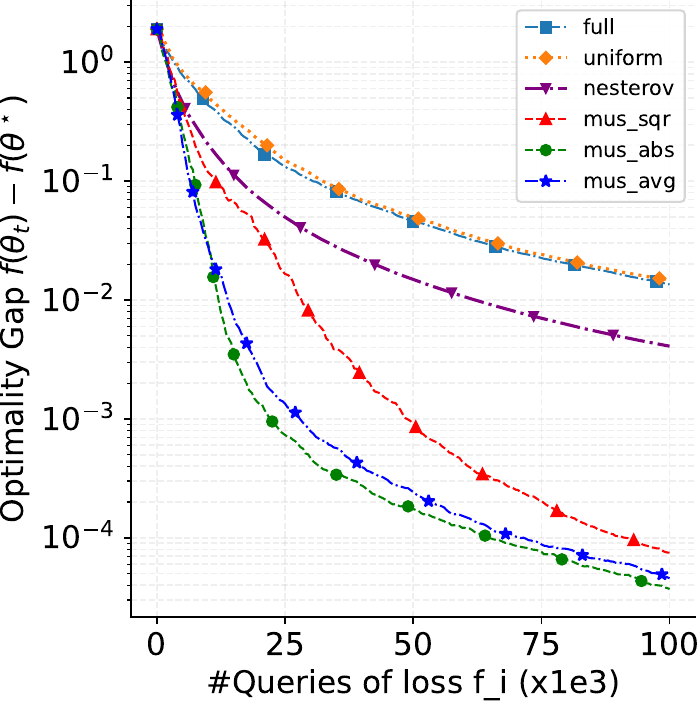}\label{fig:ridge_zo_10}}
  \subfigure[Logistic $\alpha=2$]{
  \includegraphics[scale=0.3]{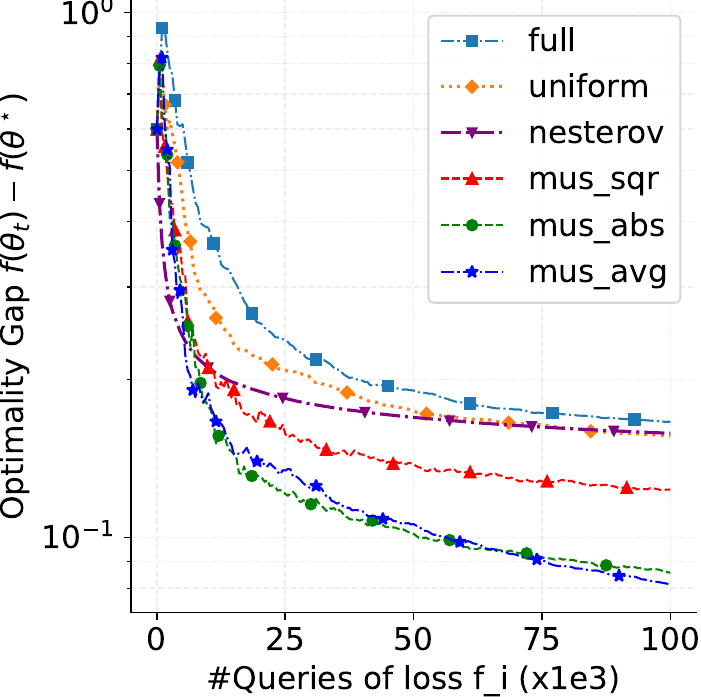}\label{fig:log_zo_5}}
  \subfigure[Logistic $\alpha=5$]{
  \includegraphics[scale=0.3]{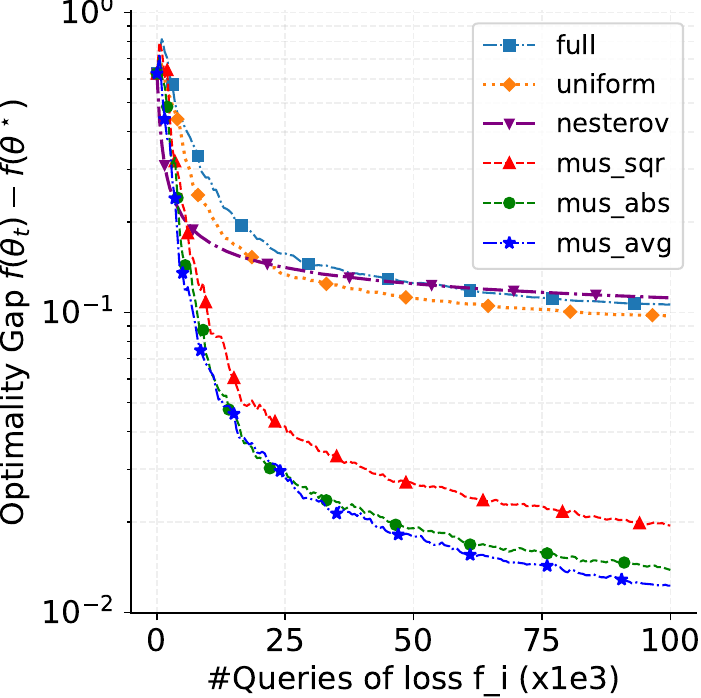}\label{fig:log_zo_10}}
  \caption{$[f(\theta_t)-f(\theta^\star)]$ for Ridge and Logistic on Synthetic data with different block structures.}
\label{fig:zo_losses_linear}
\end{figure}

\noindent
 \textbf{Neural Networks.} We focus on the training of neural networks within the framework of multi-label classification. The datasets in the experiments are popular publicly available deep learning datasets: MNIST \citep{deng2012mnist} and Fashion-MNIST \citep{xiao2017fashion}. Given an image, the goal is to predict its label among ten classes. The neural architecture is based on linear layers in dimension $p=55,050$ with $T  = 234$. Figure \ref{fig:deep_training_loss} shows the means and standard deviations of the training losses of the different ZO methods averaged over $10$ independent runs. Interestingly, the performance of MUSKETEER also benefit from the adaptive structure in terms on accuracy of the test set (see Figures \ref{fig:acc_mnist} and \ref{fig:acc_fash}). This allows to quantify the statistical gain brought by MUSKETEER over standard ZO methods. 

\begin{figure}[h]
\begin{minipage}{0.5\textwidth}
  \centering
  \subfigure[MNIST]{
  \includegraphics[scale=0.28]{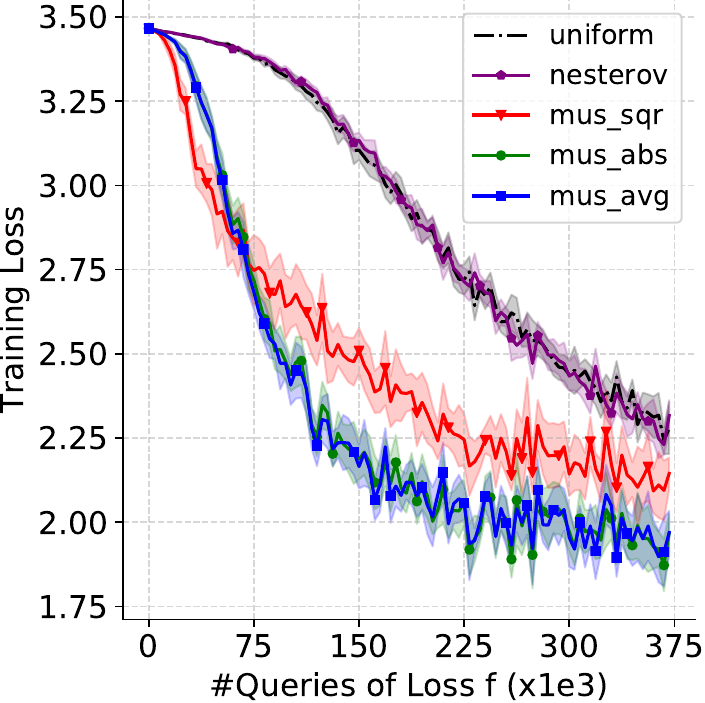}\label{fig:mnist}}
  \subfigure[Fashion-MNIST]{
  \includegraphics[scale=0.28]{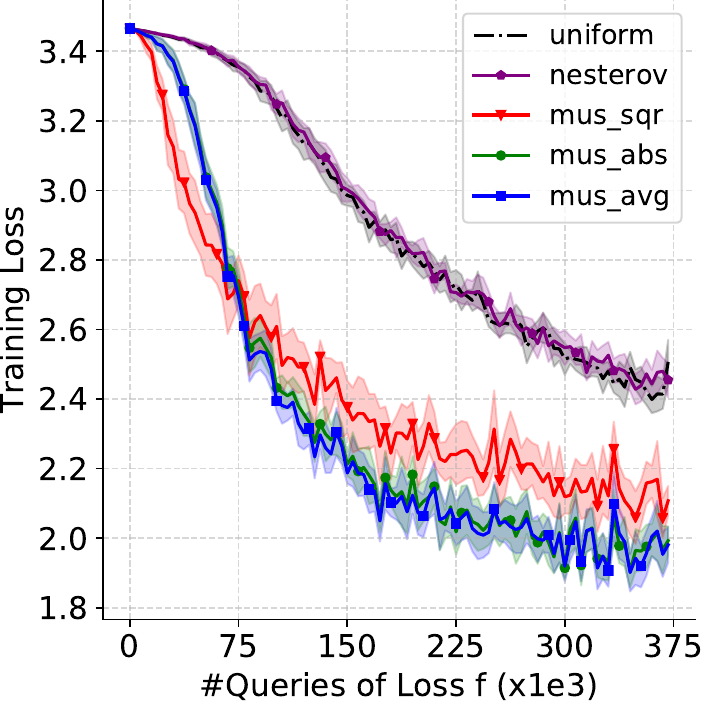}\label{fig:fash}}
    \caption{Evolution of training loss.}
    \label{fig:deep_training_loss}
\end{minipage}
\begin{minipage}{0.5\textwidth}
  \subfigure[MNIST]{
  \includegraphics[scale=0.28]{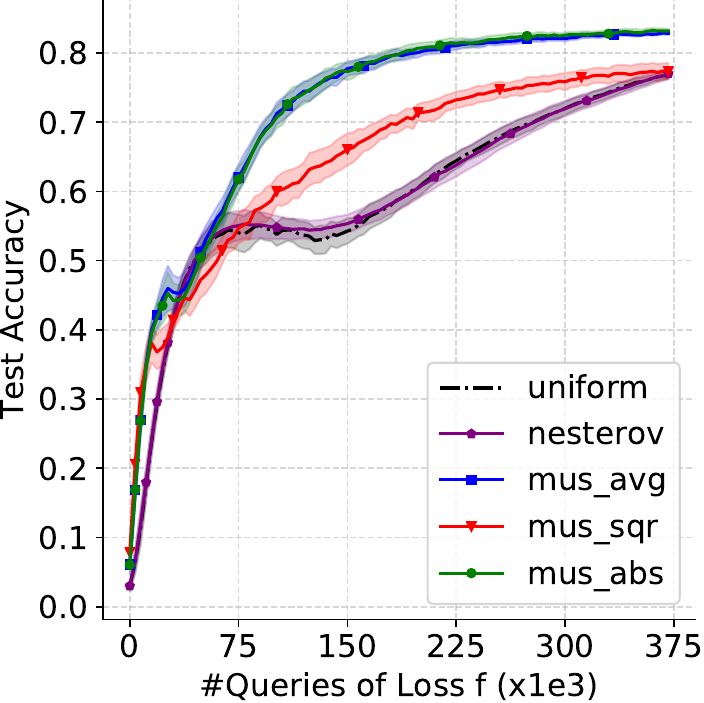}\label{fig:acc_mnist}}
  \subfigure[Fashion-MNIST]{
  \includegraphics[scale=0.28]{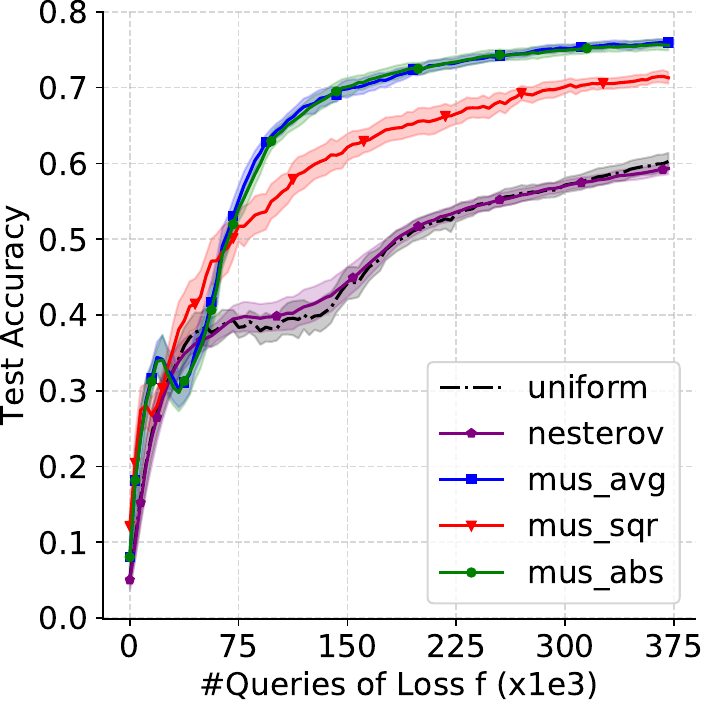}\label{fig:acc_fash}}
  \caption{Evolution of test accuracy.}
\label{fig:deep_acc_test}
\end{minipage}
\end{figure}
\vspace{-0.5cm}
\acks{The authors are grateful to the Associate
Editor and two anonymous Reviewers for their many valuable
comments and interesting suggestions.}

\newpage

\appendix
\addcontentsline{toc}{section}{Appendix} 
\part{Appendices}

\parttoc

\section{Technical Proofs} \label{app:proofs}

\subsection{Proof of Proposition \ref{useful_bound}}
Under Eq.\eqref{cond:grad_NEW2}, using Jensen inequality, we find
\begin{align*}
\| \expec_{\xi}[ g_\mu (\theta,\xi ) ]  - \nabla f (\theta)  \| _2^2 
& =  \left\| \int_{\rset^p}   x  \left(   \frac{f(\theta + \mu x ) - f(\theta )}{\mu}      -    x^\top \nabla f (\theta) \right) \nu (\mathrm{d}x)\right\|_2^2\\
&\leq \int_{\rset^p} \| x\|^2_2 \left(    \frac{f(\theta + \mu x ) - f(\theta )}{\mu}     -    x^\top \nabla f (\theta) \right)^2 \nu (\mathrm{d}x)\\
&= \mu^{-2} \int_{\rset^p} \| x\|^2_2 \left(    {f(\theta + \mu x ) - f(\theta )}    -   \mu x^\top \nabla f (\theta) \right)^2 \nu (\mathrm{d}x)
\end{align*}
Using the quadratic bound of $L$-smooth functions, we obtain
\begin{align*}
\| \expec_{\xi}[ g_\mu (\theta,\xi ) ]  - \nabla f (\theta)  \| _2^2 
& \leq  \mu^{-2} \frac{L^2}{4}  \int \|x\|_2^2 \|\mu x \|_2^4\nu (\mathrm{d}x)    
=  \mu^{2} \frac{L^2}{4}  \int \|x\|_2^6  \nu (\mathrm{d}x).
\end{align*}

\subsection{Deterministic results for convergence of gradients} \label{app:high_level_results}

In this section we provide results ensuring the convergence to $0$ of several gradient descent algorithms. They are meant to be \textit{high-level} as they may be applied in different situations  and \textit{deterministic} because no randomness is measured but only an inclusion of events is considered. The results are key in the proofs.

\begin{lemma}[Deterministic result 1] \label{lemma:key_ps}
Let $f:\rset^p \to \rset$ be a $L$-smooth function, $(\gamma_t)_{t \geq 1}$ a positive sequence of learning rates such that $\sum_t \gamma_t = \infty$. Let $(\theta_t)$ a random sequence obtained by the SGD update rule $\theta_{t+1} = \theta_t - \gamma_{t+1} g_t$. Let $\omega\in \Omega$ such that the following limits exist:
\begin{align*}
&(i) \ \sum_{t\geq 0} \gamma_{t+1} \|\nabla f(\theta_t(\omega)) \|_2^2 < \infty \quad (ii) \ \sum_{t\geq 1}   \gamma _ {t} (g_{t-1 }(\omega) - \nabla f(\theta_{t-1} (\omega)) ) < \infty 
\end{align*}
then  $\nabla f(\theta_t(\omega)) \to 0$ as $t \to \infty$.
\end{lemma}

The next Lemma  is the equivalent of Lemma \ref{lemma:key_ps} for a specific procedure which, at each iteration, moves only one well-chosen coordinate: the one with highest gradient value.

\begin{lemma}[Deterministic result 2]\label{lemma:key_ps2} 
Let $f:\rset^p \to \rset$ be a $L$-smooth function (with respect to $|\cdot | _\infty $), 
$(\gamma_t)_{t \geq 1}$ a positive sequence of learning rates such that $\sum_t \gamma_t = \infty$. Let $(\theta_t)$ a random sequence obtained by the SCGD update rule $\theta_{t+1} = \theta_t - \gamma_{t+1} D (\zeta_{t+1})  g_t$ with $\zeta_{t+1} = \argmax_{k=1,\ldots , p} | \partial_k f(\theta_t)|$. Let $\omega\in \Omega$ such that the following limits exist:
\begin{align*}
&(i) \ \sum_{t\geq 0} \gamma_{t+1}   |  \nabla  f(\theta_t(\omega) |_\infty ^2 < \infty \quad (ii) \ \sum_{t\geq 1}   \gamma _ {t} 
D(\zeta_{t}) ( g_{t-1}(\omega)  -    \nabla  f(\theta_{t-1} (\omega)  ) ) < \infty,
\end{align*}
then  $\nabla f(\theta_t(\omega)) \to 0$ as $t \to \infty$.
\end{lemma}

We conclude with one last result which is valid for procedure where only one coordinate (chosen randomly) is moved at each iteration.

\begin{lemma}[Deterministic result 3] \label{lemma:key_ps3}
Let $f:\rset^p \to \rset$ be a $L$-smooth function, $(\gamma_t)_{t \geq 1}$ a positive sequence of learning rates such that $\sum_t \gamma_t = \infty$. Let $(\theta_t)$ a random sequence obtained by the SCGD update rule $\theta_{t+1} = \theta_t - \gamma_{t+1}  D (\zeta_{t+1}) g_t$ where $\zeta_{t+1} \sim Q ( d_t)$.
Let $\omega\in \Omega$ such that the following limits exist:
\begin{align*}
&(i) \ \sum_{t\geq 0} \gamma_{t+1}   \|\nabla f(\theta_t(\omega)) \|_2^2 < \infty \quad (ii) \ \sum_{t\geq 1}   \gamma _ {t} (D (\zeta_{t}(\omega)) g_{t-1 }(\omega) -D_{t-1} \nabla f(\theta_{t-1} (\omega)) ) < \infty 
\end{align*}
then  $\nabla f(\theta_t(\omega)) \to 0$ as $t \to \infty$.
\end{lemma}

\noindent \textbf{Proof of Lemma \ref{lemma:key_ps}.}
The proof (and in particular the reasoning by contradiction) is inspired from the proof of Proposition 1 in \cite{bertsekas2000gradient}. For ease of notation we omit the $\omega$ in the proof. Note that condition (i) along with $\sum_t \gamma_t = \infty$ implie that $\liminf _{t} \| \nabla f(\theta_t) \| = 0$. Now, by contradiction, let $\varepsilon>0$ and assume that
\begin{align*}
\limsup _{t} \| \nabla f(\theta_t) \| >\varepsilon
\end{align*}
We have that  there is infinitely many $t$ such that $ \| \nabla f(\theta_t) \| < \varepsilon/2$ and also infinitely many $t$ such that $ \| \nabla f(\theta_t) \| >\varepsilon$. It follows that there is infinitely many \textit{crossings} between the sets $\{t \in \nset:  \| \nabla f(\theta_t) \| < \varepsilon/2\}$ and $\{t \in \nset:  \| \nabla f(\theta_t) \| > \varepsilon\}$. A \textit{crossing} is a collection of indexes $ I_k = \{L_k,L_k+1 ,\ldots, U_k - 1 \} $ with $L_k \leq U_k$ ($I_k = \emptyset $ when $L_k = U_k$) such that for all $t\in I_k $, 
$$\| \nabla f(\theta_{L_k-1} ) \| < \varepsilon / 2  \leq  \| \nabla f(\theta_t) \| \leq \varepsilon < \| \nabla f(\theta_{U_k} ) \| .$$
Define the following partial Cauchy sequence $R_k =  \sum_{t = L_k } ^{ U_k}   \gamma _ {t} (g_{t-1 } - \nabla f(\theta_{t-1} ) ) $ and note that condition (ii) implies that $R_k \to 0$ as $k \to \infty$. For all $k\geq 1$,
\begin{align*}
\varepsilon / 2 
&\leq \| \nabla f(\theta_{U_k } )    \|_2  -   \| \nabla f(\theta_{L_k-1} )    \|_2 \\
&\leq \| \nabla f(\theta_{U_k } )  -  \nabla f(\theta_{L_k-1} )    \|_2 \\
&\leq L \| \theta_{U_k}  - \theta_{L_k-1} \|_2,
\end{align*}
where we use that $\nabla f$ is $L$-Lipschitz. Then using the update rule $\theta_{t} - \theta_{t-1} = - \gamma_t g_{t-1}$, we have by sum
\begin{align*}
\varepsilon / 2  
&\leq L  \| \sum_{t = L_k } ^{ U_k}   \theta_{t}  - \theta_{t-1}  \|_2 =L \| \sum_{t = L_k } ^{ U_k}   \gamma _ {t} g_{t-1 }  \|_2\\
& \leq  L \| \sum_{t = L_k } ^{ U_k}   \gamma _ {t} \nabla f(\theta_{t-1} ) \|_2+  L\| \sum_{t = L_k } ^{ U_k}   \gamma _ {t} (g_{t-1 } - \nabla f(\theta_{t-1} ) ) \|_2 \\
&\leq  L  \sum_{t = L_k } ^{ U_k}   \gamma _ {t} \|  \nabla f(\theta_{t-1} ) \|_2+ L \|R_k\|_2 
\end{align*} 
Since in the previous equation $ \|  \nabla f(\theta_{t-1} ) \|_2 > \varepsilon / 2   $, we get
$$ (\varepsilon / 2   )^2 \leq  L  \sum_{t = L_k } ^{ U_k}   \gamma _ {t} \|  \nabla f(\theta_{t-1} ) \|_2^2+ ( \varepsilon /2 ) L \|R_k\|_2 $$
But since $\sum_{t \geq 0 }  \gamma_{t+1} \| \nabla f(\theta_{t}) \|^2 $ is finite and $\lim_k R_k =  0$, the previous upper bound goes to $0$ and implies a contradiction.
\qed

\medskip
\noindent \textbf{Proof of Lemma \ref{lemma:key_ps2}.}
For ease of readability, the variable $\omega$ is removed during the proof. By assumption, $ |\nabla_{\zeta_{t+1}} f(\theta_t)| = | \nabla  f(\theta_t ) |_\infty$. Hence, (i) yields that $\liminf_t   | \nabla f(\theta_{t}) |_\infty =0$. 
The proof is by contradiction. Suppose that  $\limsup_t   | \nabla f(\theta_{t}) |_\infty >\epsilon $. There exists a sequence of \textit{crossings} between the sets $\{t \in \nset:  | \nabla f(\theta_{t}) |_\infty < \varepsilon/2\}$ and $\{t \in \nset:  | \nabla f(\theta_{t}) |_\infty > \varepsilon\}$. Formally, there is a collection of indexes $ I_k = \{L_k,L_k+1 ,\ldots, U_k - 1 \} $ with $L_k \leq U_k$ ($I_k = \emptyset $ when $L_k = U_k$) such that for all $t\in I_k $, 
$$ | \nabla f(\theta_{L_k-1}) |_\infty < \varepsilon / 2  \leq   | \nabla f(\theta_{t}) |_\infty  \leq \varepsilon < | \nabla f(\theta_{U_k}) |_\infty .$$
Define
$$R_ k  = \sum_{t = L_k } ^{U_k}    \gamma _ {t}  D_{\zeta_t}  ( g_{t-1}   -   \nabla  f(\theta_{t-1} ) ) $$
and use that $\nabla f$ is $L$-smooth to get
\begin{align*}
(\varepsilon / 2) &\leq   | \nabla f(\theta_{U_k }) |_\infty -  | \nabla f(\theta_{L_k-1}) |_\infty \\
&\leq  L |\theta_{U_k } -\theta_{L_k-1}  | _ \infty \\
& \leq   L | \sum_{t= L_k } ^{U_k} \gamma_{t} D_{\zeta_t}  \nabla f ( \theta_{t-1})  | _ \infty + L \left| \sum_{t = L_k } ^{U_k}    \gamma _ {t}  
D_{\zeta_{t}} ( g_{t-1}  -    \nabla  f(\theta_{t-1} ) ) \right|_\infty \\
& =   L | \sum_{t= L_k } ^{U_k} \gamma_{t} D_{\zeta_t}  \nabla f ( \theta_{t-1})  | _ \infty +  L | R_ k|_\infty\\
& \leq  L \sum_{t= L_k } ^{U_k} \gamma_{t} |D_{\zeta_t}  \nabla f ( \theta_{t-1})  | _ \infty +  L  | R_ k|_\infty
\end{align*}
Noting that $|D_{\zeta_t}  \nabla f ( \theta_{t-1} )  | _ \infty = |\nabla f ( \theta_{t-1} )  | _ \infty  > \varepsilon /2 $, we get
\begin{align*}
(\varepsilon / 2)^2  & \leq  L  \sum_{t= L_k } ^{U_k} \gamma_{t} | \nabla f ( \theta_{t-1})  |^2 _ \infty +  (\varepsilon /2 ) L| R_ k|_\infty.
\end{align*}
As the previous upper bound converges to $0$ by assumption we reach a contradiction.
\qed

\medskip
\noindent \textbf{Proof of Lemma \ref{lemma:key_ps3}.} Following the proof of Lemma \ref{lemma:key_ps}, we assume that $\limsup_{t} \|\nabla f(\theta_{t} ) \|_2 >\varepsilon$ and consider the same collection of crossing indexes $(L_k,U_k)$ to obtain that
\begin{align*}
\varepsilon / 2  
&\leq   L  \sum_{t = L_k } ^{ U_k}   \gamma _ {t} \| D _{t-1} \nabla f(\theta_{t-1} ) \|_2+ L \|R_k\|_2 
\end{align*} 
where $ R_k =  \sum_{t = L_k } ^{ U_k}   \gamma _ {t} (  D  ( \zeta_t ) g_{t-1 } -  D _{t-1} \nabla f(\theta_{t-1} ) ) $ is a sequence that goes to $0$.
Since in the previous equation $ D _{t-1} \preceq I_d$  and $ \|  \nabla f(\theta_{t-1} ) \|_2 > \varepsilon / 2   $, we get
$$ (\varepsilon / 2   )^2 \leq  L  \sum_{t = L_k } ^{ U_k}   \gamma _ {t} \|  \nabla f(\theta_{t-1} ) \|_2^2+ ( \varepsilon /2 ) L \|R_k\|_2 $$
and a contradiction follows as the above term goes to $0$. 
\qed

\subsection{Proof of Theorem \ref{th:convergence_as_sgd}}

The proof follows from applying Lemma \ref{lemma:key_ps} in which two conditions are required:
\begin{align*}
&(i) \ \sum_{t\geq 0} \gamma_{t+1} \|\nabla f(\theta_t(\omega)) \|_2^2 < \infty \quad (ii) \ \sum_{t\geq 1}   \gamma _ {t} (g_{t-1 }(\omega) - \nabla f(\theta_{t-1} (\omega)) ) < \infty.
\end{align*}
\medskip
\noindent
\textbf{Proof of condition (i).} We classically rely on the Robbins-Siegmund Theorem (Theorem \ref{th:rs} in Section \ref{sec:rs_th}). Since $\theta \mapsto f(\theta)$ is $L$-smooth, we have the quadratic bound $f(\eta) \leq f(\theta) + \langle \nabla f(\theta),\eta-\theta \rangle + \frac{L}{2} \|\eta-\theta\|_2^2$. Using the update rule $\theta_{t+1} = \theta_{t} - \gamma_{t+1} g_t$, we get
\begin{align*}
f(\theta_{t+1}) 
&\leq f(\theta_{t}) + \langle \nabla f(\theta_{t}),\theta_{t+1}-\theta_{t} \rangle + \frac{L}{2}\|\theta_{t+1}-\theta_{t}\|_2^2 \\
&= f(\theta_{t}) - \gamma_{t+1} \langle \nabla f(\theta_{t}),  g_t \rangle + \frac{L}{2} \gamma_{t+1}^2\|  g_t\|_2^2.
\end{align*}
Using that 
\begin{align*}
2\langle a ,  b \rangle = \| a\|_2 ^2 + \| b\|_2 ^2 - \| a-b\|_2 ^2 &\geq \| a\|_2 ^2  - \| a-b\|_2 ^2
\end{align*}
and taking the conditional expectation, we get
\begin{align*}
\expec_t\left[f(\theta_{t+1})\right] 
&\leq f(\theta_{t})  - \gamma_{t+1}\langle \nabla f(\theta_{t}), \expec_t[  g_t] \rangle + \frac{L}{2} \gamma_{t+1}^2\expec_t[\|  g_t\|_2^2]
\\
& \leq f(\theta_{t})  - \frac{\gamma_{t+1}}{2} \|  \nabla f(\theta_{t})\| _2^2 + \frac{\gamma_{t+1}}{2} \|  \nabla f(\theta_{t}) - \expec_t[  g_t] \| _2^2  + \frac{L}{2} \gamma_{t+1}^2\expec_t[\|  g_t\|_2^2]
\end{align*}
On the one hand, using Assumption \ref{cond:biased_grad}, we obtain
\begin{align*}
\|  \nabla f(\theta_{t}) - \expec_t[  g_t] \| _2^2 &\leq h_{t+1}^{2} c^2
\end{align*}
On the other hand, using Assumption \ref{ass:exp_smooth}, there exist $0 \leq \mathcal{L},\sigma^2 <\infty$ such that almost surely
\begin{align*}
\forall t \in \nset,  \quad \expec_t\left[\|g_t\|_2^2\right]  =\expec_\xi \left[\| g(\theta_t, \xi) \|_2^2 \right] \leq 2 \mathcal{L} \left( f(\theta_{t}) - f^{\star}\right) + \sigma^2.
\end{align*}
Injecting $-f^\star $ on both sides, it follows that
\begin{align*}
\expec_t\left[f(\theta_{t+1})- f ^\star\right]  
\leq (1 + L \mathcal {L}  \gamma_{t+1}^2) ( f(\theta_{t}) -f^\star) - \frac{\gamma_{t+1}}{2}\|  \nabla f(\theta_{t})\| _2^2 + 
\gamma_{t+1}  h_{t+1} ^2 c^2 + \frac{L}{2} \gamma_{t+1}^2 \sigma^2
\end{align*}
Introduce $V_t = f(\theta_{t})-f^{\star}, W_t = \gamma_{t+1}  \|\nabla f(\theta_{t})\|_2^2 /2 $, $a_t = L \mathcal {L}  \gamma_{t+1}^2$  and $b_t =  c^2 h_{t+1} ^2  \gamma_{t+1} + (L/2) \gamma_{t+1} ^2 \sigma^2 $. These four random sequences are non-negative $\mcf_t$-measurable sequences with $\sum_t a_t <\infty$ and $\sum_t b_t <\infty$ almost surely. We have: $\forall t \in \nset,\expec\left[V_{t+1}|\mcf_t\right] \leq (1+a_t) V_{t} - W_t + b_t.$ We can apply Robbins-Siegmund Theorem  to have
\begin{align*}
(a) \  \sum_{t \geq 0} W_t <\infty  \ a.s. \qquad (b) \  V_{t} \stackrel{a.s.}{\longrightarrow} V_{\infty}, \expec\left[V_{\infty}\right]<\infty. \qquad (c) \ \sup _{t \geq 0} \expec\left[V_{t}\right]<\infty.
\end{align*}
Therefore we have a.s. that $(f(\theta_{t}))$ converges to a finite value $f_{\infty} \in L^1$ and $\sum_{t \geq 0} \gamma_{t+1} \|\nabla f(\theta_{t})\|_2^2 <+\infty$. There exists an event $\Omega_0 \subset \Omega$ such that, $\prob(\Omega_0)=1$ and for every $\omega \in \Omega_0$, $\lim_{t} f(\theta_t(\omega)) < \infty $ and
$ \sum_{t \geq 0} \gamma_{t+1} \|\nabla f(\theta_t(\omega)) \|_2^2 < \infty$.

\noindent
\textbf{Proof of condition (ii).}
We place ourselves on the event $\Omega_0$ and omit the $\omega$ in notation for ease of clarity. First, since $ \limsup_t f(\theta_{t}) < \infty$, we have that $(f(\theta_t))$ is bounded almost surely. It yields, in virtue of  Assumption \ref{ass:exp_smooth} that $ \expec_t\left[\|g_t\|_2^2\right]  \leq 2 \mathcal{L} \left( f(\theta_{t}) - f^{\star}\right) + \sigma^2 \leq C $ where $C$ is a some finite random variable and the latter holds almost surely. It then follows that, almost surely $\sum_{t\geq 1} \gamma_t^2 \mathbb E _t [ \|g_t\|^2] \leq  C  \sum_{t\geq 1} \gamma_t^2 < \infty$. Now, observe that condition (ii) is satisfied as soon as 
\begin{align*}
(a) \ \| \sum_{t\geq 0}  \gamma_{t+1} (g_t - \mathbb E _ t [g_t] ) \|_2  <\infty \quad \text{and} \quad (b)\ \| \sum_{t\geq 0}  \gamma_{t+1} ( \mathbb E _ t [g_t]  -\nabla f(\theta_t)  ) \|_2  <\infty.
\end{align*}
Equation (a) involves martingale increments whose quadratic variation satisfies
\begin{align*}
\sum_{t\geq 0} \gamma_{t+1}^2 \mathbb E _t [ \|g_t - \mathbb E_ t [ g_t]\|^2]  
&\leq  \sum_{t\geq 0} \gamma_{t+1}^2 \mathbb E _t [ \|g_t\|^2] < \infty,
\end{align*}
which ensures that $\sum_{t\geq 0} \gamma_{t+1} ( g_t - \mathbb E_ t [ g_t] ) <\infty $ a.s. in virtue of Theorem 2.17 in \cite{hall1980martingale}. The term in equation (b) is bounded using assumption \ref{cond:biased_grad} and we have
\begin{align*}
\sum_{t\geq 0}  \gamma_{t+1}^2 \| \mathbb E _ t [g_t]  -\nabla f(\theta_t)  \|_2^2 \leq c^2 \sum_{t\geq 0}  \gamma_{t+1}^2 h_t^2 <\infty,
\end{align*}
which finally proves
\begin{align*}
(ii) \ \sum_{t\geq 0}   \gamma _ {t+1} (g_{t}(\omega) - \nabla f(\theta_{t} (\omega)) ) < \infty 
\end{align*}
and gives, in virtue of Lemma \ref{lemma:key_ps} the conclusion $\nabla f(\theta_t) \to 0$ almost surely as $t \to +\infty$.

\subsection{Proof of Theorem \ref{th:convergence_as_csgd_grad}}

\noindent
\textit{Part (a) Maximum gradient.} The proof follows from applying Lemma \ref{lemma:key_ps2} in which two conditions are required:
\begin{align*}
&(i) \ \sum_{t\geq 0} \gamma_{t+1}   |  \nabla  f(\theta_t(\omega) |_\infty ^2 < \infty \quad (ii) \ \sum_{t\geq 0}   \gamma _ {t+1} 
D(\zeta_{t+1}) ( g_{t}(\omega)  -    \nabla  f(\theta_{t} (\omega)  ) ) < \infty.
\end{align*}
\noindent
\textbf{Proof of condition (i).} Again, we rely on the quadratic bound
\begin{align*}
f(\theta_{t+1}) 
&\leq  f(\theta_{t}) - \gamma_{t+1} \langle \nabla f(\theta_{t}),  D(\zeta_{t+1})  g_t \rangle + \frac{L}{2} \gamma_{t+1}^2\| D(\zeta_{t+1})  g_t\|_2^2\\
& =  f(\theta_{t}) - \gamma_{t+1}  \nabla_{\zeta_{t+1}} f(\theta_{t})   g_t^{(\zeta_{t+1})}  + \frac{L}{2} \gamma_{t+1}^2   g_t^{(\zeta_{t+1})2}
\end{align*}
Taking the expectation with respect to $\xi_{t+1}$  and using Assumption \ref{ass:indep_random}, we find
\begin{align*}
\mathbb E_ { \xi_{t+1}} [ f(\theta_{t+1}) -  f^\star ]  
& \leq  f(\theta_{t}) - f^\star - \gamma_{t+1}  \nabla_{\zeta_{t+1}} f(\theta_{t})   \tilde g_t   ^{(\zeta_{t+1})}  + \frac{L}{2} \gamma_{t+1}^2  \mathbb E_ { \xi_{t+1}} [ g_t^{(\zeta_{t+1})2}] 
\end{align*}
where $\tilde g_t =  E_ { \xi} [g _{h_{t+1}} ( \theta_t, \xi) ] $.
We use the inequality $2ab \geq a^2  - (a-b)^2 $ and Assumption \ref{cond:biased_grad} to get
\begin{align*}
2 \nabla_{\zeta_{t+1}} f(\theta_{t})  \tilde  g_t^{(\zeta_{t+1})} 
&\geq \nabla_{\zeta_{t+1}} f(\theta_{t}) ^2- ( \nabla_{\zeta_{t+1}} f(\theta_{t})  -    \tilde g_t^{(\zeta_{t+1})}  )^2\\
&\geq \nabla_{\zeta_{t+1}} f(\theta_{t}) ^2- \max_{k=1,\ldots, p}  ( \partial_{k}  f(\theta_{t})  -    \tilde g_t^{(k)}  )^2\\
& \geq \nabla_{\zeta_{t+1}} f(\theta_{t}) ^2-c^2 h_{t+1} ^2 
\end{align*}
We also have, invoking Assumption \ref{ass:exp_smooth}, that  
$$ \mathbb E_ { \xi_{t+1}} [ g_t^{(\zeta_{t+1})2}]  \leq \max_{k=1,\ldots, p} \mathbb E_ { \xi_{t+1}} [ g_t^{(k)2}] \leq 2 \mathcal L ( f(\theta_{t}) - f^\star ) +  \sigma^2. $$ 
We finally obtain that
\begin{align*}
&\mathbb E_ { \xi_{t+1}} [ f(\theta_{t+1}) -  f^\star ]  \\
& \leq (1 + L \mathcal L \gamma_{t+1}^2)  ( f(\theta_{t}) - f^\star) - \gamma_{t+1}  \nabla_{\zeta_{t+1}} f(\theta_{t}) ^2 /2   +   c^2 \gamma_{t+1}  h_{t+1} ^2 /2+ 
\frac{L}{2} \gamma_{t+1}^2   \sigma^2.
\end{align*}
Apply Robbins-Siegmund Theorem to obtain that almost surely
\begin{align*}
\sum_{t\geq 0} \gamma_{t+1}  \nabla_{\zeta_{t+1}} f(\theta_{t}) ^2 = \sum_{t\geq 0}  \gamma_{t+1}  \| \nabla f(\theta_{t}) \|_\infty ^2 <\infty.
\end{align*}

\noindent
\textbf{Proof of condition (ii).}  Note that from the proof of Theorem 1, we already have $\sum_{t\geq 0}   \gamma _ {t+1} (g_{t}(\omega) -\nabla f(\theta_{t} (\omega)) ) < \infty$, so using that $$\|D(\zeta_{t+1})(g_{t}(\omega) -\nabla f(\theta_{t} (\omega)) )\|_2 \leq \|(g_{t}(\omega) -\nabla f(\theta_{t} (\omega)) )\|_2,$$ we deduce the convergence $\sum_{t\geq 0}   \gamma _ {t+1} D(\zeta_{t+1})(g_{t}(\omega) -\nabla f(\theta_{t} (\omega)) ) < \infty$ which gives, in virtue of Lemma \ref{lemma:key_ps2} the result $\nabla f(\theta_t) \to 0$ almost surely as $t \to +\infty$.
\qed

\medskip
\noindent
\textit{Part (b) gradient weights.} Here we assume that the weights of the coordinate sampling policy are proportional to any norm of the current gradient: $D_t \propto (|\partial_k f(\theta_t)|^q)_{1 \leq k \leq p}$ with $q > 0$. As before, the proof follows from applying Lemma \ref{lemma:key_ps2}. The proof of condition (i) relies on the equivalence of the norms in finite dimension.

\medskip
\noindent
\textbf{Proof of condition (i).} From the proof of Theorem \ref{th:convergence_as_csgd_grad}, we get
\begin{align*}
&\mathbb E_ { \xi_{t+1}} [ f(\theta_{t+1}) -  f^\star ]  \\
& \leq (1 + L \mathcal L \gamma_{t+1}^2)  ( f(\theta_{t}) - f^\star) - \gamma_{t+1}  \nabla_{\zeta_{t+1}} f(\theta_{t}) ^2 /2   +   c^2 \gamma_{t+1}  h_{t+1} ^2 /2+ 
\frac{L}{2} \gamma_{t+1}^2   \sigma^2.
\end{align*}
Taking the expectation with respect to $\zeta_{t+1}$, we get
\begin{align*}
&\mathbb E_ { {t}} [ f(\theta_{t+1}) -  f^\star ]\\
 & \leq (1 + L \mathcal L \gamma_{t+1}^2) (  f(\theta_{t}) - f^\star) - \gamma_{t+1} \sum_{k=1} ^p  d_{t,k} \partial_{k} f(\theta_{t}) ^2 /2   +   c^2 \gamma_{t+1}  h_{t+1} ^2 /2+ 
\frac{L}{2} \gamma_{t+1}^2   \sigma^2.
\end{align*}
Apply Robbins-Siegmund Theorem to obtain $\sum_{t \geq 0} \gamma_{t+1} \nabla f(\theta_t)^\top D_t \nabla f(\theta_t) <  \infty$ almost surely. Now observe that
since $D_t \propto (|\partial_k f(\theta_t)|^q)_{1 \leq k \leq p}$, it means that for all $k=1,\ldots,p$ we have $d_{t,k} \propto |\partial_k f(\theta_t)|^q/ \|\nabla f(\theta_t)\|_q^q$ and
\begin{align*}
\nabla f(\theta_t)^\top D_t \nabla f(\theta_t) 
= \sum_{k=1}^p d_{t,k} \partial_k f(\theta_t)^2 
\propto \sum_{k=1}^p \frac{|\partial_k f(\theta_t)|^q}{\|\nabla f(\theta_t)\|_q^q} \partial_k f(\theta_t)^2 
\propto \frac{\|\nabla f(\theta_t)\|_{q+2}^{q+2}}{\|\nabla f(\theta_t)\|_q^q}.
\end{align*}
All norms are equivalent on $\rset^p$ and using Hölder's inequality we have for $0 < l < q$ that $\|\cdot \|_l \leq p^{1/l - 1/q} \|\cdot\|_q$ so the last term is lower bounded as 
\begin{align*}
\frac{\|\nabla f(\theta_t)\|_{q+2}^{q+2}}{\|\nabla f(\theta_t)\|_q^q}
\geq C \|\nabla f(\theta_t)\|_{q+2}^{2} \quad \text{with } C = p^{-2/(q+2)},
\end{align*}
and again using the equivalence of the norms we get the square of the infinity norm  $\nabla f(\theta_t)^\top D_t \nabla f(\theta_t)  \propto \|\nabla f(\theta_t)\|_{\infty}^2$ which finally proves
\begin{align*}
(i) \ \sum_{t\geq 0}  \gamma_{t+1}  \| \nabla f(\theta_{t}) \|_\infty ^2 <\infty.
\end{align*}

\medskip
\noindent
\textbf{Proof of condition (ii).} It is the same as for \textit{Part (a) maximum gradient}. We deduce the convergence $\sum_{t\geq 0}   \gamma _ {t+1} D(\zeta_{t+1})(g_{t}(\omega) -\nabla f(\theta_{t} (\omega)) ) < \infty$ which gives, in virtue of Lemma \ref{lemma:key_ps2} the result $\nabla f(\theta_t) \to 0$ almost surely as $t \to +\infty$. \qed

\subsection{Proof of Theorem \ref{th:convergence_as_csgd}}

Similarly to the proof of Theorem \ref{th:convergence_as_sgd}, we rely on Lemma \ref{lemma:key_ps3} where $g_{t-1}$ is replaced by $D(\zeta_{t}) g_{t-1}$. Therefore we need to check that, with probability $1$, it holds that 
\begin{align*}
&(i) \ \sum_{t\geq 0} \gamma_{t+1} \| \nabla f(\theta_t(\omega)) \|_2^2 < \infty \quad (ii) \ \sum_{t\geq 0}   \gamma _ {t+1} (D(\zeta_t) g_{t}(\omega) - D_t \nabla f(\theta_{t} (\omega)) ) < \infty.
\end{align*}

\noindent
\textbf{Proof of condition (i).} From the proof of Theorem \ref{th:convergence_as_csgd_grad}, we get
\begin{align*}
&\mathbb E_ { \xi_{t+1}} [ f(\theta_{t+1}) -  f^\star ]  \\
& \leq (1 + L \mathcal L \gamma_{t+1}^2)  ( f(\theta_{t}) - f^\star) - \gamma_{t+1}  \nabla_{\zeta_{t+1}} f(\theta_{t}) ^2 /2   +   c^2 \gamma_{t+1}  h_{t+1} ^2 /2+ 
\frac{L}{2} \gamma_{t+1}^2   \sigma^2.
\end{align*}
Taking the expectation with respect to $\zeta_{t+1}$ and using that $\min _{k=1,\ldots, d} d_{t,k} \geq \beta$ gives
\begin{align*}
&\mathbb E_ { {t}} [ f(\theta_{t+1}) -  f^\star ]\\
 & \leq (1 + L \mathcal L \gamma_{t+1}^2) (  f(\theta_{t}) - f^\star) - \gamma_{t+1} \sum_{k=1} ^p  d_{t,k} \partial_{k} f(\theta_{t}) ^2 /2   +   c^2 \gamma_{t+1}  h_{t+1} ^2 /2+ 
\frac{L}{2} \gamma_{t+1}^2   \sigma^2 \\
&\leq (1 + L \mathcal L \gamma_{t+1}^2) (  f(\theta_{t}) - f^\star) - \gamma_{t+1} \beta \| \nabla f(\theta_{t}) \|_2^2 /2   +   c^2 \gamma_{t+1}  h_{t+1} ^2 /2+ 
\frac{L}{2} \gamma_{t+1}^2   \sigma^2, 
\end{align*}
and Robbins-Siegmund Theorem allows to conclude $\sum_{t \geq 0} \gamma_{t+1} \|  \nabla f(\theta_{t})\|_2^2 <+\infty$.

\medskip
\noindent
\textbf{Proof of condition (ii).} Again, we place ourselves on the event $\Omega_0$ and omit the $\omega$ in notation for ease of clarity. First, note that $\|   g_t\|_2^2 \leq g_t^{(\zeta_{t+1})2} \leq \| g_t\|^2_2 $. As a consequence, $\sum_{t\geq 0} \gamma_{t+1}^2 \expec_t[\| D(\zeta_{t+1})  g_t\|_2^2] \leq  \sum_{t\geq 0} \gamma_{t+1}^2 \expec_t[\| g_t\|^2_2]$ and this last series converges as shown in the proof of Theorem \ref{th:convergence_as_sgd}. Now observe that condition (ii) is satisfied as soon as 
\begin{align*}
&(a) \ \| \sum_{t\geq 0}  \gamma_{t+1}  (  D(\zeta_{t+1})  g_t -  \mathbb E _ t [  D(\zeta_{t+1})  g_t] ) \|_2  <\infty \\
&(b)\ \| \sum_{t\geq 0}  \gamma_{t+1} ( \mathbb E _ t  [  D(\zeta_{t+1})  g_t]  -D_t\nabla f(\theta_t)  ) \|_2  <\infty
\end{align*}
Note that equation (a) involves martingale increments whose quadratic variation satisfies
\begin{align*}
\sum_{t\geq 0} \gamma_{t+1}^2 \mathbb E _t [ \| D(\zeta_{t+1})  g_t -  \mathbb E _ t [  D(\zeta_{t+1})  g_t] \|_2^2]  
&\leq  \sum_{t\geq 0} \gamma_{t+1}^2 \mathbb E _t [ \|D(\zeta_{t+1}) g_t\|^2] < \infty,
\end{align*}
which proves Equation (a). Finally the term in equation (b) is bounded using assumption \ref{cond:biased_grad} and $\|D_t\|_2 \leq 1$. We have $\sum_{t\geq 0}  \gamma_{t+1}^2 \| \expec_t[D(\zeta_{t+1})  g_t]  -D_t\nabla f(\theta_t) \|_2^2 \leq c^2 \sum_{t\geq 0}  \gamma_{t+1}^2 h_t^2 <\infty$ which finally proves condition (ii) and gives, in virtue of Lemma \ref{lemma:key_ps3} that $\nabla f(\theta_t) \to 0$ almost surely as $t \to +\infty$.

\subsection{Proof of Theorem \ref{th:non_as_bound}} \label{sec:non_as}

From the proof of Theorem \ref{th:convergence_as_csgd} and using $\beta$ as a uniform lower bound on $\beta_{t+1}$, we have
\begin{align*}
\expec_t\left[f(\theta_{t+1})-f^\star\right] \leq \left(1 +  L \mathcal{L} \gamma_{t+1}^2 \right)\left[f(\theta_{t})-f^\star\right] - \gamma_{t+1} \beta \| \nabla f(\theta_{t})\|_2^2 +  \frac{\sigma^2 L + c^2}{2} \gamma_{t+1}^2.
\end{align*}
Inject the PL inequality $\| \nabla f(\theta_{t})\|_2^2\geq 2\mu (f(\theta_t) - f(\theta^*) )$ from Assumption \ref{ass:pl_ineq} to have
\begin{align*}
\expec_t\left[f(\theta_{t+1})-f^\star\right] \leq \left(1  -  2\mu \beta \gamma_{t+1} +   L \mathcal{L} \gamma_{t+1}^2 \right)\left[f(\theta_{t})-f^\star\right]  +  \frac{\sigma^2 L + c^2}{2} \gamma_{t+1}^2.
\end{align*}
Define $\delta_t = \expec\left [ f(\theta_t) - f^\star \right]$ to finally obtain the recursion equation
\begin{align*}
\delta_t \leq \left( 1 - 2\mu \beta \gamma_t + L \mathcal{L} \gamma_t^2 \right) \delta_{t-1} + \frac{\sigma^2 L + c^2}{2} \gamma_{t}^2 
\end{align*}
Applying the same result from \citep{moulines2011non} with the family of functions $\varphi_\alpha$ defined by $\varphi_\alpha(t) = \alpha^{-1}(t^\alpha - 1)$ if $\alpha \neq 0$ and $\varphi_\alpha(t) =\log(t)$ if $\alpha=0$ along with the learning rates $\gamma_t = \gamma t^{-\alpha}$. 
\begin{align*}
\delta_t \leq
\left\{\begin{array}{ll}
2 \exp\left( 2 L \mathcal{L} \gamma^2 \varphi_{1-2\alpha}(t)\right) \exp\left(-\frac{\mu \beta \gamma}{4} t^{1-\alpha}\right) \left( \delta_0 + \frac{\sigma^2+2c^2}{2 \mathcal{L}} \right) + \frac{\gamma (\sigma^2 L + 2c^2) }{\mu \beta} t^{-\alpha}  &\text { if } \alpha < 1 \\
2 \exp\left( L \mathcal{L} \gamma^2\right)  \left( \delta_0 + \frac{\sigma^2 + 2c^2}{2 \mathcal{L}} \right) t^{-\mu \beta \gamma} + \left(\frac{\sigma^2L}{2}+c^2\right) \gamma^2 \varphi_{\mu \beta \gamma/2 - 1}(t) t^{-\mu \beta \gamma/2} &\text{ if } \alpha=1 
\end{array}\right.
\end{align*}

\subsection{Proof of Theorem \ref{th:convergence_law}}

Starting from $G_0=(0,\ldots,0)$, the total average gain $G_n$ is updated in a online manner during the exploitation phase and collects all the empirical sums of the gradient gradient estimates as
\begin{equation*} 
G_n =  \frac{1}{nT}\sum_{t=1}^{nT} D_t^{-1} D(\zeta_{t+1})g(\theta_t,\xi_{t+1}), \qquad \expec\left[G_n \right ] =  \frac{1}{nT}\sum_{t=1}^{nT} \nabla f(\theta_t).
\end{equation*} 
The goal is to show that $G_n \to 0$ using martingale properties. Thanks to Theorem \ref{th:convergence_as_csgd}, we have the almost sure convergence $\theta_t \to \theta^*$ which gives, since $\theta \mapsto \nabla f(\theta)$ is continuous, that $\nabla f(\theta_t) \to 0$ almost surely. Applying Cesaro's Lemma, it holds that $\expec\left[G_n \right ] \to 0$. It is enough to consider the difference $\left(G_n^{(k)}  - \expec\left[G_n^{(k)}  \right ] \right)$ for each $k\in \llbracket 1,p \rrbracket$. Introducing the martingale increments
\begin{align*}
\Delta_ {t+1}^{(k)}  =  \frac{   g (\theta_t , \xi_{t+1} )^{(k)} }{d_t^{(k)}} \mathds 1 _{\{\zeta_{t+1} = k\} }   -   \partial _ k  f(\theta_t) , \qquad \expec\left[\Delta_ {t+1}^{(k)}   | \mcf_t \right]=0.
\end{align*}
It remains to show that, with probability $1$,
\begin{align*}
G_n^{(k)}  - \expec\left[G_n ^{(k)}  \right ] =  \frac{1}{nT} \sum_{t=1}^{nT}  \Delta_ {t+1} ^{(k)}  \to 0 .
\end{align*}
Or equivalently, that, for each coordinate $k\in \llbracket 1,p \rrbracket$
\begin{align}\label{eq_final}
\sum_{t=1}^{nT}  \Delta_ {t+1}^{(k)}  = o (n)  .
\end{align}
The latter being a sum of martingale increments, we are in position to apply the strong law of large numbers for martingales which can be find as Assertion 2 of Theorem 1.18 in \citep{bercu2015concentration}. 
Using Assumption \ref{ass:exp_smooth}, there exist $0 \leq \mathcal{L},\sigma^2 <\infty$ such that almost surely
\begin{align*}
\forall t \in \nset,  \quad \expec\left[ (g(\theta_t,\xi_{t+1} )^{(k)} ) ^2 | \mcf_t \right] 
\leq 2 \mathcal{L} \left( f(\theta_{t}) - f^\star\right) + \sigma^2.
\end{align*}
Using the almost sure convergence $\theta_t \to \theta^\star$, we deduce that there is exist a compact set $K$ which contains the sequence of iterates $(\theta_t)_{t \in \nset}$ and using that $f$ is continuous gives the upper bound
\begin{align*}
\forall k\in \llbracket 1,p  \rrbracket \quad \expec\left[ (g(\theta_t,\xi_{t+1})^{(k)})^2 | \mcf_t \right] \leq  M = 2\mathcal L \sup_{\theta\in K} (f(\theta) - f(\theta^*) ) + \sigma^2.
\end{align*}
Hence, the quadratic variation is bounded as follows
\begin{align*}
\sum_{t=1}^{nT}  \expec \left[  (\Delta_ {t+1}^{(k)})^2 | \mcf_{t} \right]&\leq 
\sum_{t=1}^{nT}  \expec \left[ \left(  \frac{    g(\theta_t,\xi_{t+1})^{(k)}  }{ d_t^{(k)}  }\right) ^2  | \mcf_{t} \right] \\
&\leq  (p / \lambda )^2 \sum_{t=1}^{nT}   \expec [   (g(\theta_t,\xi_{t+1})^{(k)})^2 | \mcf_{t} ] \\
&\leq (p / \lambda )^2 {nT}  M .
\end{align*}
Equation (\ref{eq_final}) follows from applying the previously mentioned law of large number.

\section{Additional Results} \label{sec:add_results}

\subsection{Almost sure convergence under stronger assumptions} \label{subsec:cv_stronger_assumptions}

Similary to \citet{gadat2018stochastic}, we consider some stronger assumptions where the function $f$ is coercive and there exists a unique stationary point $\theta^\star$. In such framework, the sequences of iterates $(\theta_t)_{t \geq 0}$ obtained by both SGD and SCGD satisfy $\theta_t \to \theta^\star$ almost surely as $t \to +\infty$.

\medskip
\noindent
\textbf{$f$ is coercive and $\{\theta \in \rset^p: \nabla f(\theta) = 0\} = \{ \theta^\star\}$}.  Following the proofs of Theorems \ref{th:convergence_as_sgd} and \ref{th:convergence_as_csgd}, we may apply Robbins-Siegmund Theorem. There exists an event $\Omega_0 \subset \Omega$ such that, $\prob(\Omega_0)=1$ and for every $\omega \in \Omega_0$, $\limsup_{t} f(\theta_t(\omega)) < \infty$ and the series $\sum_t \eta_{t+1} \|\nabla f(\theta_t(\omega)) \|_2^2$ converges (where $\eta_t = \gamma_t$ for SGD and $\eta_t = \gamma_t \beta_t$ for SCGD). Since $\lim_{\|\theta\|\to \infty} f(\theta) = \infty$, we deduce that for every $\omega \in \Omega_0$, the sequence $(\theta_t(\omega))_{t \geq 0}$ is bounded in $\rset^p$. Therefore the limit set $\chi_{\infty}(\omega)$ (set of accumulation points) of the sequence $(\theta_t(\omega))$ is non-empty. The convergence of the series $\sum_t \eta_{t+1} \|\nabla f(\theta_t(\omega)) \|_2^2 < \infty$ along with the condition $\sum_t \eta_{t+1} = +\infty$ only implie that : $\liminf_{t \to \infty} \|\nabla f(\theta_{t}(\omega))\|_2^2 = 0, \quad \prob-a.s.$ \\
Hence, since $\theta \mapsto \nabla f(\theta)$ is continuous, there exits a limit point $\theta_{\infty}(\omega) \in \chi_{\infty}(\omega)$ such that $\|\nabla f(\theta_{\infty}(\omega))\|_2^2 = 0$, \textit{i.e.}, $\nabla f(\theta_{\infty}(\omega))=0$. Because the set of solutions $\{ \theta \in \rset^p, \nabla f(\theta) = 0 \}$ is reduced to the singleton $\{ \theta^{\star} \}$, we have $\theta_{\infty}(\omega) = \theta^\star$. Since $(f(\theta_t(\omega)))$ converges, it implies that $\lim_t f(\theta_t(\omega)) = f^\star$ and for every limit point $\theta \in \chi_{\infty}(\omega)$, we have $f(\theta) = f^\star$. Since the set $\{ \theta \in \rset^p, f(\theta) = f^\star \}$ is equal to $\{ \theta^\star \}$, the limit set $\chi_{\infty}(\omega)$ is also reduced to $\{ \theta^{\star} \}$.

\subsection{Almost sure convergence of MUSKETEER } \label{sec:corollary}

By definition, we have for all $k\in \llbracket 1,p \rrbracket$,
\begin{equation*}
d_{t+1}^{(k)} = (1-\lambda_t) \varphi(G_t)^{(k)} + \lambda_t \frac{1}{p}
\end{equation*}
implying that $ \beta_{t+1} = \min_{k\in \llbracket 1,p \rrbracket} d_{t}^{(k)}  \geq \lambda_t / p$.  As a consequence, as soon as $\sum_{t\geq 1} \lambda_t \gamma_t = +\infty$, the assumption $\sum_{t\geq 1} \beta_t \gamma_t = +\infty$ is satisfied. Applying Theorem \ref{th:convergence_as_csgd} we obtain the almost sure convergence of MUSKETEER. The condition $\sum_{t\geq 1} \lambda_t \gamma_t = +\infty$ is easily satisfied with a fixed value $\lambda_t \equiv \lambda$ in the mixture update and one can also use a slowly decreasing sequence, e.g. $\lambda_t = 1/\log(t)$.

\subsection{Regret analysis in the convex case} \label{sec:reg_analysis}

In order to better understand the benefits of the adaptive sampling strategies over standard uniform sampling, let us consider a particular setting where the objective function $f$ is convex. The following proposition available in \citet{namkoong2017adaptive} presents a regret analysis which is useful for interpretability.

\begin{proposition}[Regret analysis for convex $f$ and unbiased estimates] \label{prop:regret} Assume that $f$ is convex and consider the sequence of iterates obtained by $\theta_{t+1} = \theta_t - \gamma D^{-1}_t D(\zeta_{t+1}) g_t$ with constant step size $\gamma >0$. We have
\begin{align*}
\expec\left[ f\left(\frac{1}{T} \sum_{t=1}^T \theta_t\right) - f(\theta^\star) \right] \leq \frac{\| \theta^\star\|^2}{2 \gamma T} + \frac{\gamma}{2T} \sum_{t=1}^T \expec\left[ \sum_{k=1}^p \frac{|\partial_k f(\theta_t)|^2}{d_{t}^{(k)}}\right].
\end{align*}
\end{proposition}

\begin{proof}
Assume that the objective $f$ is convex and consider the average estimate $\bar \theta_T = \frac{1}{T} \sum_{t=1}^T \theta_t$. 
along with the following quantity: $S(f,\hat \theta) = \expec[f(\hat \theta)] - f^\star$. 
Using convexity we have on the one hand $f(\theta_t)-f^\star \leq \langle \theta_t - \theta^\star, \nabla f(\theta_t) \rangle$ and on the other hand
\begin{align*}
f(\bar \theta_T)-f^\star \leq \frac{1}{T} \sum_{t=1}^T \left(f(\theta_t)-f^\star \right)
\end{align*}
which give together the following upper bound
\begin{align*}
f(\bar \theta_T)-f^\star \leq  \frac{1}{T} \sum_{t=1}^T \langle \theta_t - \theta^\star, \nabla f(\theta_t) \rangle.
\end{align*}
Using an unbiased gradient estimate $v_t$, \textit{i.e.} $\expec_t[v_t]=\nabla f(\theta_t)$, we can write
\begin{align*}
\expec[f(\bar \theta_T)]-f^\star \leq  \expec\left[\frac{1}{T} \sum_{t=1}^T \langle \theta_t - \theta^\star, \expec_t[v_t]) \rangle \right].
\end{align*}
The term in the expectation is bounded using Lemma \ref{lemma:regret} with $v_t = D_t^{-1} D(\zeta_{t+1})g_t$ as
\begin{align*}
\frac{1}{T}\sum_{t=1}^T \langle \theta_t-\theta^\star,v_t \rangle \leq \frac{\| \theta^\star\|^2}{2 \gamma T} + \frac{\gamma}{2T} \sum_{t=1}^T \|D_t^{-1} D(\zeta_{t+1})g_t\|^2.
\end{align*}
Take the expectation on both side to control the regret as 
\begin{align*}
S(f,\bar \theta_T) \leq \frac{\| \theta^\star\|^2}{2 \gamma T} + \frac{\gamma}{2T} \sum_{t=1}^T \expec\left[ \sum_{k=1}^p \frac{|\partial_k f(\theta_t)|^2}{d_{t}^{(k)}}\right].
\end{align*}

\end{proof}
The term in expectation should be minimized with respect to the probability weights $d_{t}^{(k)}$. Intuitively, in order to maintain the overall sum as small as possible, the large gradient coordinates should be sampled more often, \textit{i.e.} we would like to have $d_{t}^{(k)}$ large whenever $|\partial_k f(\theta_t)|^2$ is large. This is in line with the framework of coordinate smoothness discussed in Remark \ref{rem:coord_smooth} and the work of \citet{allen2016even}.

\noindent (Uniform Coordinate Sampling) For all $k \in \llbracket 1,p \rrbracket$, we have $d_{t}^{(k)} = 1/p$ so that
\begin{align*}
\frac{1}{T} \sum_{t=1}^T \expec\left[ \sum_{k=1}^p \frac{|\partial_k f(\theta_t)|^2}{d_{t}^{(k)}}\right] = \frac{p}{T} \sum_{t=1}^T \expec\left[ \sum_{k=1}^p |\partial_k f(\theta_t)|^2\right] = \frac{p}{T} \sum_{t=1}^T \expec\left[ \|\nabla f(\theta_t)\|^2\right].
\end{align*}
(MUSKETEER) For all $k \in \llbracket 1,p \rrbracket$, we have $d_{t}^{(k)} = (1-\lambda_{t-1})\varphi(G_{t-1})^{(k)} + \lambda_{t-1}/p$ so that
\begin{align*}
\frac{1}{T} \sum_{t=1}^T \expec\left[ \sum_{k=1}^p \frac{|\partial_k f(\theta_t)|^2}{d_{t}^{(k)}}\right] = \frac{p}{T} \sum_{t=1}^T \expec\left[ \sum_{k=1}^p \frac{|\partial_k f(\theta_t)|^2}{(1-\lambda_{t-1})p\varphi(G_{t-1})^{(k)} + \lambda_{t-1}}\right],
\end{align*}
where the denominator is stricly larger than $1$ for all the coordinates associated to large gains. Indeed, let $k \in \llbracket 1,p \rrbracket$ the index of such coordinate. Since it is a rewarding coordinate, the normalizing step implies that $\varphi(G_{t-1})^{(k)} > 1/p$ and $(1-\lambda_{t-1})p\varphi(G_{t-1})^{(k)} + \lambda_{t-1} > 1$. This property translates the adaptive nature of the probability weights used in the MUSKETEER strategy.

\subsection{Auxiliary Results} \label{sec:rs_th}

\begin{theorem}\citep{robbins1971convergence} \label{th:rs} Consider a filtration $\left(\mathcal{F}_{n}\right)_{n \geq 0}$ and four sequences of random variables$\left(V_{n}\right)_{n \geq 0},\left(W_{n}\right)_{n \geq 0}, \left(a_{n}\right)_{n \geq 0}$ and $\left(b_{n}\right)_{n \geq 0}$ that are adapted and non-negative. Assume that almost surely $\sum_{k} a_k <\infty$ and $\sum_{k} b_k <\infty$. Assume moreover that $\expec\left[V_0\right] < \infty$ and $\forall n \in \nset: \expec[V_{n+1} | \mcf_n ] \leq (1 + a_n) V_{n} - W_{n} + b_{n}.$ Then it holds 
\begin{align*}
(a) \ \sum_{k} W_k <\infty  \ a.s.  \qquad (b) \ V_{n} \stackrel{a.s.}{\longrightarrow} V_{\infty}, \expec\left[V_{\infty}\right]<\infty. \qquad (c) \ \sup _{n \geq 0} \expec\left[V_{n}\right]<\infty.
\end{align*}
\end{theorem}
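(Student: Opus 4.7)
The plan is to reduce the perturbed recursion to a genuine non-negative supermartingale via a pathwise rescaling that absorbs the multiplicative factor $(1+a_n)$, and then apply Doob's almost-sure convergence theorem.

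\textbf{Step 1 (Rescaling).} Since $(a_n)$ is adapted, non-negative, with $\sum_k a_k<\infty$ almost surely, the product $\Pi_n := \prod_{k=0}^{n-1}(1+a_k)$ (with $\Pi_0=1$) converges a.s.\ to a finite, strictly positive limit $\Pi_\infty$; in particular $\sup_n\Pi_n<\infty$ a.s. Note $\Pi_{n+1}$ is $\mcf_n$-measurable since $a_n$ is. Dividing the hypothesised inequality by $\Pi_{n+1}$ and setting $\tilde V_n := V_n/\Pi_n$, $\tilde W_n := W_n/\Pi_{n+1}$, $\tilde b_n := b_n/\Pi_{n+1}$ gives the clean recursion
\begin{equation*}
\expec[\tilde V_{n+1}\mid \mcf_n] \;\leq\; \tilde V_n - \tilde W_n + \tilde b_n .
\end{equation*}

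\textbf{Step 2 (Supermartingale and a.s.\ convergence).} Define $M_n := \tilde V_n + \sum_{k=0}^{n-1}(\tilde W_k - \tilde b_k)$. The rescaled recursion yields $\expec[M_{n+1}\mid\mcf_n]\leq M_n$, so $M_n$ is an $\mcf_n$-supermartingale. It is not a priori non-negative, but $M_n \geq -B$ with $B := \sum_{k\geq 0}\tilde b_k \leq \sum_{k\geq 0} b_k < \infty$ a.s. Introducing the stopping times $\tau_m := \inf\{n : \sum_{k<n}\tilde b_k > m\}$, the stopped process $M^{\tau_m}_n + m$ is a non-negative $L^1$-bounded supermartingale, to which Doob's theorem applies and gives its a.s.\ convergence. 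Because $\tau_m\to\infty$ a.s.\ (as $B<\infty$ a.s.), one concludes that $M_n$ itself converges almost surely to a finite random variable $M_\infty$.

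\textbf{Step 3 (Extracting (a), (b), (c)).} The partial sums $\sum_{k<n}\tilde b_k$ converge a.s.\ to $B$, and the identity $M_n = \tilde V_n + \sum_{k<n}\tilde W_k - \sum_{k<n}\tilde b_k$ combined with $\tilde V_n,\tilde W_k\geq 0$ forces both $\tilde V_n\to\tilde V_\infty$ a.s.\ and $\sum_k\tilde W_k<\infty$ a.s. Multiplying back by $\Pi_n$ and using $\sup_n\Pi_n<\infty$ a.s., one obtains $\sum_k W_k = \sum_k \Pi_{k+1}\tilde W_k<\infty$ a.s., which is (a), and $V_n = \Pi_n\tilde V_n\to\Pi_\infty\tilde V_\infty =: V_\infty$ a.s., which is the convergence half of (b). For the integrability in (b) and the uniform bound (c), I take expectations in the supermartingale inequality to get $\expec[M_n]\leq\expec[M_0]=\expec[V_0]<\infty$, rearrange to $\expec[\tilde V_n]\leq \expec[V_0]+\expec[\sum_{k<n}\tilde b_k]$, and apply Fatou's lemma to $V_n\to V_\infty$ together with a monotone-convergence argument on the partial sums of $\tilde b_k$ to conclude $\expec[V_\infty]<\infty$ and $\sup_n\expec[V_n]<\infty$.

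\textbf{Main obstacle.} The principal difficulty is the randomness of $a_n$ and $b_n$, which blocks any naive iteration of expectations in the raw recursion. The change of variables by $\Pi_n$ removes the multiplicative randomness entirely (exploiting $\Pi_{n+1}\in\mcf_n$), and the stopping-time localization repairs the fact that $M_n$ is not uniformly $L^1$-bounded below by a constant, which is the standing hypothesis of Doob's supermartingale convergence theorem. Everything else is routine algebraic bookkeeping.
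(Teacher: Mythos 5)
Your core strategy is the same as the paper's: kill the multiplicative factor $(1+a_n)$ by rescaling with the a.s. convergent product $\Pi_n=\prod_{k<n}(1+a_k)$ (the paper works with its inverse $\pi_n$), and then read off (a) and the convergence half of (b) from the a.s. convergence of a supermartingale built out of $\tilde V_n$, $\tilde W_k$, $\tilde b_k$. The only structural difference is the bookkeeping used to make Doob's theorem applicable: the paper adds the forward tail $\sum_{k\geq n}\tilde b_k$ so that its process $S_n$ is non-negative outright, whereas you subtract the partial sums $\sum_{k<n}\tilde b_k$ and localize with stopping times. Your variant is closer to the original Robbins--Siegmund argument and has a real advantage: it never requires the series $\sum_k \tilde b_k$ to be integrable, only a.s. finite, whereas the paper's $S_0$ is not obviously in $L^1$ under the stated hypotheses. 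One small slip: with $\tau_m=\inf\{n:\sum_{k<n}\tilde b_k>m\}$ the stopped process can still drop below $-m$ at time $\tau_m$, because the partial sum may overshoot the level $m$ at its first crossing. Since $\sum_{k<n}\tilde b_k$ is $\mcf_{n-1}$-measurable, define instead $\tau_m=\inf\{n:\sum_{k\leq n}\tilde b_k>m\}$; then $\sum_{k<n}\tilde b_k\leq m$ for every $n\leq\tau_m$ and the non-negativity of $M_n^{\tau_m}+m$ holds. This is a one-line fix.

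The substantive gap is your Step 3 treatment of the expectation claims. Taking expectations in the \emph{unstopped} inequality $\expec[M_n]\leq\expec[M_0]$ presupposes exactly the $L^1$ control that the localization was introduced to avoid, and the bound $\expec[\tilde V_n]\leq\expec[V_0]+\expec[\sum_{k<n}\tilde b_k]$ is vacuous because nothing in the hypotheses makes $\sum_k\expec[b_k]$ finite (only a.s. finiteness of $\sum_k b_k$ is assumed); moreover $V_\infty=\Pi_\infty\tilde V_\infty$ with $\Pi_\infty$ merely a.s. finite, so Fatou applied to $\tilde V_n$ does not transfer to $V_n$. Be aware that the paper's own proof is equally loose at this exact point: it bounds $\expec[S_\infty]$ by $\expec[V_0]+\sum_k\expec[\tilde b_k]$ and then declares this finite ``since $\sum_k b_k<\infty$ a.s.'', which is a non sequitur. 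In fact conclusions (c) and the integrability part of (b) cannot follow from the stated hypotheses alone: take $a_n=W_n=0$, $V_0=0$, $b_0\geq 0$ non-integrable, $b_n=0$ for $n\geq 1$, and $V_1=b_0$; all assumptions hold, yet $\expec[V_1]=\expec[V_\infty]=\infty$. So your argument correctly delivers the genuinely valid conclusions, namely (a) and the a.s. convergence of $V_n$, and the part you could not nail down is precisely the part that needs extra hypotheses (deterministic bounds on $\sum_k a_k,\sum_k b_k$, or summability in expectation), a defect shared by the paper's statement and proof.
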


\begin{lemma}\label{lemma:regret} Let $\theta_1,\ldots,\theta_T$ be an arbitrary sequence of vectors. Any algorithm with initialization $\theta_1=0$ and update rule $\theta_{t+1} = \theta_t - \gamma v_t$ satisfies
\begin{align*}
\sum_{t=1}^T \langle \theta_t-\theta^\star,v_t \rangle \leq \frac{\| \theta^\star\|^2}{2 \gamma} + \frac{\gamma}{2} \sum_{t=1}^T \|v_t\|^2.
\end{align*}
In particular, for $B,\rho >0$, if we have $\|v_t\| \leq \rho$ and we set $\gamma=\sqrt{B^2/(\rho^2 T)}$ then for every $\theta^\star$ with $\|\theta^\star\| \leq B$, we have $T^{-1} \sum_{t=1}^T \langle \theta_t-\theta^\star,v_t \rangle \leq B \rho/\sqrt{T}.$
\end{lemma}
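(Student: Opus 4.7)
\textbf{Proof proposal for Lemma \ref{lemma:regret}.}

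The plan is a classical telescoping argument based on the potential $\Phi_t = \|\theta_t - \theta^\star\|^2$. Substituting the update rule $\theta_{t+1} = \theta_t - \gamma v_t$ into $\Phi_{t+1}$ and expanding the squared norm yields the one-step identity
\begin{equation*}
\|\theta_{t+1} - \theta^\star\|^2 = \|\theta_t - \theta^\star\|^2 - 2\gamma \langle \theta_t - \theta^\star, v_t \rangle + \gamma^2 \|v_t\|^2,
\end{equation*}
which I would rearrange as $2\gamma \langle \theta_t - \theta^\star, v_t \rangle = \Phi_t - \Phi_{t+1} + \gamma^2 \|v_t\|^2$ so that the quantity of interest is exposed on the left.

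Summing this identity from $t=1$ to $T$ makes the differences $\Phi_t - \Phi_{t+1}$ collapse telescopically to $\Phi_1 - \Phi_{T+1}$. Using the prescribed initialization $\theta_1 = 0$ gives $\Phi_1 = \|\theta^\star\|^2$, and discarding the non-negative remainder $\Phi_{T+1}$ turns the identity into
\begin{equation*}
2\gamma \sum_{t=1}^T \langle \theta_t - \theta^\star, v_t \rangle \leq \|\theta^\star\|^2 + \gamma^2 \sum_{t=1}^T \|v_t\|^2.
\end{equation*}
Dividing through by $2\gamma$ yields the first inequality of the lemma.

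For the ``in particular'' statement, I would substitute the uniform bounds $\sum_{t=1}^T \|v_t\|^2 \leq \rho^2 T$ and $\|\theta^\star\|^2 \leq B^2$ into the first inequality and then plug in $\gamma = B/(\rho \sqrt{T})$, which is precisely the minimizer of $B^2/(2\gamma) + \gamma \rho^2 T /2$ in $\gamma$ and balances the two summands, each to $B \rho \sqrt{T}/2$. Dividing by $T$ produces the claimed $B\rho/\sqrt{T}$ rate. There is no substantive obstacle here: the argument is a purely algebraic identity which, unlike most regret bounds, does not even require convexity of any function. The only point worth emphasizing is that dropping the terminal potential $\Phi_{T+1}$ is what turns the exact identity into the stated inequality.
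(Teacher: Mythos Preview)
Your proposal is correct and follows essentially the same approach as the paper: both derive the one-step identity $\|\theta_{t+1}-\theta^\star\|^2 = \|\theta_t-\theta^\star\|^2 - 2\gamma\langle\theta_t-\theta^\star,v_t\rangle + \gamma^2\|v_t\|^2$, telescope, drop the terminal $\|\theta_{T+1}-\theta^\star\|^2$, use $\theta_1=0$, and then substitute the bounds $B$, $\rho$ and the value of $\gamma$ for the second claim. The only cosmetic difference is that the paper presents the identity as ``completing the square'' starting from $\langle\theta_t-\theta^\star,v_t\rangle$, whereas you expand $\|\theta_{t+1}-\theta^\star\|^2$ directly; the content is identical.
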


\begin{proof}
Using algebraic manipulations (completing the square), we obtain:

\begin{align*}
\left\langle\theta_{t}-\theta^{\star}, v_{t}\right\rangle &=\frac{1}{\gamma}\left\langle\theta_{t}-\theta^{\star}, \gamma v_{t}\right\rangle \\
&=\frac{1}{2 \gamma}\left(-\left\|\theta_{t}-\theta^{\star}-\gamma v_{t}\right\|^{2}+\left\|\theta_{t}-\theta^{\star}\right\|^{2}+\gamma^{2}\left\|v_{t}\right\|^{2}\right) \\
&=\frac{1}{2 \gamma}\left(-\left\|\theta_{t+1}-\theta^{\star}\right\|^{2}+\left\|\theta_{t}-\theta^{\star}\right\|^{2}\right)+\frac{\gamma}{2}\left\|v_{t}\right\|^{2}
\end{align*}
where the last equality follows from the definition of the update rule. Summing the equality over $t,$ we have
$$
\sum_{t=1}^{T}\left\langle\theta_{t}-\theta^{\star}, v_{t}\right\rangle=\frac{1}{2 \gamma} \sum_{t=1}^{T}\left(-\left\|\theta_{t+1}-\theta^{\star}\right\|^{2}+\left\|\theta_{t}-\theta^{\star}\right\|^{2}\right)+\frac{\gamma}{2} \sum_{t=1}^{T}\left\|v_{t}\right\|^{2}
$$
The first sum on the right-hand side is a telescopic sum that collapses to
$\left\|\theta_{1}-\theta^{\star}\right\|^{2}-\left\|\theta_{T+1}-\theta^{\star}\right\|^{2}$. Then we have
\begin{align*}
\sum_{t=1}^{T}\left\langle\theta_{t}-\theta^{\star}, v_{t}\right\rangle &=\frac{1}{2 \gamma}\left(\left\|\theta_{1}-\theta^{\star}\right\|^{2}-\left\|\theta_{T+1}-\theta^{\star}\right\|^{2}\right)+\frac{\gamma}{2} \sum_{t=1}^{T}\left\|v_{t}\right\|^{2} \\
& \leq \frac{1}{2 \gamma}\left\|\theta_{1}-\theta^{\star}\right\|^{2}+\frac{\gamma}{2} \sum_{t=1}^{T}\left\|v_{t}\right\|^{2} \\
&=\frac{1}{2 \gamma}\left\|\theta^{\star}\right\|^{2}+\frac{\gamma}{2} \sum_{t=1}^{T}\left\|v_{t}\right\|^{2}
\end{align*}
where the last equality is due to the definition $\theta_{1}=0 .$ This proves the first part of the lemma. The second part follows by upper bounding $\left\|\theta^{\star}\right\|$ by $B,\left\|v_{t}\right\|$ by $\rho,$ dividing by $T,$ and plugging in the value of $\gamma$
\end{proof}

\newpage
\section{Illustrative Example (stochastic first order)} \label{app:illustrative}

We perform a comparison on a simple example in dimension $p=2$ with the functions $f(x,y) = (x^2 + y^2)/2$ and $h(x,y) = x^2/2$. Note that the function $h$ only depends on the first coordinate and an adaptive coordinate descent method should favor this direction. Figure \ref{fig:compare_2d} presents the optimization paths of the different methods: SGD, Uniform and MUKSTEER. With the function $f$ which does not present any particular design or favorable descent direction, the Uniform and Musketeer policies perform as good as classical SGD. More interestingly, when dealing with the function $h$, our method MUSKETEER (red) finds that the horizontal direction associated to axis $(Ox)$ is the relevant one for optimization. After collecting some information during the exploration phase, the probability weights got updated to favor the horizontal direction, leading to a faster convergence. For a visual demonstration of these optimization paths, please refer to the mp$4$-files \textit{optimize\_f.mp4} and \textit{optimize\_h.mp4} available in the supplementary material\footnote{https://github.com/RemiLELUC/SCGD-Musketeer}.

\begin{figure}[h]
\centering
\includegraphics[scale=0.35]{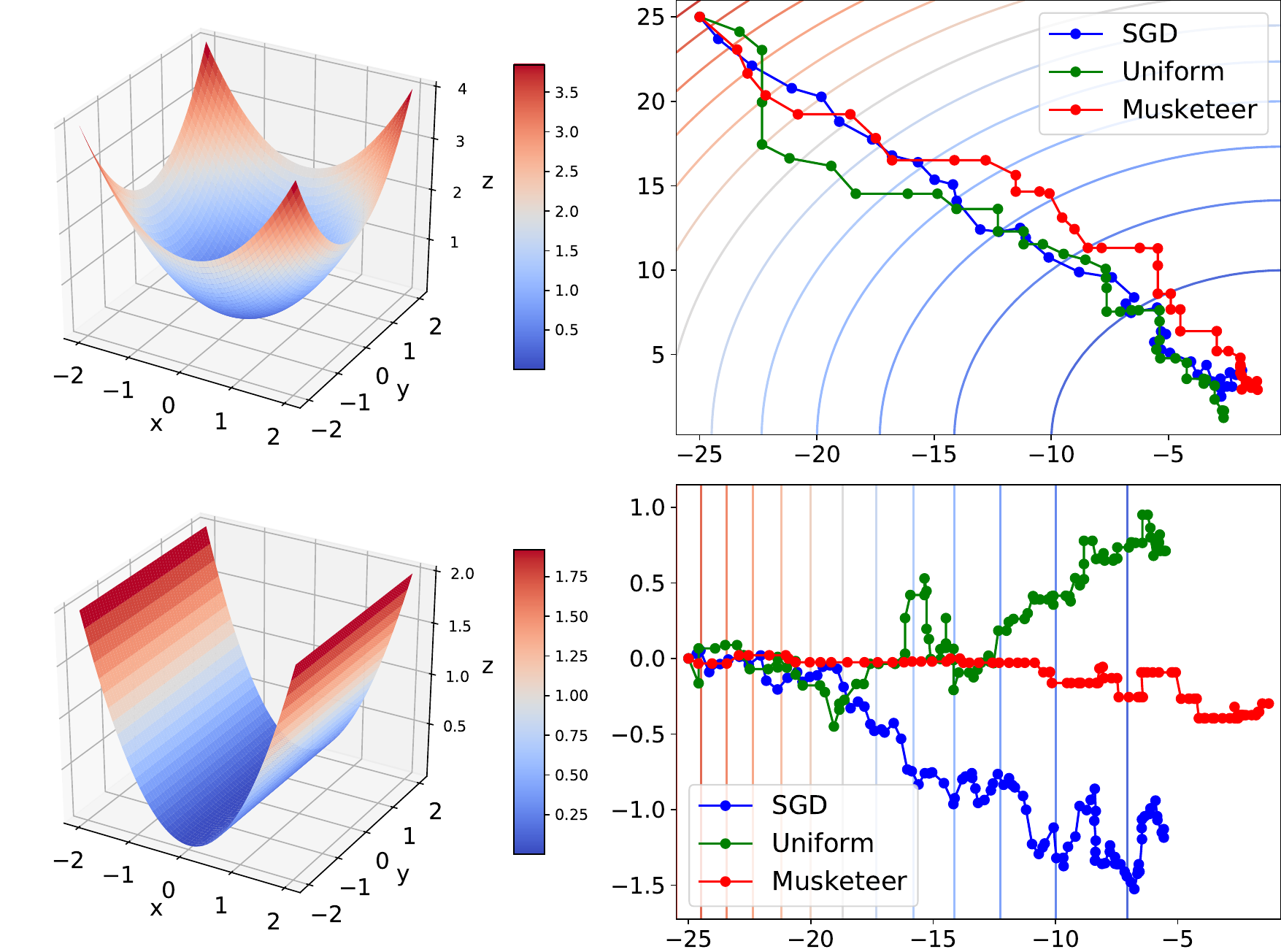}
\caption{Comparison of SGD/Uniform/Musketeer on simple 2D-examples}
\label{fig:compare_2d}
\end{figure}

\section{Numerical Experiments Details} \label{app:details}

\subsection{Regularized linear models}
We consider the ERM paradigm with linear models, namely regularized regression problems with objectives of the form $f(\theta) = (1/n) \sum_{i=1}^n f_i(\theta) + \mu \|\theta\|^2$. Similarly to \citep{namkoong2017adaptive}, we endow the data matrix $X$ with a block structure. The columns are drawn as $X[:,k] \sim \mathcal{N}(0,\sigma_k^2 I_n)$ with $\sigma_k^2 = k^{-\alpha}$ for all $k\in \llbracket 1,p \rrbracket$. The parameters are set to $n=10,000$ samples in dimension $p=250$ with an exploration size equal to $T = \lfloor \sqrt{p} \rfloor = 15$. The regularization parameter is set to the classical value $\mu=1/n$. We update the parameter vector with the optimal learning rate $\gamma_k = \gamma/(k+k_0)$ in the experiments.  Other learning rates in the framework of stochastic first order methods are considered in Appendix \ref{sec:more_num}. \\
\textbullet \ (zeroth-order) For the Ridge regression, we set $\gamma=3,k_0=10$ and for the logistic regession $\gamma=10,k_0=5$. The gradient estimate $g$ is computed using queries of a function $f_i$ where $i \sim \mathcal{U}(\llbracket 1,n \rrbracket)$. We use the $\ell_1$-reweighting with $\lambda_t = 1/\log(t)$ or softmax  with $\lambda_n \equiv 0.5$, which both satisfy Assumption \ref{ass:lr}.\\
\textbullet \ (first order) The learning rate is equal to $\gamma_k = 1/k$ ($\gamma=1,k_0=0$). The gradient estimate $g$ is computed using mini-batches of size $8$. The weighting parameter $\eta>0$ in the softmax part of the probability weights is set to $\eta=1$ and the parameter $\lambda$ in  Equation \eqref{eq:probas_update} is chosen as $\lambda_t = 1/\log(t)$ which satisfies the extended Robbins-Monro condition \ref{ass:lr}.

\subsection{Neural Networks}

\textbf{Dataset description and parameter configuration.} The three datasets in the experiments are popular publicly available deep learning datasets. The underlying machine learning task is the one of multi-label classification. 

\textbullet \  \textbf{MNIST} \citep{deng2012mnist}: a database of handwritten digits with a training set of 60,000 examples and a test set of 10,000 examples. The digits have been size-normalized and centered in a fixed-size image. The original black and white (bilevel) images from NIST were size normalized to fit in a 20x20 pixel box while preserving their aspect ratio. The resulting images contain grey levels as a result of the anti-aliasing technique used by the normalization algorithm. The images were centered in a 28x28 image by computing the center of mass of the pixels, and translating the image so as to position this point at the center of the 28x28 field. Each training and test example is assigned to the corresponding handwritten digit between $0$ and $9$.

\textbullet \ \textbf{Fashion-MNIST} \citep{xiao2017fashion}: a dataset of Zalando's article images, composed of a training set of 60,000 examples and a test set of 10,000 examples. Each example is a 28x28 grayscale image, associated with a label from 10 classes. It shares the same image size and structure of training and testing splits as the MNIST database. Each training and test example is assigned to one of the following labels: T-shirt/top (0); Trouser (1); Pullover (2); Dress (3); Coat (4); Sandal (5); Shirt (6); Sneaker (7); Bag (8); Ankle boot (9). 

\textbullet \ \textbf{Kuzushiji-MNIST}: This dataset is a drop-in replacement for the MNIST dataset (28x28 grayscale, 70,000 images), provided in the original MNIST format as well as a NumPy format. Since MNIST is restricted to 10 classes, one character here represents each of the 10 rows of Hiragana when creating Kuzushiji-MNIST.

\textbullet \ \textbf{CIFAR10} \citep{krizhevsky2009learning}: The CIFAR-10 dataset consists of $60,000$ $32 \times 32$ colour images in $10$ classes, with $6,000$ images per class. There are $50,000$ training images and $10,000$ test images. The dataset is divided into five training batches and one test batch, each with $10,000$ images. The test batch contains exactly $1,000$ randomly-selected images from each class. The training batches contain the remaining images in random order, but some training batches may contain more images from one class than another. Between them, the training batches contain exactly $5,000$ images from each class. Each training and test example is assigned to one of the following labels: airplane (0); automobile (1); bird (2); cat (3); deer (4); dog (5); frog (6); horse (7); ship (8); truck (9).

\begin{figure}[h]
  \centering
  \subfigure[MNIST]{
  \includegraphics[scale=0.35]{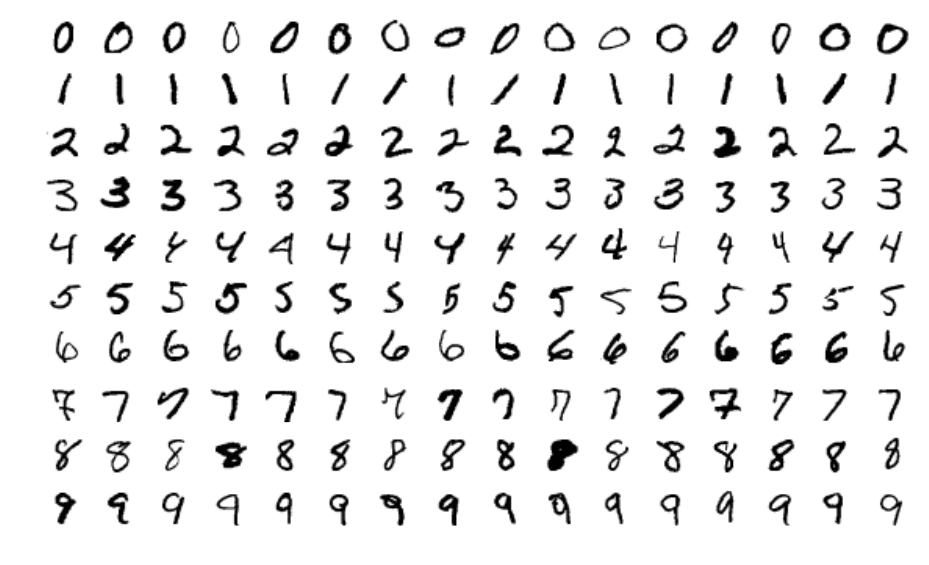}\label{fig:ex_mnist}}
  \subfigure[Fashion-MNIST]{
  \includegraphics[scale=0.3]{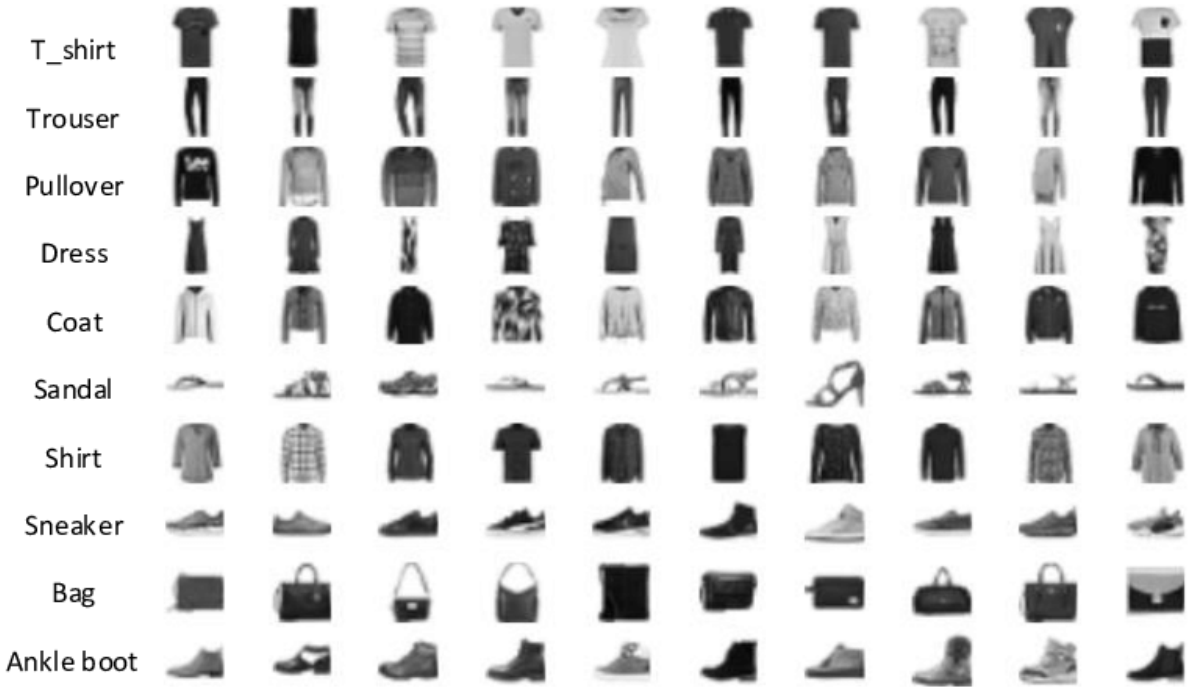}\label{fig:ex_fash}}
   \subfigure[CIFAR10]{
  \includegraphics[scale=0.23]{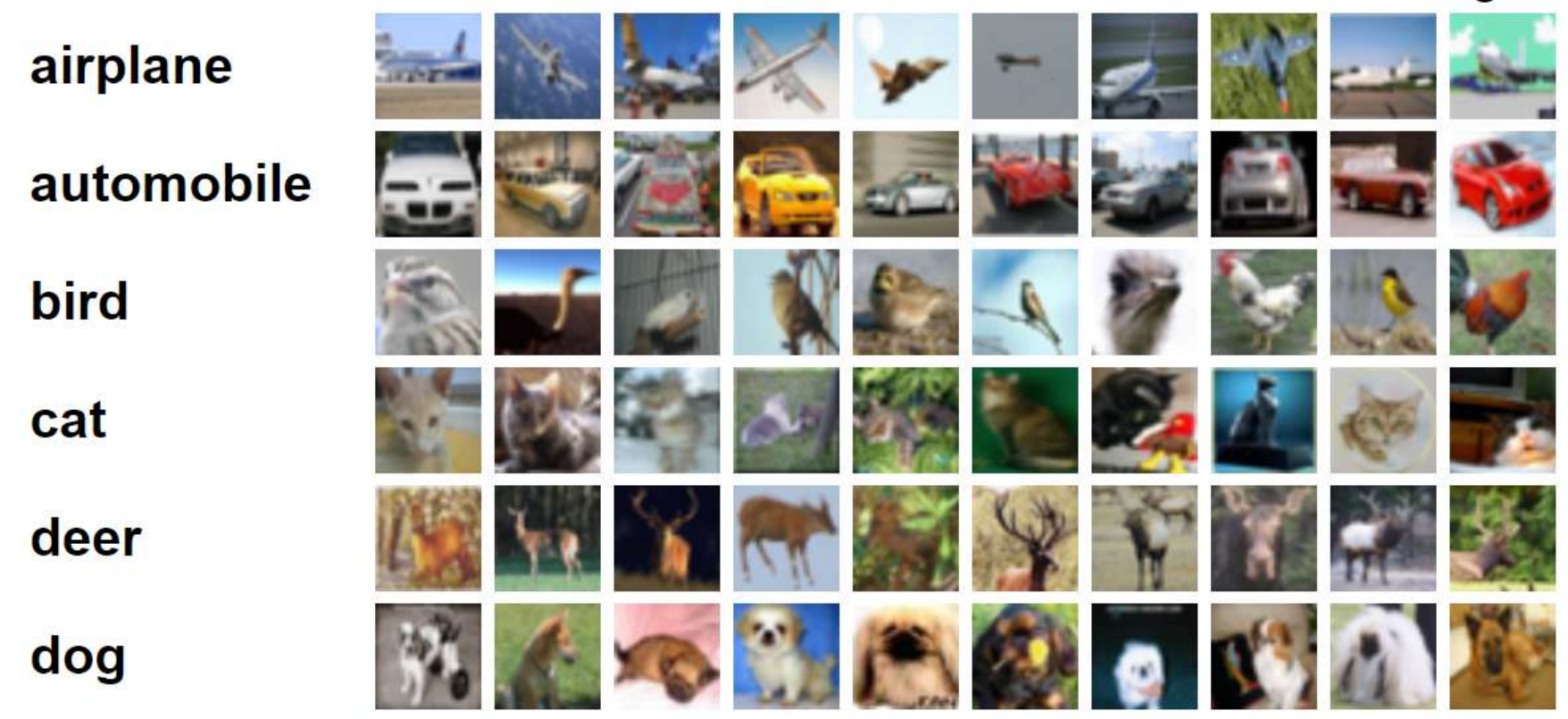}\label{fig:ex_cifar}}
  \subfigure[K-MNIST]{
  \includegraphics[scale=0.35]{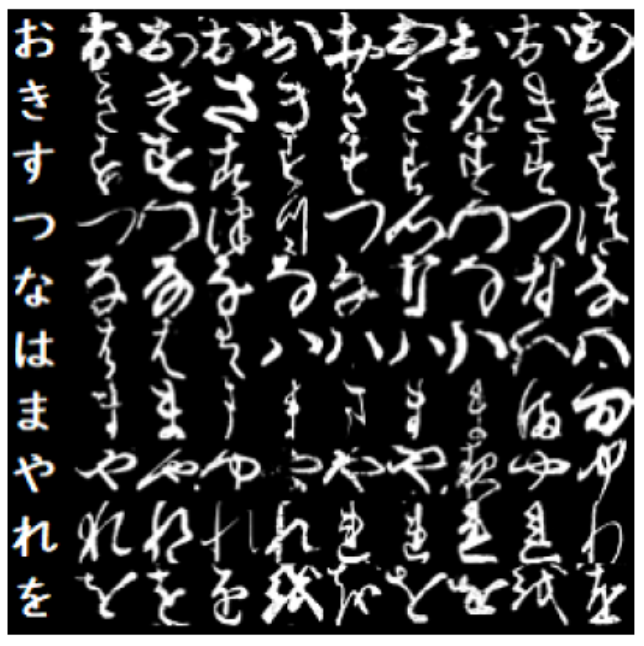}\label{fig:ex_mnist}}

  \caption{Samples for Mnist, Fashion-Mnist, K-Mnist and CIFAR-10.}
\label{fig:deep_examples}
\end{figure}

Two different neural networks are used in the experiments: one with linear layers for MNIST, Fashion-MNIST, K-MNIST another one with convolutional layers for CIFAR10. For the first network, the total number of parameters is $p=55,050$. For the second network,the dimension is $p=64,862$. In both cases, the exploration size is $T = \lfloor \sqrt{p} \rfloor$. 
In the experiments with stochastic first order methods, we use batches of coordinates with $m=p/10$.

\subsection{Hyperparameters and Hardware.}
\textbf{Hyperparameters.} When training neural networks with linear layers, we use: batch\_size = 32; input\_size = 28*28; hidden\_size = 32; output\_size = 64, along with the parameters \\
\textbullet \ (zeroth-order) $\gamma$ = 10 (Mnist and Fashion-Mnist) $\gamma$=15 (Kmnist); h = 0.01; $\ell_1$ normalization with $\lambda_n = 1/\log(n)$; softmax normalization with $\lambda_n \equiv 0.2$ and $\eta=5$ . \\
\textbullet \ (first order) $\gamma$ = 0.01 (Mnist,Fashion-Mnist,Cifar10); normalization = softmax with $\eta \in \{1,2,10\}$; $\lambda_t = 0$ (only exponential weights). 

\medskip
\noindent \textbf{Hardware.} The experiments of linear models are run using a processor Intel Core i7-10510U CPU 1.80GHz $\times$ 8; the neural networks are trained using GPU from Google Colab (GPU: Nvidia K80 / T4; GPU Memory: 12GB/16GB; GPU Memory Clock:	0.82GHz/1.59GHz; Performance:	4.1 TFLOPS / 8.1 TFLOPS)

\newpage
\textbf{ZO Neural Networks with $\ell_1$ normalization.}

\begin{figure}[h]
  \centering
  \subfigure[MNIST]{
  \includegraphics[scale=0.33]{graph/loss_mnist.pdf}} \
  \subfigure[Fashion-MNIST]{
  \includegraphics[scale=0.33]{graph/loss_fash.pdf}} \
  \subfigure[KMNIST]{
  \includegraphics[scale=0.33]{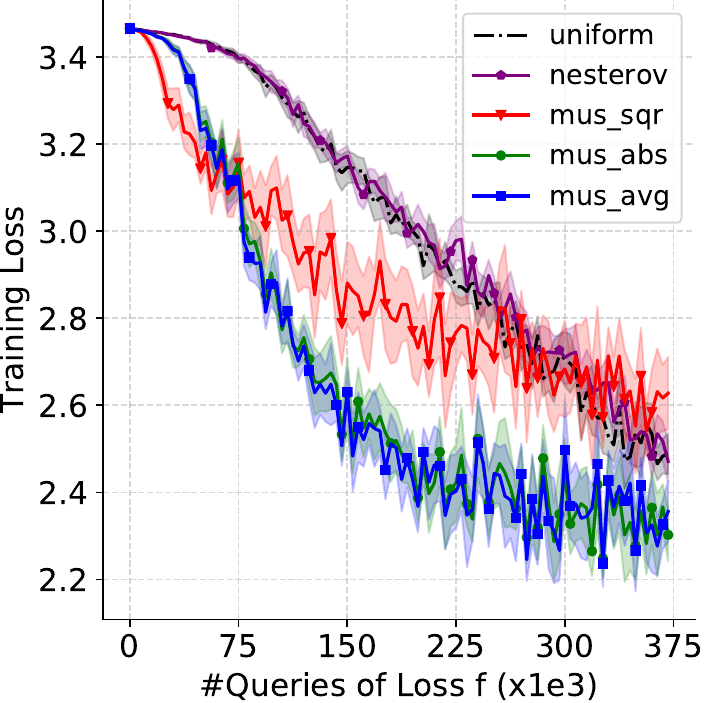}}
  \caption{Training Loss ZO Neural Networks with $\ell_1$ normalization.}
\end{figure}

\textbf{ZO Neural Networks, Comparison of $\ell_1$ and Softmax normalizations.}
\begin{figure}[h]
  \centering
  \subfigure[MNIST-$\ell_1$]{
  \includegraphics[scale=0.35]{graph/loss_mnist.pdf}} \
  \subfigure[MNIST-Exp]{
  \includegraphics[scale=0.35]{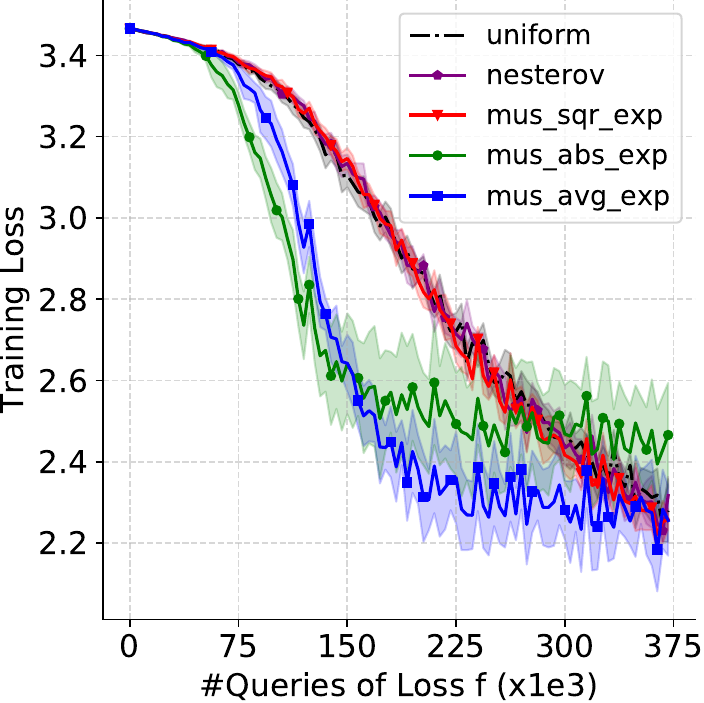}} \\
  \subfigure[Fashion-$\ell_1$]{
  \includegraphics[scale=0.35]{graph/loss_fash.pdf}} \
  \subfigure[Fashion-Exp]{
  \includegraphics[scale=0.35]{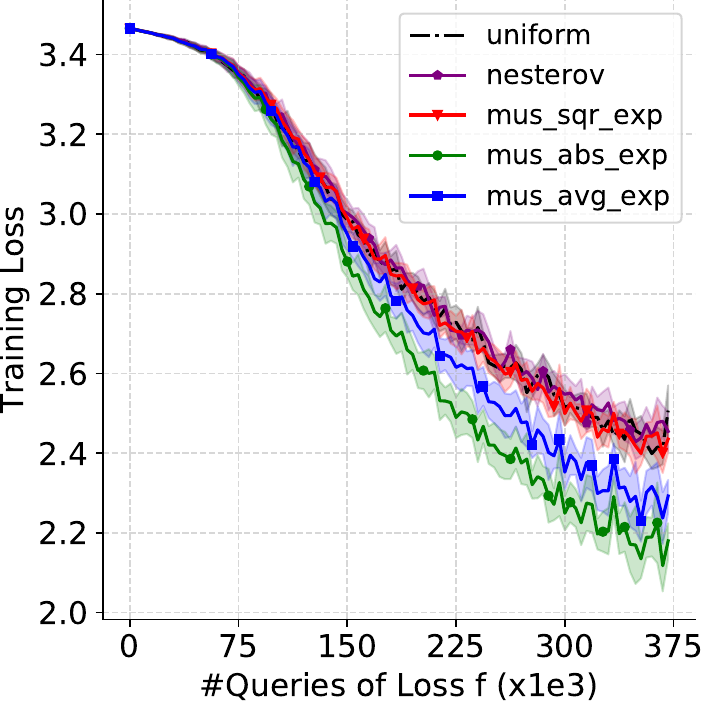}}
  \caption{Training Loss ZO Neural Networks with $\ell_1$ and Softmax normalizations.}
\end{figure}
\newpage
\section{Numerical Experiments with stochastic first order methods} \label{sec:simus_first}

In this section, we empirically validate the SCGD framework by running MUSKETEER and competitors on synthetic and real datasets problems with stochastic first order methods. First, we focus on ridge regression and regularized logistic regression problems adopting the data generation process of \citep{namkoong2017adaptive} in which the covariates exhibit a certain block structure. Second, MUSKETEER is employed to train different neural networks models on real datasets for multi-label classification task. From a practical point of view, the optimization procedure is implemented through a PyTorch optimizer which allows an easy deployment and integration.

\medskip
\noindent \textbf{Methods in competition.} 
The set of methods in competition is restricted to stochastic coordinate-based methods along with standard SGD playing the role of the baseline. This choice allows an honest comparison as the parameter tuning can be the same for all methods. MUSKETEER is implemented according to Section \ref{sec:main_algo} with an exploration size $T = \lfloor \sqrt{p} \rfloor $ and different values of $\eta$ are used to feed the discussion on the adaptiveness. The method UNIFORM stands for the uniform coordinate sampling policy in SCGD. The method ADAPTIVE is the importance sampling based method described in Remark \ref{rk:IS}. This method is no longer part of the SCGD framework and corresponds to the one developed in \citep{wangni2018gradient}. Among the different methods, MUSKETEER is the only one exhibiting a bias when generating gradients. In all cases, $\theta_0 = (0,\ldots, 0)^\top \in \rset^p$ and the optimal SGD learning rate $\gamma_k = 1/k$ is used. For a fair comparison of SGD against SCGD, we normalize the number of passes over the coordinates: one SGD step updates the $p$ coordinates of $\theta$ so we allow to take $p$ steps for the coordinate-based methods in the mean time. 

\begin{figure}[h]
  \centering
  \subfigure[Ridge $\alpha=5$]{
  \includegraphics[scale=0.28]{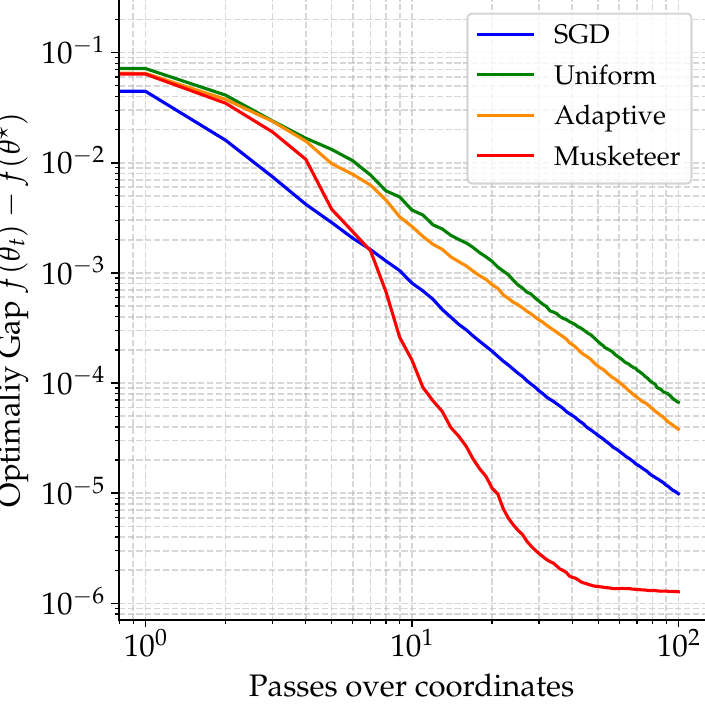}\label{fig:ridge5}}
  \subfigure[Ridge $\alpha=10$]{
  \includegraphics[scale=0.28]{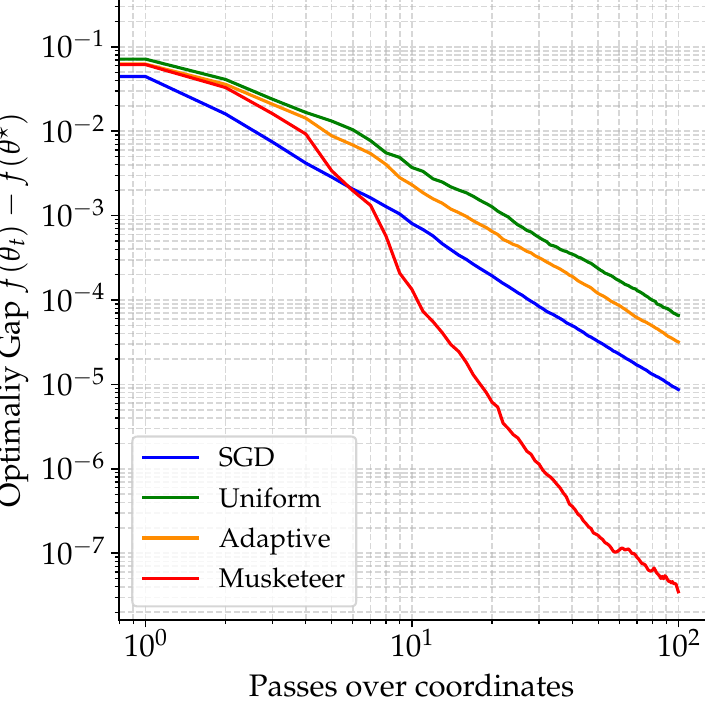}\label{fig:ridge10}}
  \subfigure[Logistic $\alpha=2$]{
  \includegraphics[scale=0.28]{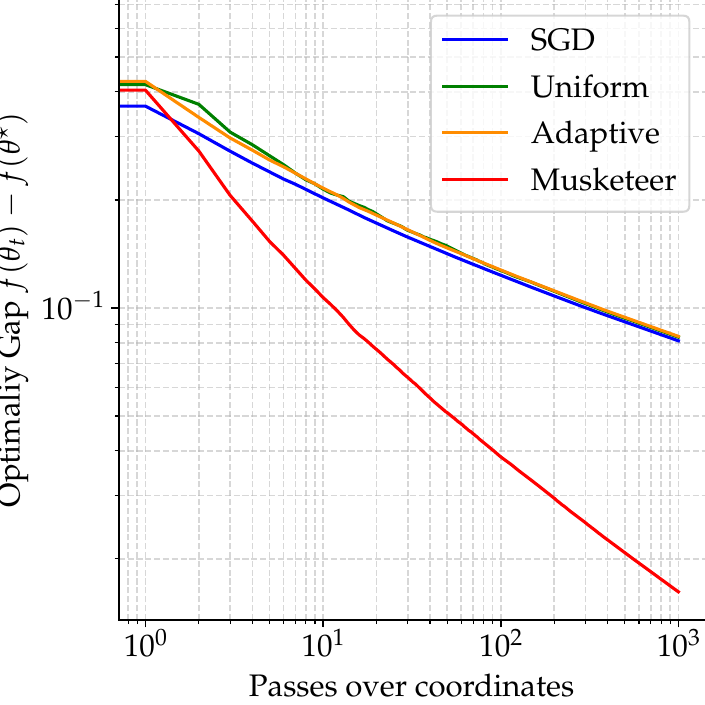}\label{fig:logistic2}}
  \subfigure[Logistic $\alpha=5$]{
  \includegraphics[scale=0.28]{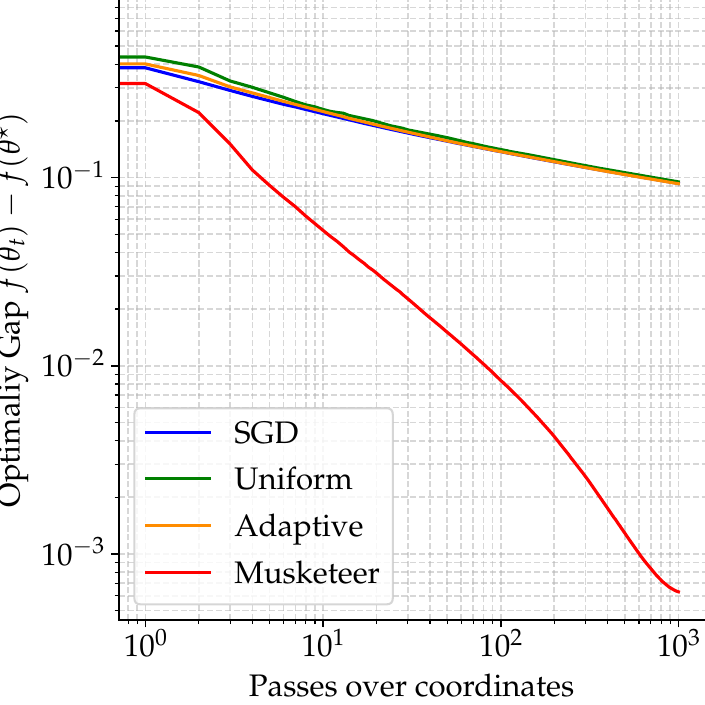}\label{fig:logistic5}}
  \caption{$[f(\theta_t)-f(\theta^\star)]$ for Linear Models on Synthetic data with different block structures.}
\label{fig:losses_linear}
\end{figure}

\noindent \textbf{Linear models.} We apply ERM to regularized regression and classification problems. Similarly to \citep{namkoong2017adaptive}, we endow the data matrix $X$ with a block structure. 
The columns are drawn as $X[:,k] \sim \mathcal{N}(0,\sigma_k^2 I_n)$ with $\sigma_k^2 = k^{-\alpha}$ for $k \in \llbracket 1,p \rrbracket$. The parameters are set to $n=10,000$ samples in dimension $p=250$  and  $T = 15$. Figure \ref{fig:losses_linear} provides the graphs of the optimaliy gap $t\mapsto f(\theta_t)-f^\star$ averaged over $20$ independent simulations for different values of $\alpha \in \{2;5;10\}$. First, note that the uniform sampling strategy shows similar performance to the classical SGD and that the (unbiased) importance sampling version ADAPTIVE is also of the same order. 
Besides, the clear winner is MUSKETEER as it offers the best performance in all configurations. Greater values of $\alpha$ (stronger block structure) improve our relative performance with respect to the other methods as shown by Figures \ref{fig:ridge10} and \ref{fig:logistic5}.

\medskip
\noindent \textbf{Neural Networks.} To asses the practical performance of MUSKETEER, we focus on the training of neural networks within the framework of multi-label classification. The datasets in the experiments are popular publicly available deep learning datasets: MNIST \citep{deng2012mnist}, Fashion-MNIST \citep{xiao2017fashion} and CIFAR10 \citep{krizhevsky2009learning}. Given an image, the goal is to predict its label among ten classes. Two different neural networks are used in the experiments: one with linear layers for MNIST and Fashion-MNIST ($p=55,050$ and $T  = 234$) , another one with convolutional layers for CIFAR10 ($p=64,862$ and $T = 254$). 
\begin{figure}[h]
  \centering
  \subfigure[MNIST]{
  \includegraphics[scale=0.38]{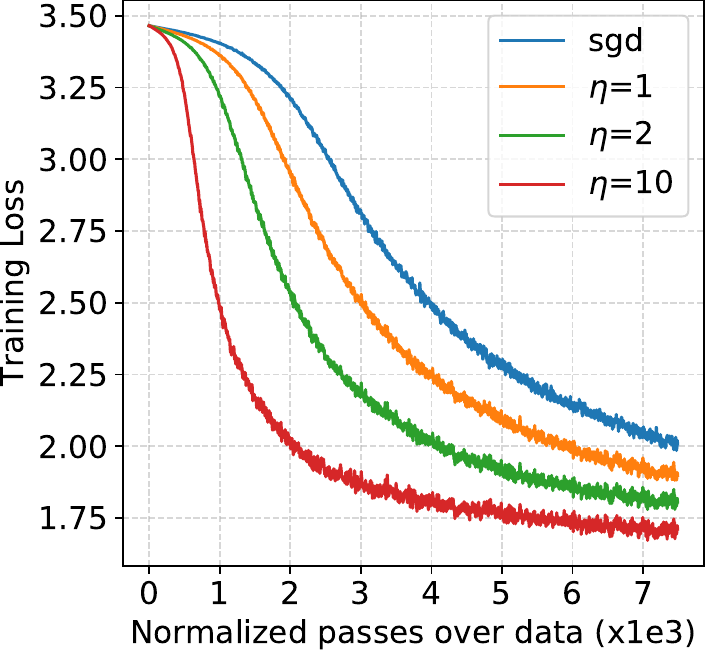}\label{fig:mnist}} \
  \subfigure[Fashion-MNIST]{
  \includegraphics[scale=0.38]{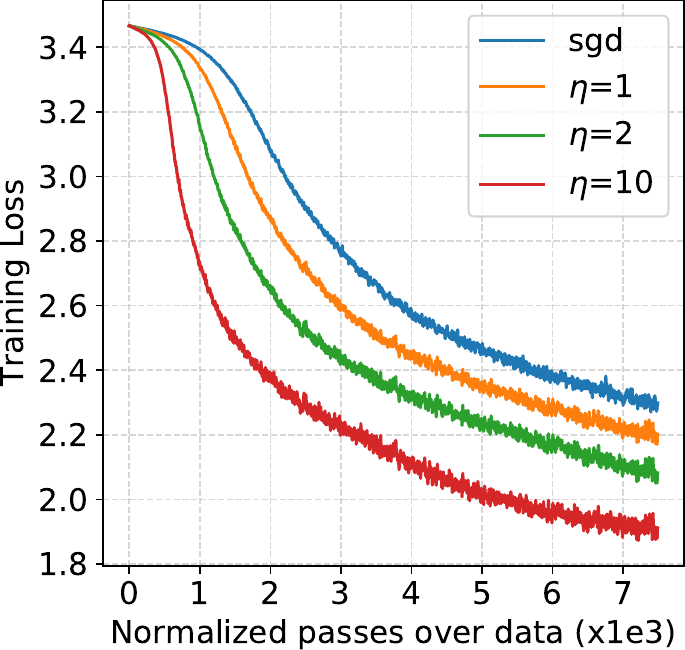}\label{fig:fashion}} \
  \subfigure[CIFAR-10]{
  \includegraphics[scale=0.38]{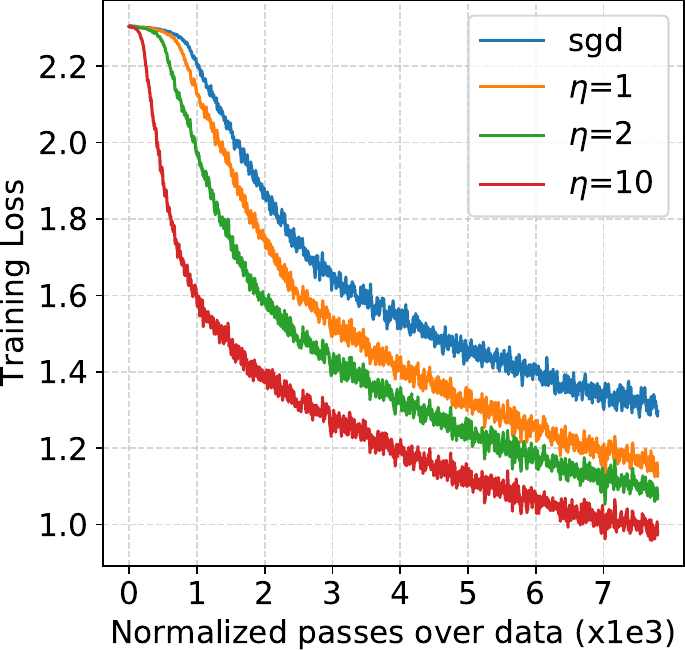}\label{fig:cifar10}}
  \caption{Training Loss of SGD vs. MUSKETEER on real-world datasets.}
\label{fig:losses_nn}
\end{figure}

Figure \ref{fig:losses_nn} compares the evolution of the training loss of SGD against MUSKETEER averaged over $10$ independent simulations with different values of $\eta$. A great value of this parameter strengthens the adaptive scheme as it gives more importance to the weights in Equation \eqref{eq:probas_update}, leading to stronger decrease of the objective function. Interestingly, the performance of MUSKETEER also benefit from such adaptive structure in terms on accuracy of the test set (see Table \ref{tab:test_acc}). This allows to quantify the statistical gain brought by MUSKETEER over SGD. 

\begin{table}[h!]
\centering
\begin{tabular}[b]{|c|c|c|c|c|}\hline
       & SGD & $\eta=1$ & $\eta=2$ & $\eta=10$\\ \hline
      MNIST & 84.7$\pm$1.0 & 86.7$\pm$0.5 & 88.9$\pm$0.4 & \textbf{91.3$\pm$0.2} \\ \hline
	  FASHION & 64.7$\pm$1.2 & 68.5$\pm$1.0 & 71.2$\pm$0.7 & \textbf{77.1$\pm$0.8} \\ \hline
	  CIFAR10 & 51.4$\pm$1.4 & 57.7$\pm$0.8 & 59.7$\pm$1.0 & \textbf{62.7$\pm$0.8} \\ \hline      
    \end{tabular}
    \caption{Test Accuracy (in \%).}
    \label{tab:test_acc}
\end{table}

\newpage
\section{Further Numerical Experiments with zeroth-order methods}
\label{sec:more_zo_simus}

\subsection{Ridge Regression ($\ell_1$-reweighting) with different settings of $(n,p)$} \label{subsec:zo_ridge}
We consider the Ridge regression problem with the classical regularization parameter value $\mu=1/n$ and run several experiments in various settings of $(n,p)$. We endow the data matrix $X$ with a block structure. The columns are drawn as $X[:,kB+1:kB+B] \sim \mathcal{N}(0,\sigma_k^2 I_n)$ with $\sigma_k^2 = k^{-\alpha}$ for all $k\in \llbracket 0,(p/B)-1 \rrbracket$. The parameter $B$ is the block-size and is set to $B=10$ for the Ridge regression. The parameter $\alpha$ represents the block structure and is set to $\alpha=5$. The different Figures below present the evolution of the optimality gap $t \mapsto[f(\theta_t)-f^\star]$ averaged over $20$ independent runs. The learning rates is the same for all methods, fixed to $\gamma_k = 1/(k+10)$. The different settings are: number of samples $n \in \{1,000;2,000;5,000\}$ and dimension $p \in \{20;50;100;200\}$.  We use the $\ell_1$ normalization in Equation \eqref{eq:probas_update} with $\lambda_n = 1/\log(n)$.

\begin{figure}[h]
  \centering
  \subfigure[$n=1000, p=20$]{
  \includegraphics[scale=0.26]{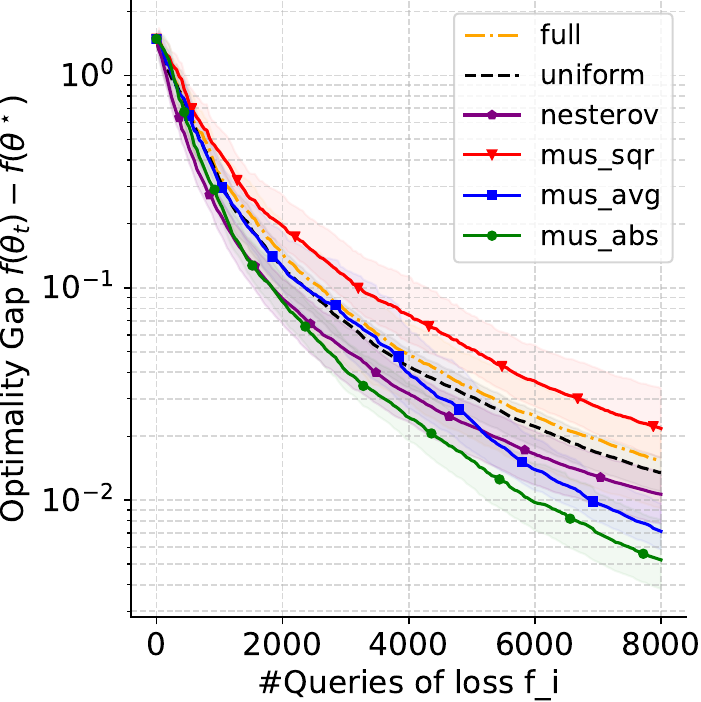}
  \label{fig:ridge_n1000_p20}}
  \subfigure[$n=1000, p=50$]{
  \includegraphics[scale=0.26]{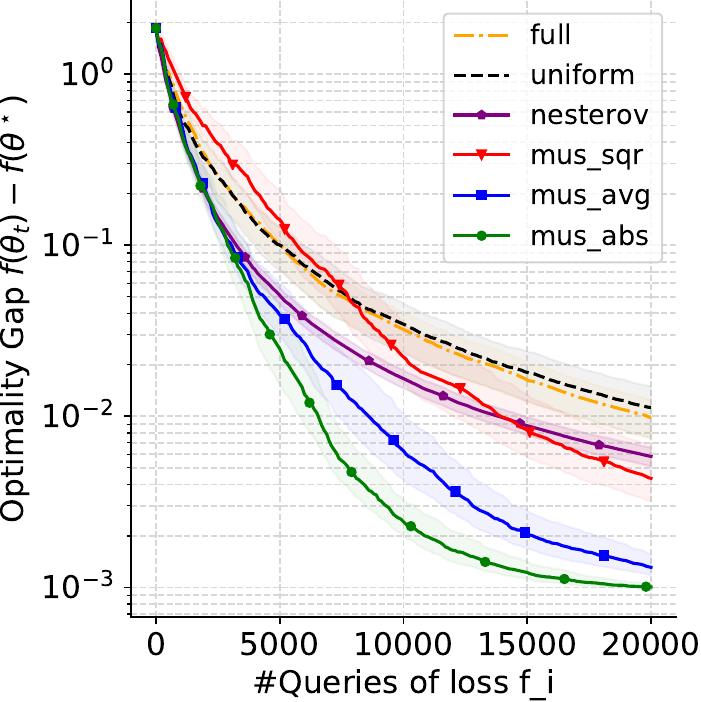}\label{fig:ridge_n1000_p50}}
  \subfigure[$n=1000, p=100$]{
  \includegraphics[scale=0.26]{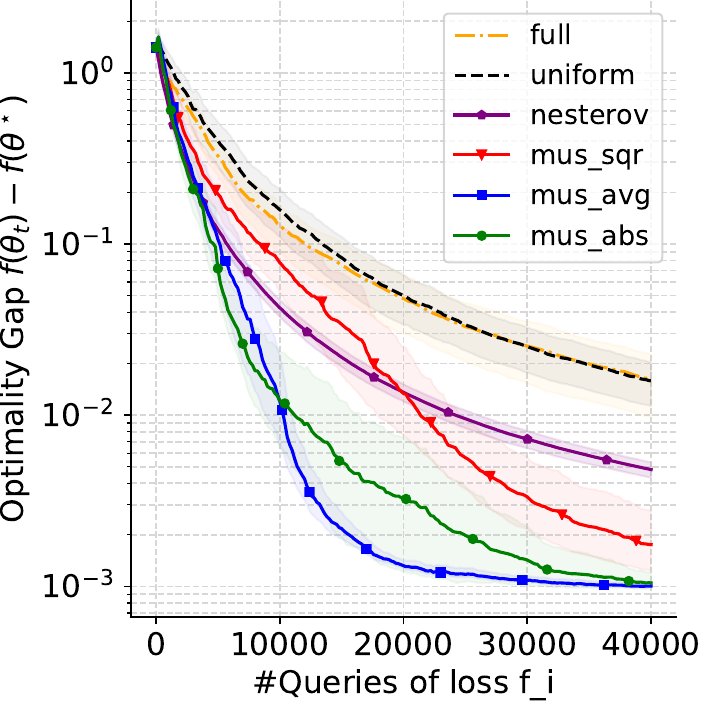}\label{ridge_n1000_p100}}
  \subfigure[$n=1000, p=200$]{
  \includegraphics[scale=0.26]{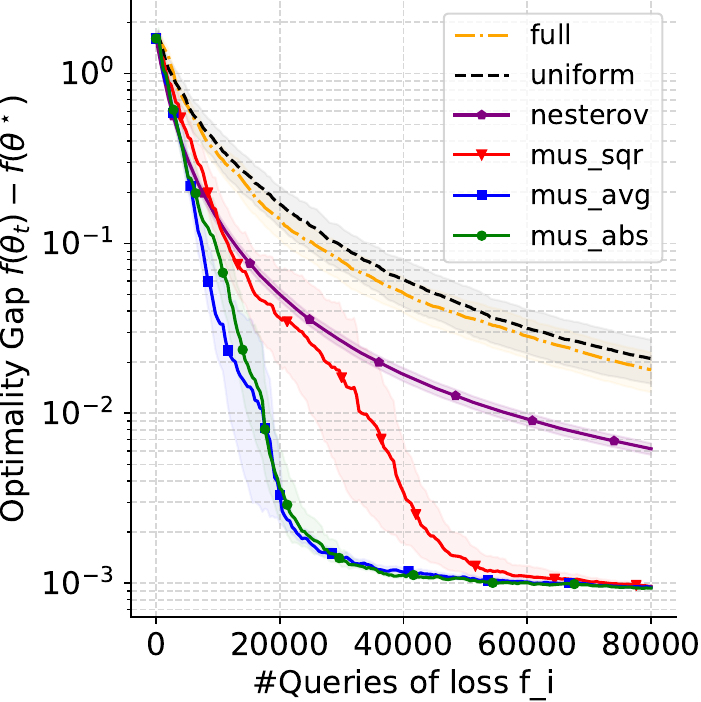}\label{ridge_n1000_p200}}
  \caption{$[f(\theta_t)-f^\star]$ for Ridge Regression with $n=1000$ and $p=20,50,100,200$}
\label{fig:ridge_n1000}

  \centering
  \subfigure[$n=2000, p=20$]{
  \includegraphics[scale=0.26]{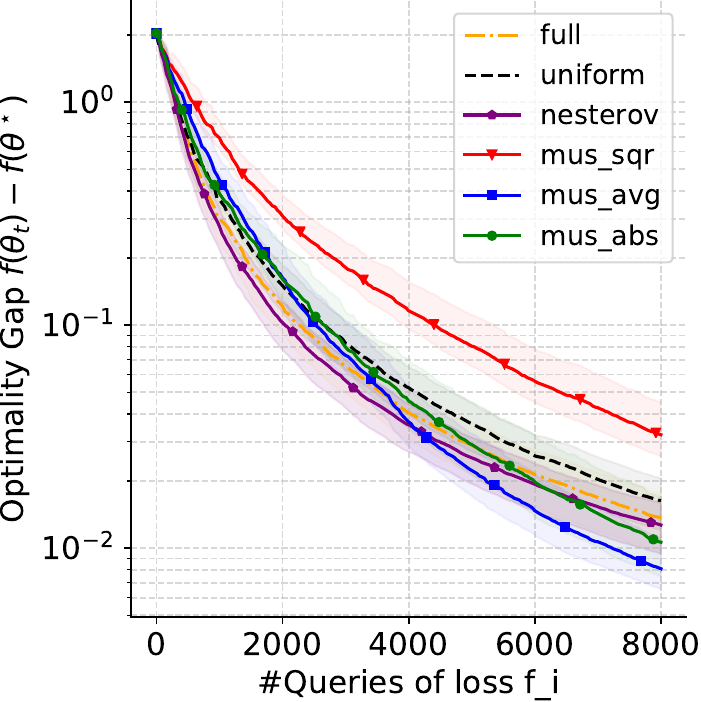}
  \label{fig:ridge_n2000_p20}}
  \subfigure[$n=2000, p=50$]{
  \includegraphics[scale=0.26]{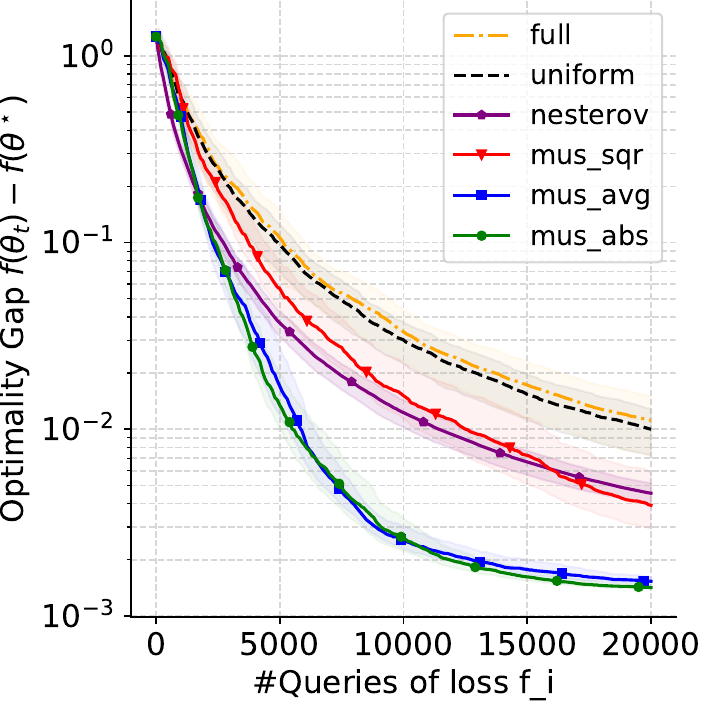}\label{fig:ridge_n2000_p50}}
  \subfigure[$n=2000, p=100$]{
  \includegraphics[scale=0.26]{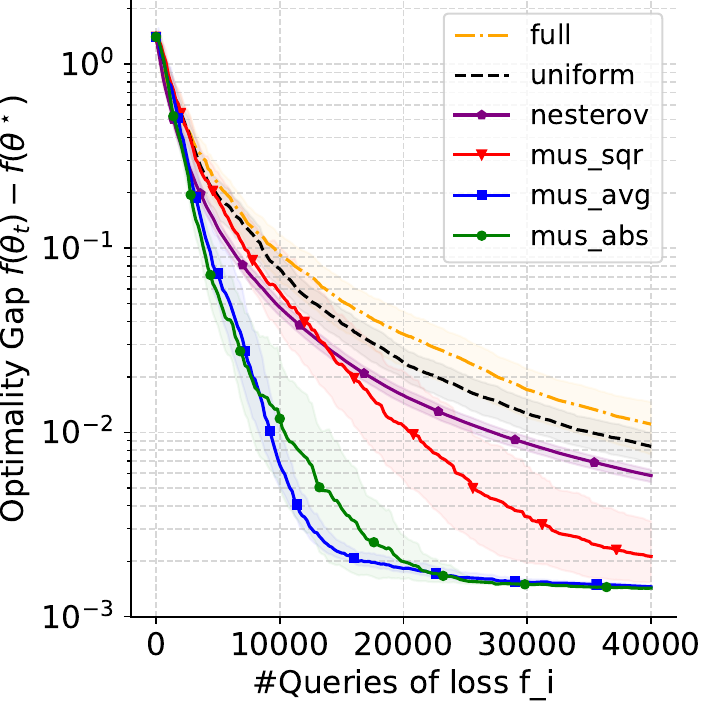}\label{ridge_n2000_p100}}
  \subfigure[$n=2000, p=200$]{
  \includegraphics[scale=0.26]{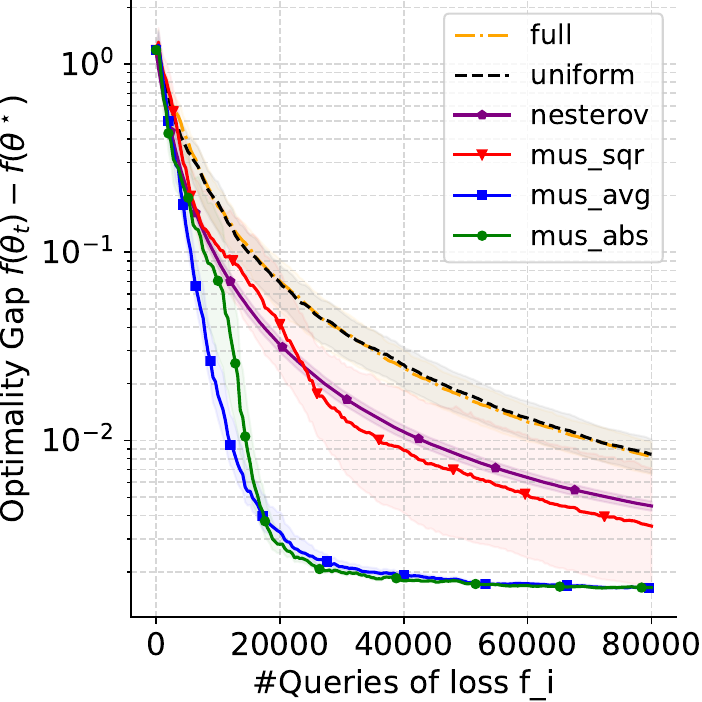}\label{ridge_n2000_p200}}
  \caption{$[f(\theta_t)-f^\star]$ for Ridge Regression with $n=2000$ and $p=20,50,100,200$}
\label{fig:ridge_n2000}

  \centering
  \subfigure[$n=5000, p=20$]{
  \includegraphics[scale=0.26]{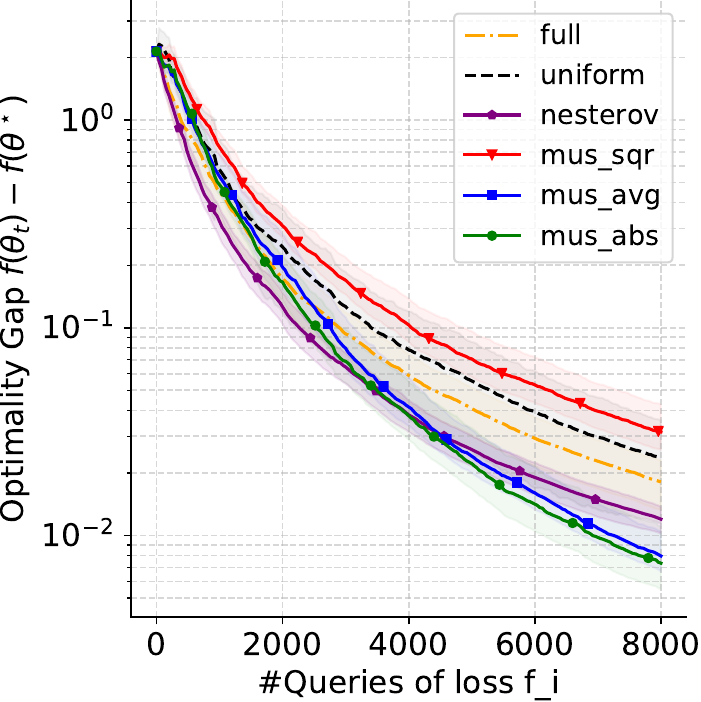}
  \label{fig:ridge_n5000_p20}}
  \subfigure[$n=5000, p=50$]{
  \includegraphics[scale=0.26]{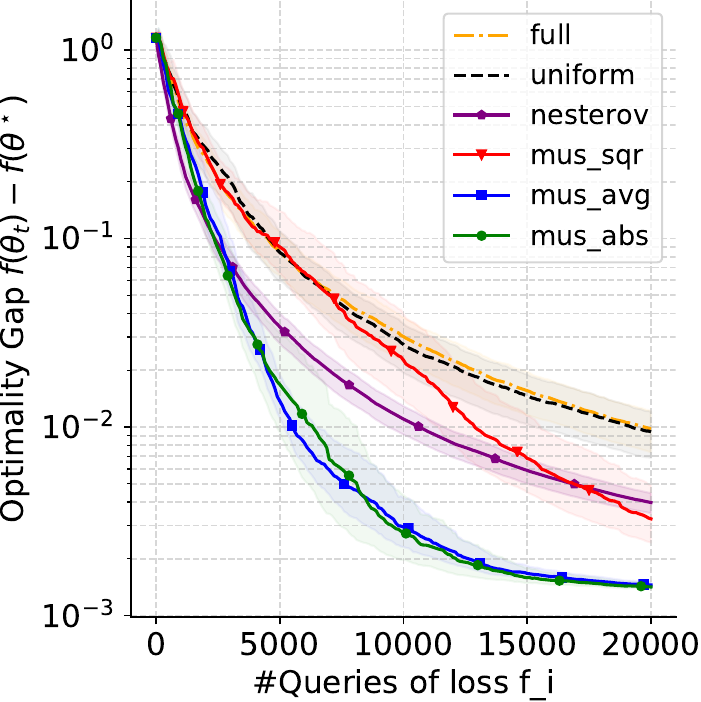}\label{fig:ridge_n5000_p50}}
  \subfigure[$n=5000, p=100$]{
  \includegraphics[scale=0.26]{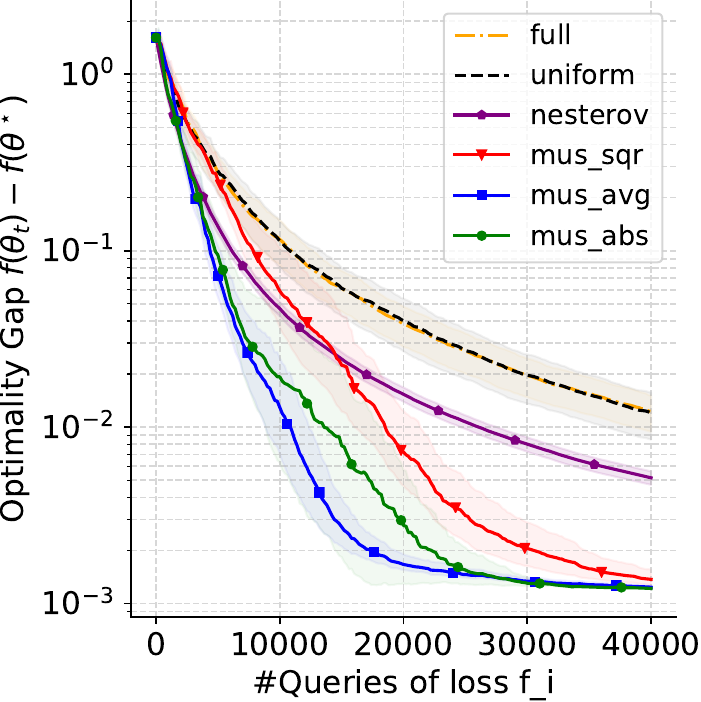}\label{ridge_n5000_p100}}
  \subfigure[$n=5000, p=200$]{
  \includegraphics[scale=0.26]{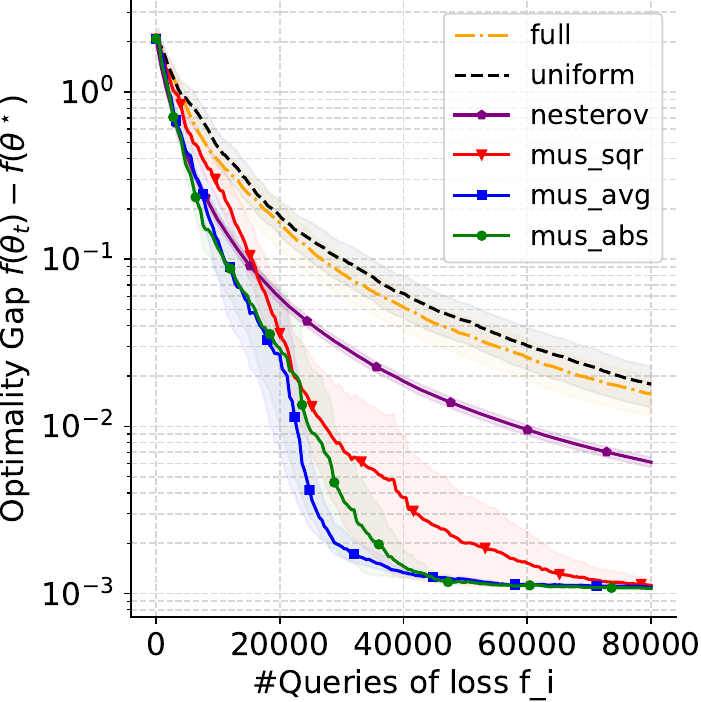}\label{ridge_n5000_p200}}
  \caption{$[f(\theta_t)-f^\star]$ for Ridge Regression with $n=5000$ and $p=20,50,100,200$}
\label{fig:ridge_n5000}
\end{figure}

\newpage

\subsection{Ridge Regression (softmax reweighting) with different settings of $(n,p)$} \label{subsec:zo_ridge_softmax}
We consider the Ridge regression problem with the classical regularization parameter value $\mu=1/n$ and run several experiments in various settings of $(n,p)$. We endow the data matrix $X$ with a block structure. The columns are drawn as $X[:,kB+1:kB+B] \sim \mathcal{N}(0,\sigma_k^2 I_n)$ with $\sigma_k^2 = k^{-\alpha}$ for all $k\in \llbracket 0,(p/B)-1 \rrbracket$. The parameter $B$ is the block-size and is set to $B=10$ for the Ridge regression. The parameter $\alpha$ represents the block structure and is set to $\alpha=5$. The different Figures below present the evolution of the optimality gap $t \mapsto[f(\theta_t)-f^\star]$ averaged over $20$ independent runs. The learning rates is the same for all methods, fixed to $\gamma_k = 1/(k+10)$. The different settings are: number of samples $n \in \{1,000;2,000;5,000\}$ and dimension $p \in \{20;50;100;200\}$. We use the softmax normalization in Equation \eqref{eq:probas_update} with $\lambda_n \equiv 0.5$ and $\eta=1$.

\begin{figure}[h]
  \centering
  \subfigure[$n=1000, p=20$]{
  \includegraphics[scale=0.26]{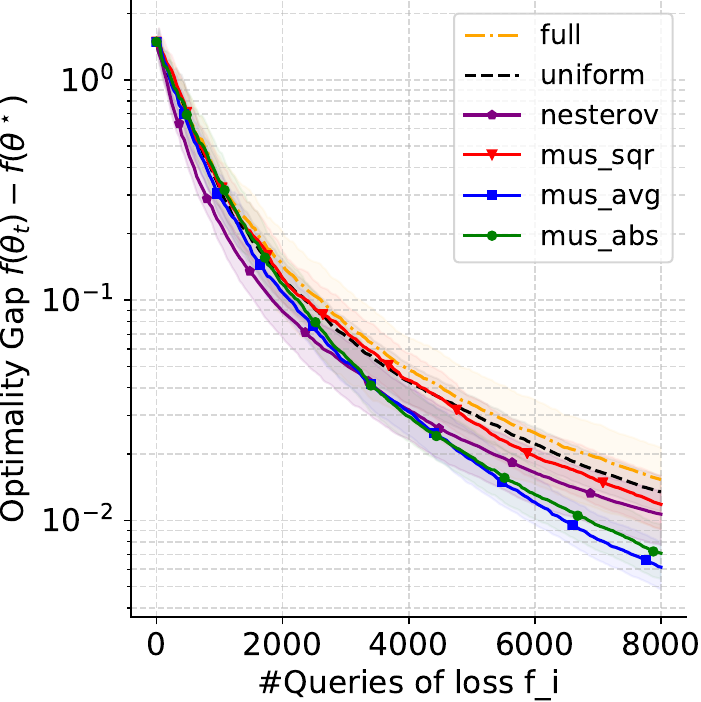}
  \label{fig:ridge_exp_n1000_p20}}
  \subfigure[$n=1000, p=50$]{
  \includegraphics[scale=0.26]{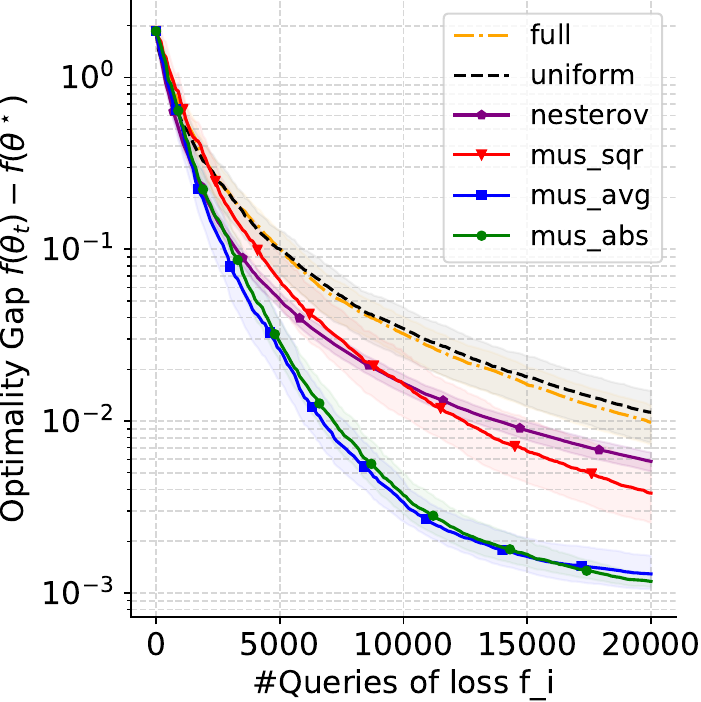}\label{fig:ridge_exp_n1000_p50}}
  \subfigure[$n=1000, p=100$]{
  \includegraphics[scale=0.26]{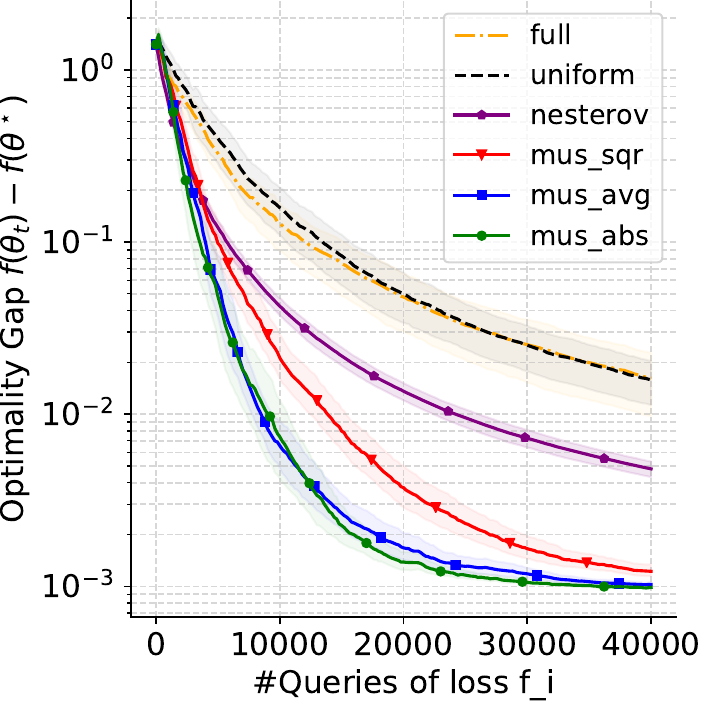}\label{ridge_exp_n1000_p100}}
  \subfigure[$n=1000, p=200$]{
  \includegraphics[scale=0.26]{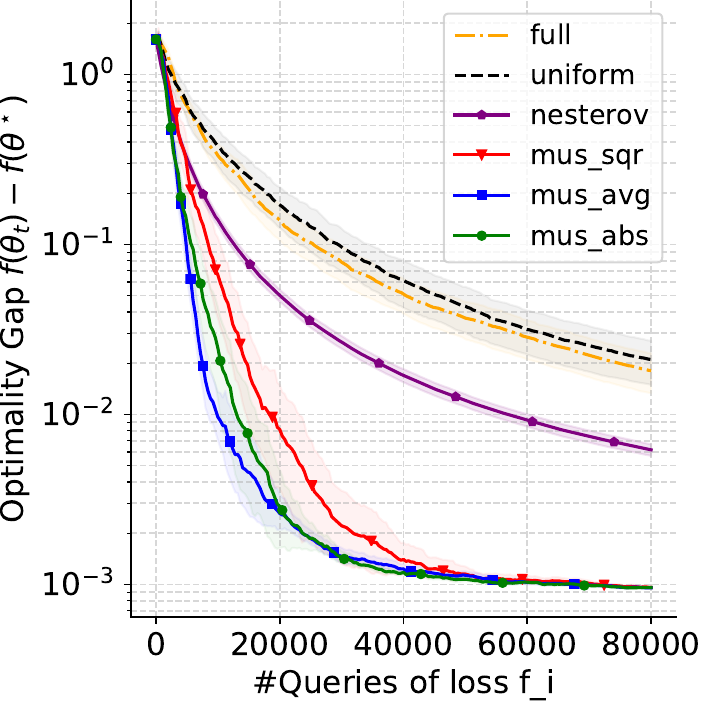}\label{ridge_exp_n1000_p200}}
  \caption{$[f(\theta_t)-f^\star]$ for Ridge Regression with $n=1000$ and $p=20,50,100,200$}
\label{fig:ridge_exp_n1000}

  \centering
  \subfigure[$n=2000, p=20$]{
  \includegraphics[scale=0.26]{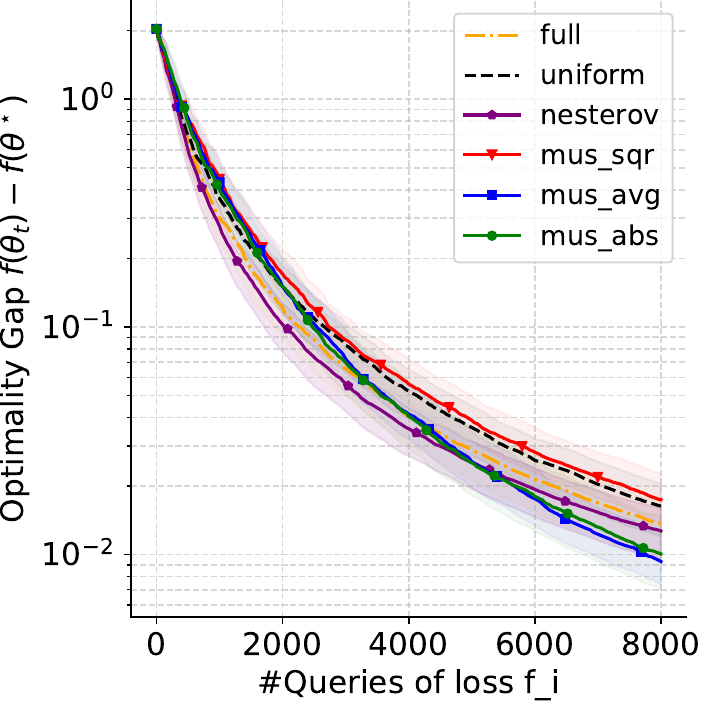}
  \label{fig:ridge_exp_n2000_p20}}
  \subfigure[$n=2000, p=50$]{
  \includegraphics[scale=0.26]{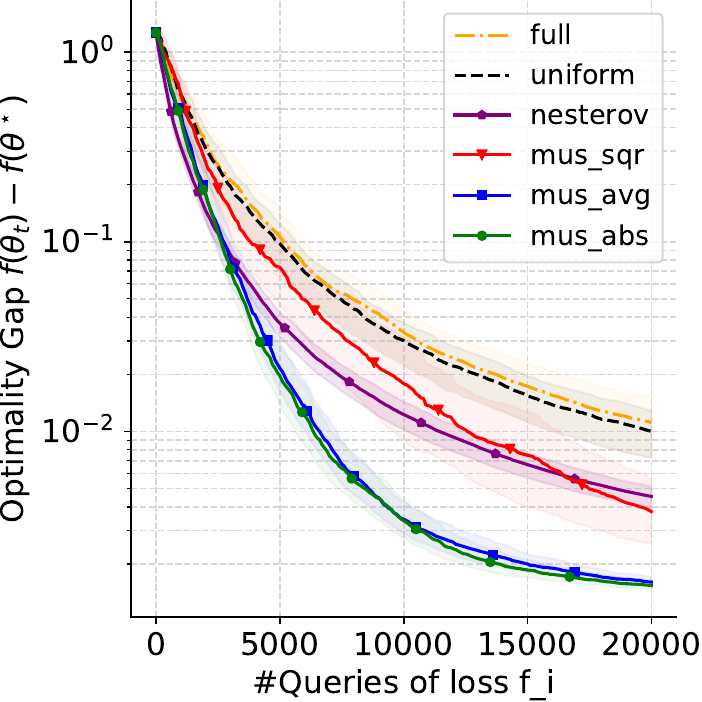}\label{fig:ridge_exp_n2000_p50}}
  \subfigure[$n=2000, p=100$]{
  \includegraphics[scale=0.26]{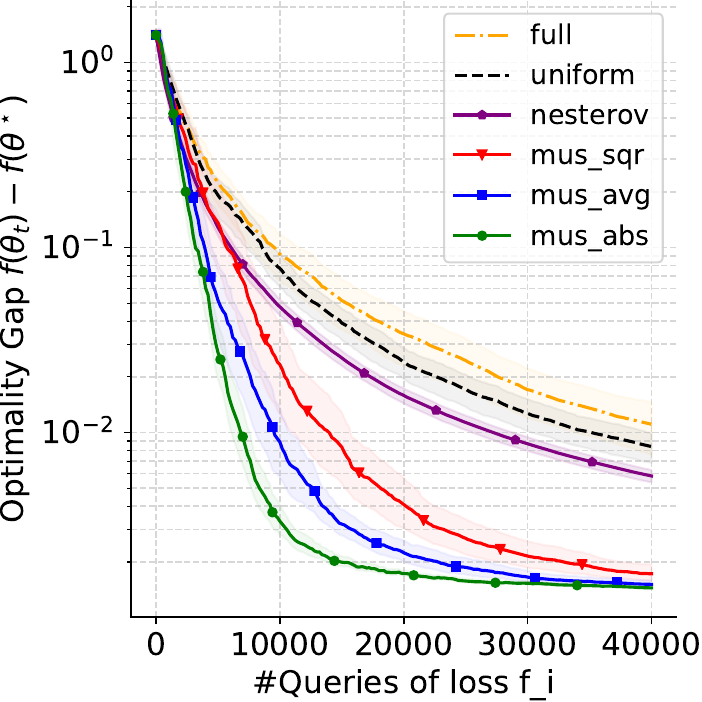}\label{ridge_exp_n2000_p100}}
  \subfigure[$n=2000, p=200$]{
  \includegraphics[scale=0.26]{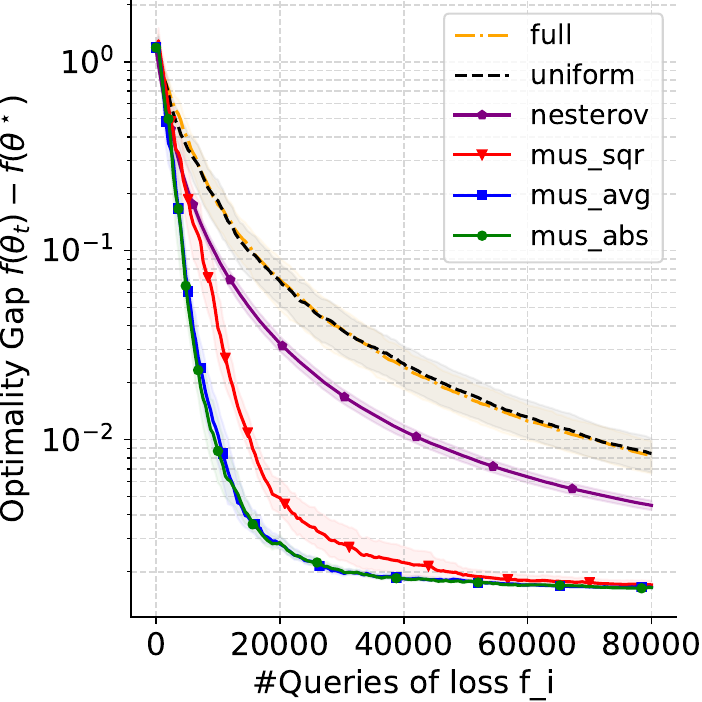}\label{ridge_exp_n2000_p200}}
  \caption{$[f(\theta_t)-f^\star]$ for Ridge Regression with $n=2000$ and $p=20,50,100,200$}
\label{fig:ridge_exp_n2000}

  \centering
  \subfigure[$n=5000, p=20$]{
  \includegraphics[scale=0.26]{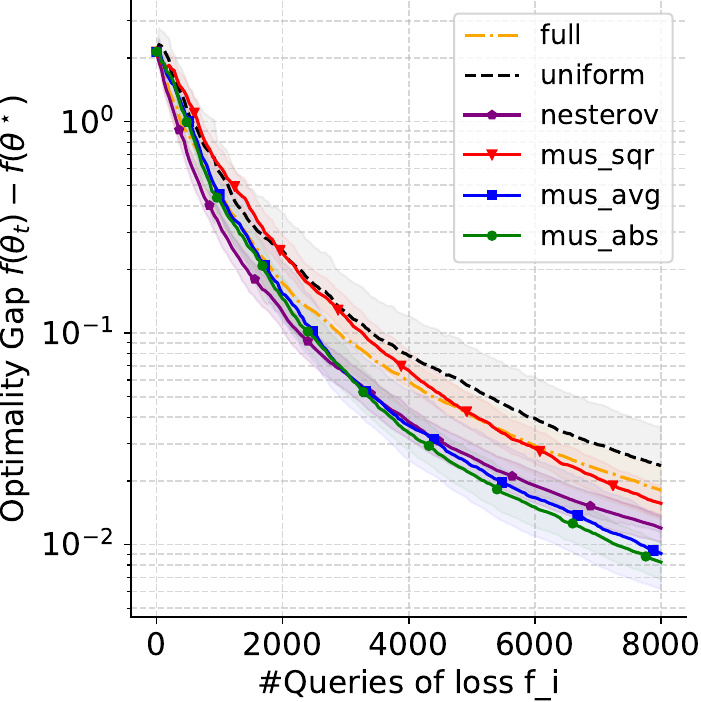}
  \label{fig:ridge_exp_n5000_p20}}
  \subfigure[$n=5000, p=50$]{
  \includegraphics[scale=0.26]{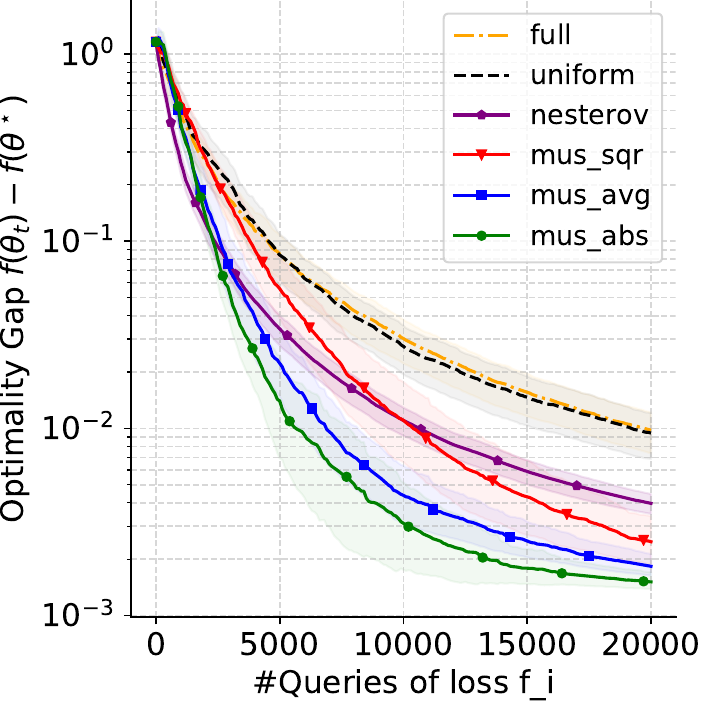}\label{fig:ridge_exp_n5000_p50}}
  \subfigure[$n=5000, p=100$]{
  \includegraphics[scale=0.26]{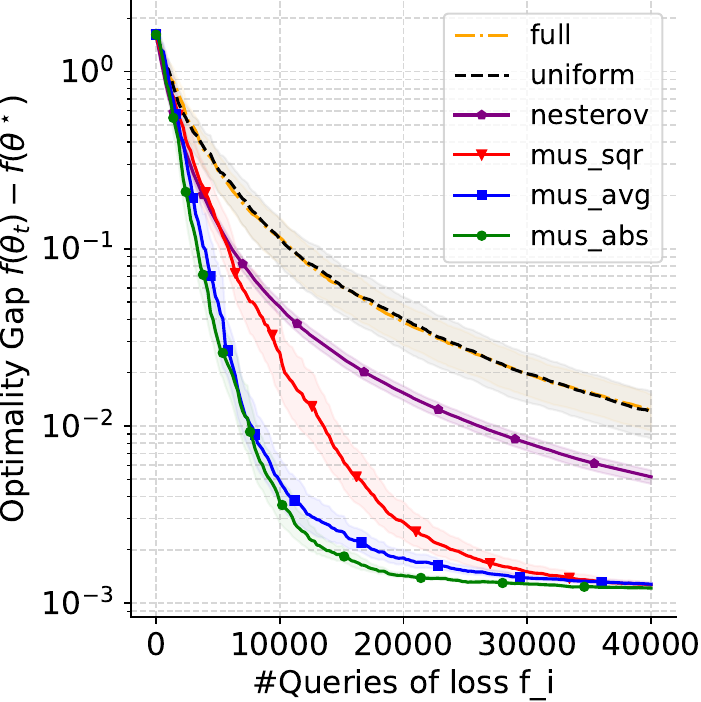}\label{ridge_exp_n5000_p100}}
  \subfigure[$n=5000, p=200$]{
  \includegraphics[scale=0.26]{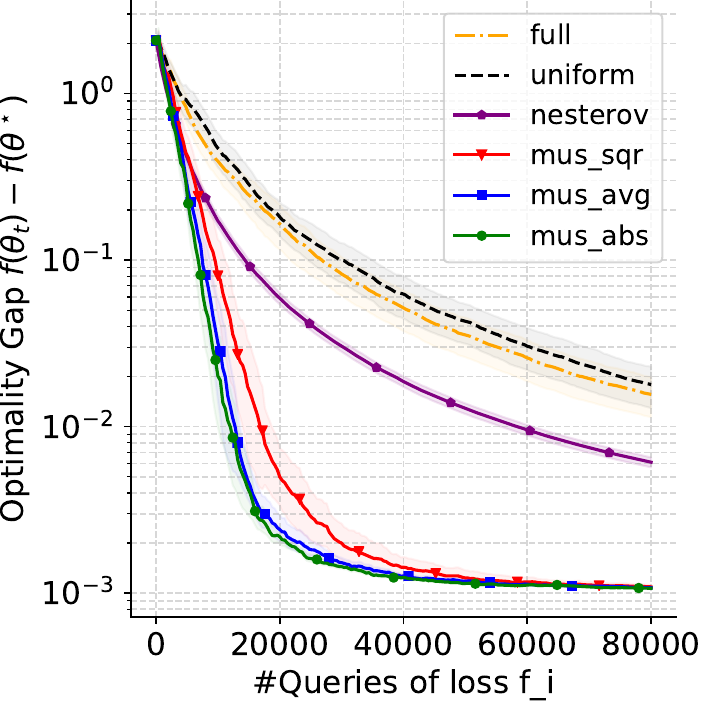}\label{ridge_exp_n5000_p200}}
  \caption{$[f(\theta_t)-f^\star]$ for Ridge Regression with $n=5000$ and $p=20,50,100,200$}
\label{fig:ridge_exp_n5000}
\end{figure}

\newpage
\subsection{Logistic Regression ($\ell_1$-reweighting) with different settings of $(n,p)$} \label{subsec:zo_log}
We consider the $\ell_2$-Logistic regression problem with the classical regularization parameter value $\mu=1/n$ and run several experiments in various settings of $(n,p)$. We endow the data matrix $X$ with a block structure. The columns are drawn as $X[:,kB+1:kB+B] \sim \mathcal{N}(0,\sigma_k^2 I_n)$ with $\sigma_k^2 = k^{-\alpha}$ for all $k\in \llbracket 1,(p/B)-1 \rrbracket$. The parameter $B$ is the block-size and is set to $B=5$ for the Logistic regression. The parameter $\alpha$ represents the block structure and is set to $\alpha=5$. The different Figures below present the evolution of the optimality gap $t \mapsto[f(\theta_t)-f^\star]$ averaged over $20$ independent runs. The learning rates is the same for all methods, fixed to $\gamma_k = 10/(k+5)$. The different settings are: number of samples $n \in \{1,000;2,000;5,000\}$ and dimension $p \in \{20;50;100;200\}$. We use the $\ell_1$ normalization in Equation \eqref{eq:probas_update} with $\lambda_n = 1/\log(n)$.
\begin{figure}[h]
  \centering
  \subfigure[$n=1000, p=20$]{
  \includegraphics[scale=0.26]{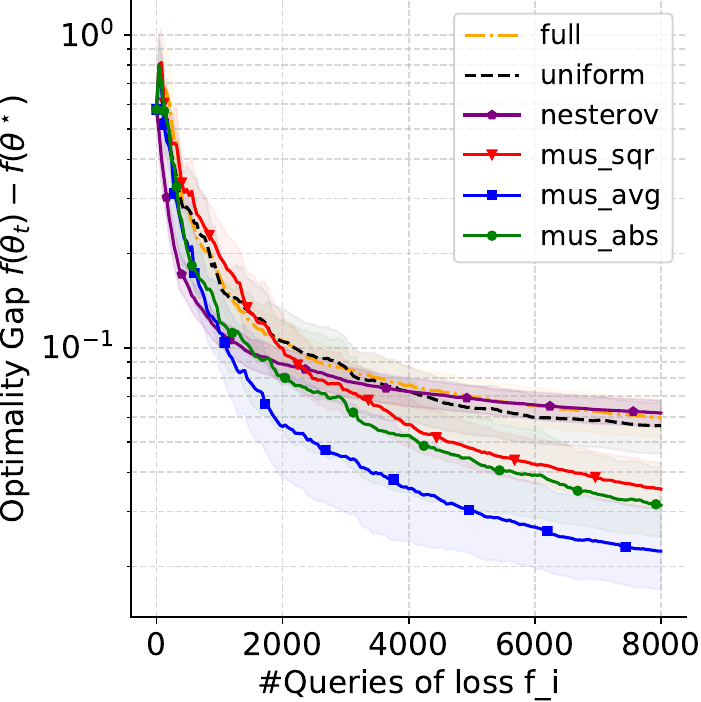}
  \label{fig:logistic_n1000_p20}}
  \subfigure[$n=1000, p=50$]{
  \includegraphics[scale=0.26]{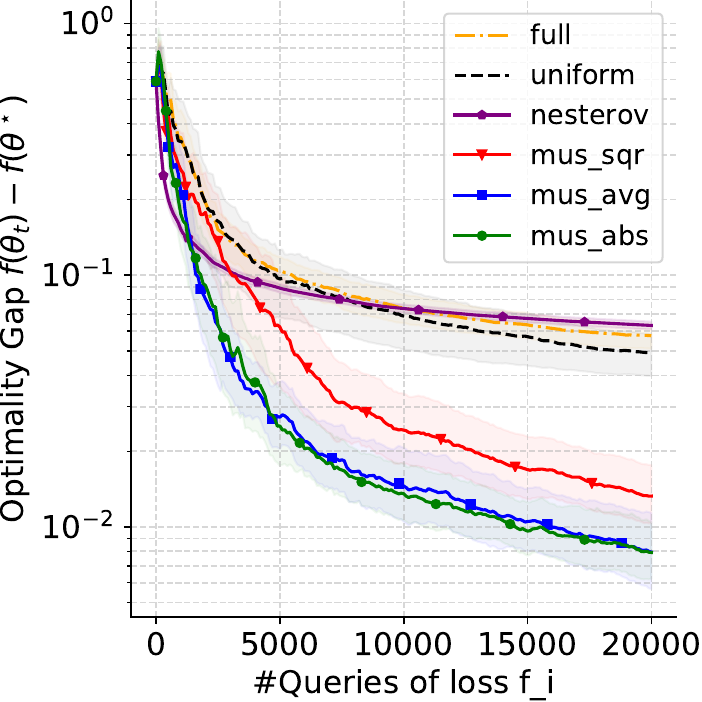}\label{fig:logistic_n1000_p50}}
  \subfigure[$n=1000, p=100$]{
  \includegraphics[scale=0.26]{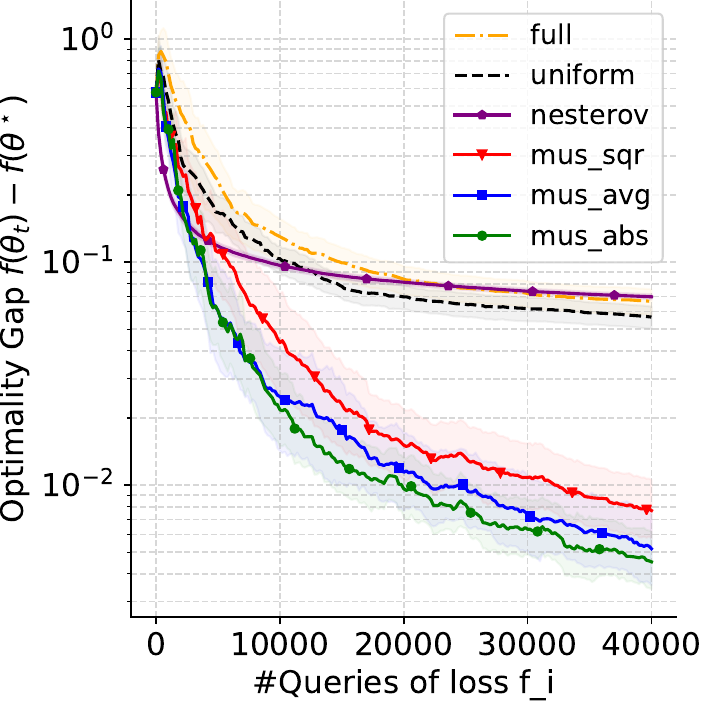}\label{logistic_n1000_p100}}
  \subfigure[$n=1000, p=200$]{
  \includegraphics[scale=0.26]{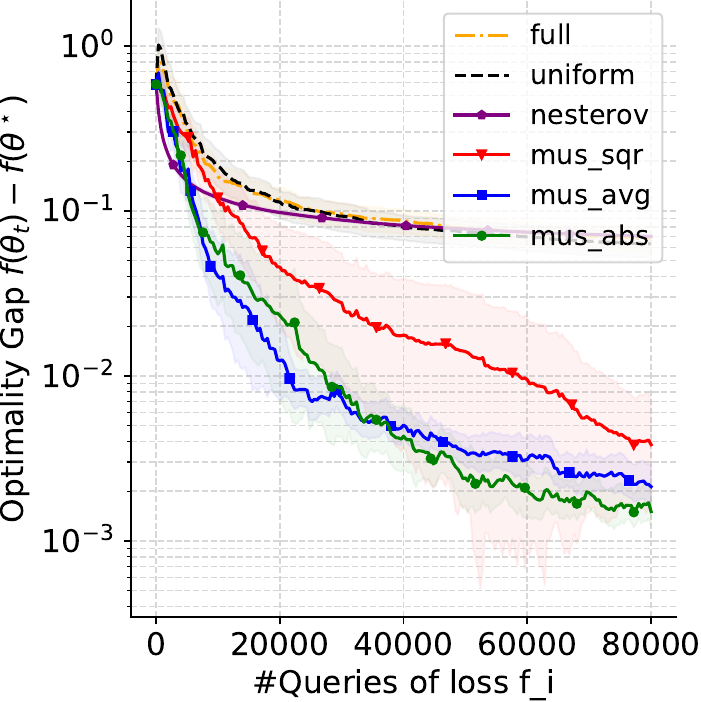}\label{logistic_n1000_p200}}
  \caption{$[f(\theta_t)-f^\star]$ for logistic Regression with $n=1000$ and $p=20,50,100,200$}
\label{fig:logistic_n1000}

  \centering
  \subfigure[$n=2000, p=20$]{
  \includegraphics[scale=0.26]{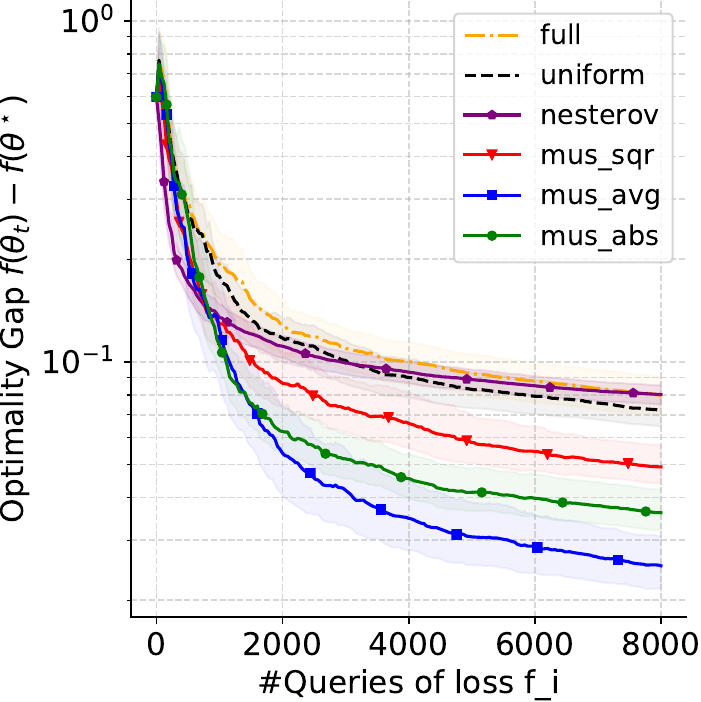}
  \label{fig:logistic_n2000_p20}}
  \subfigure[$n=2000, p=50$]{
  \includegraphics[scale=0.26]{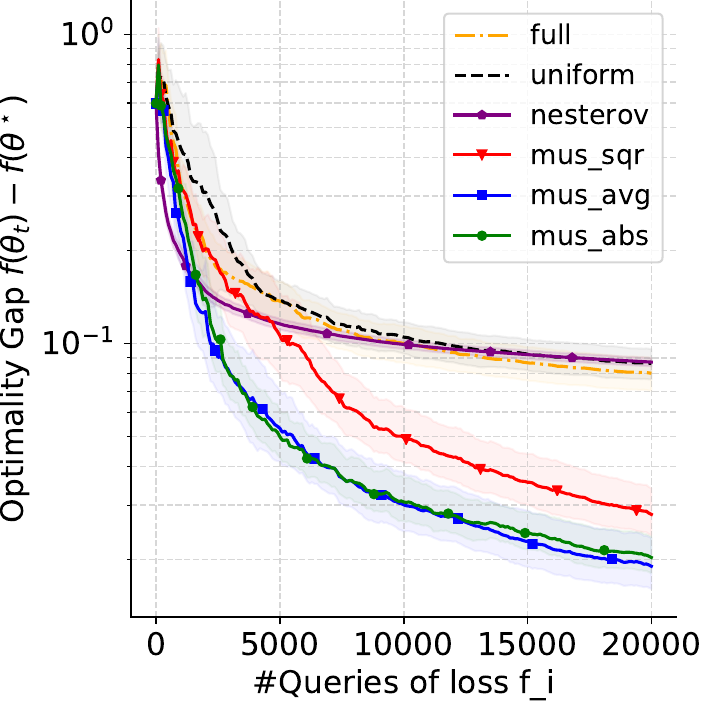}\label{fig:logistic_n2000_p50}}
  \subfigure[$n=2000, p=100$]{
  \includegraphics[scale=0.26]{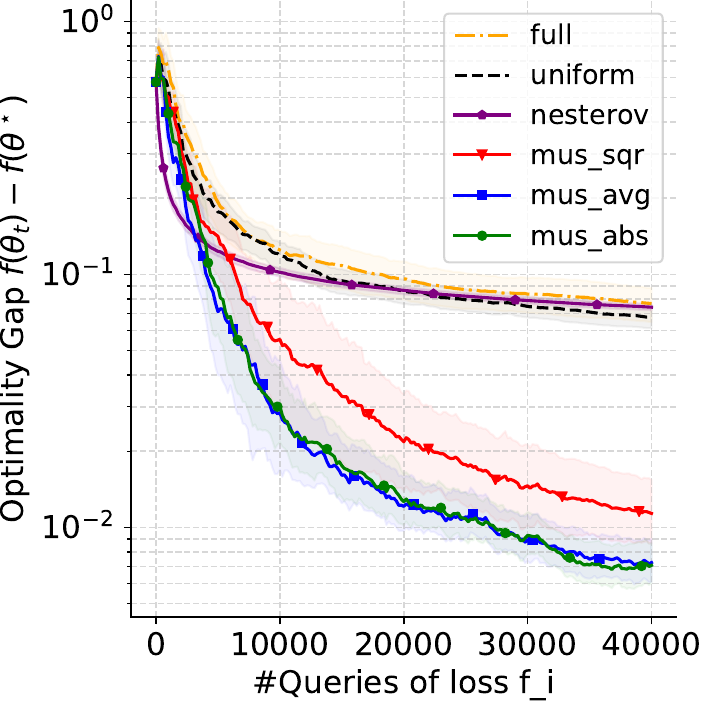}\label{logistic_n2000_p100}}
  \subfigure[$n=2000, p=200$]{
  \includegraphics[scale=0.26]{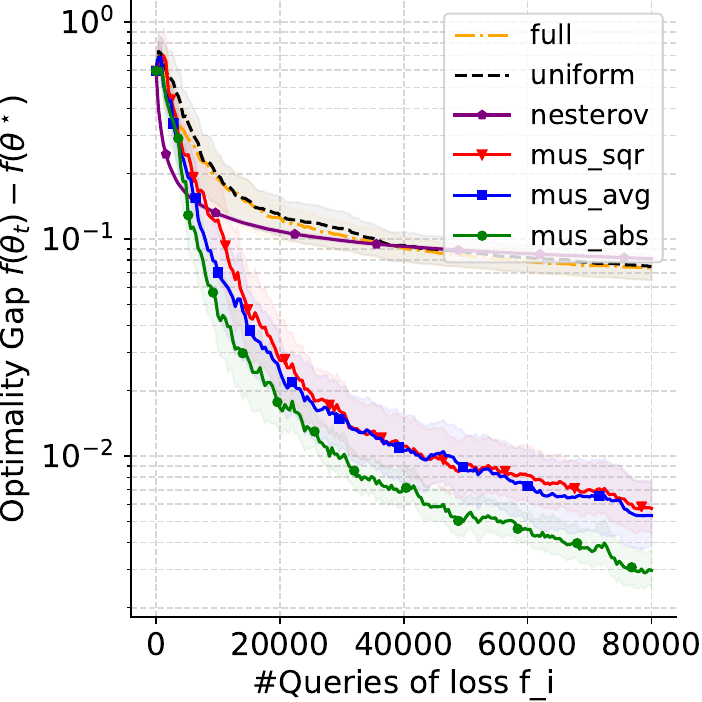}\label{logistic_n2000_p200}}
  \caption{ $[f(\theta_t)-f^\star]$ for logistic Regression with $n=2000$ and $p=20,50,100,200$}
\label{fig:logistic_n2000}

  \centering
  \subfigure[$n=5000, p=20$]{
  \includegraphics[scale=0.26]{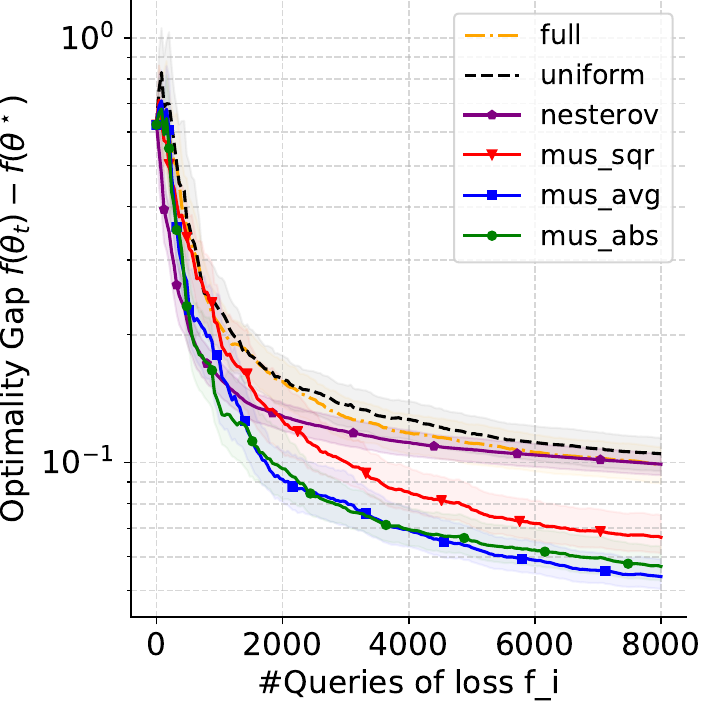}
  \label{fig:logistic_n5000_p20}}
  \subfigure[$n=5000, p=50$]{
  \includegraphics[scale=0.26]{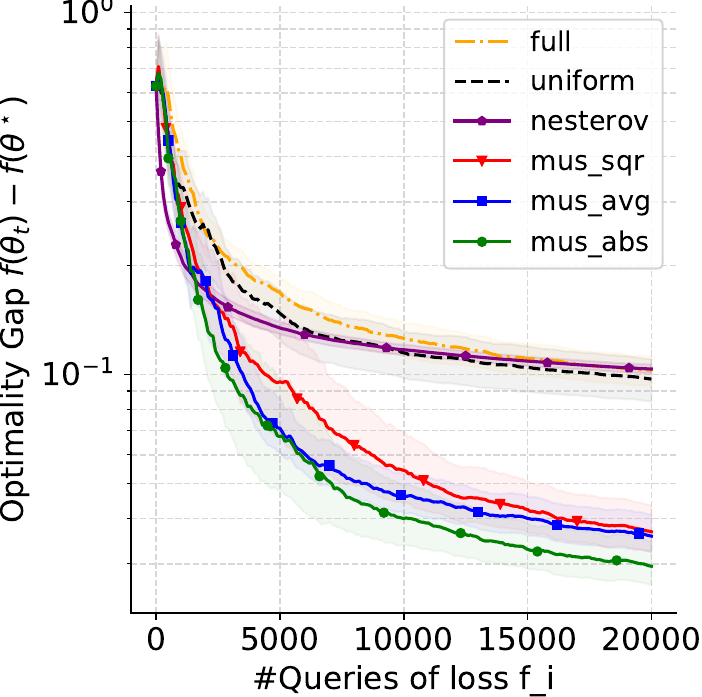}\label{fig:logistic_n5000_p50}}
  \subfigure[$n=5000, p=100$]{
  \includegraphics[scale=0.26]{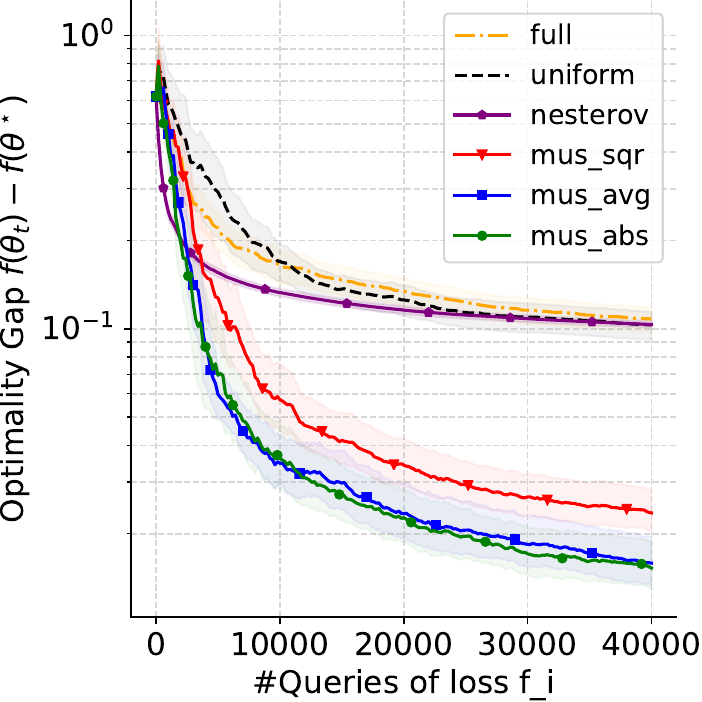}\label{logistic_n5000_p100}}
  \subfigure[$n=5000, p=200$]{
  \includegraphics[scale=0.26]{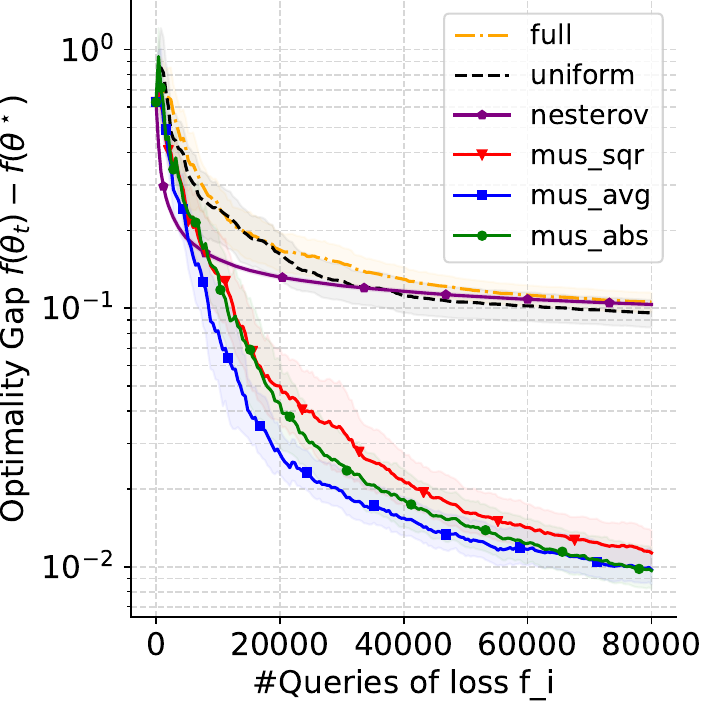}\label{logistic_n5000_p200}}
  \caption{$[f(\theta_t)-f^\star]$ for logistic Regression with $n=5000$ and $p=20,50,100,200$}
\label{fig:logistic_n5000}
\end{figure}
\newpage
\subsection{Logistic Regression (softmax reweighting) with different settings of $(n,p)$} \label{subsec:zo_log_soft}
We consider the $\ell_2$-Logistic regression problem with the classical regularization parameter value $\mu=1/n$ and run several experiments in various settings of $(n,p)$. We endow the data matrix $X$ with a block structure. The columns are drawn as $X[:,kB+1:kB+B] \sim \mathcal{N}(0,\sigma_k^2 I_n)$ with $\sigma_k^2 = k^{-\alpha}$ for all $k\in \llbracket 1,(p/B)-1 \rrbracket$. The parameter $B$ is the block-size and is set to $B=5$ for the Logistic regression. The parameter $\alpha$ represents the block structure and is set to $\alpha=5$. The different Figures below present the evolution of the optimality gap $t \mapsto[f(\theta_t)-f^\star]$ averaged over $20$ independent runs. The learning rates is the same for all methods, fixed to $\gamma_k = 10/(k+5)$. The different settings are: number of samples $n \in \{1,000;2,000;5,000\}$ and dimension $p \in \{20;50;100;200\}$. We use the softmax 
normalization in Equation \eqref{eq:probas_update} with $\lambda_n \equiv 0.5$ and $\eta=1$.
\begin{figure}[h]
  \centering
  \subfigure[$n=1000, p=20$]{
  \includegraphics[scale=0.26]{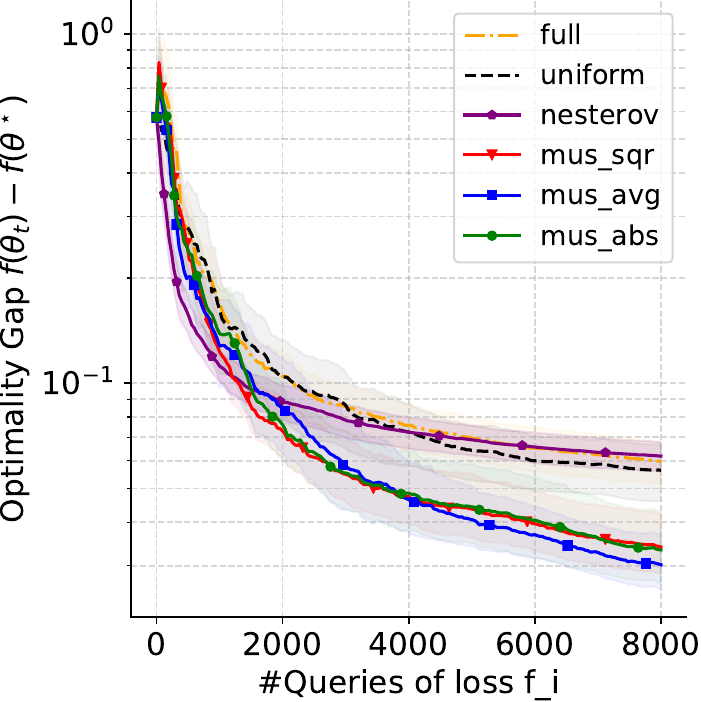}
  \label{fig:logistic_exp_n1000_p20}}
  \subfigure[$n=1000, p=50$]{
  \includegraphics[scale=0.26]{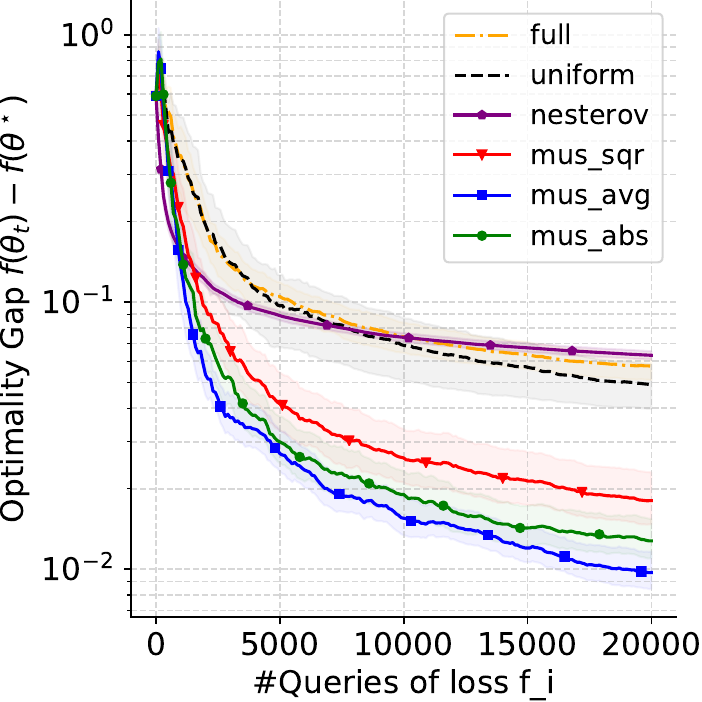}\label{fig:logistic_exp_n1000_p50}}
  \subfigure[$n=1000, p=100$]{
  \includegraphics[scale=0.26]{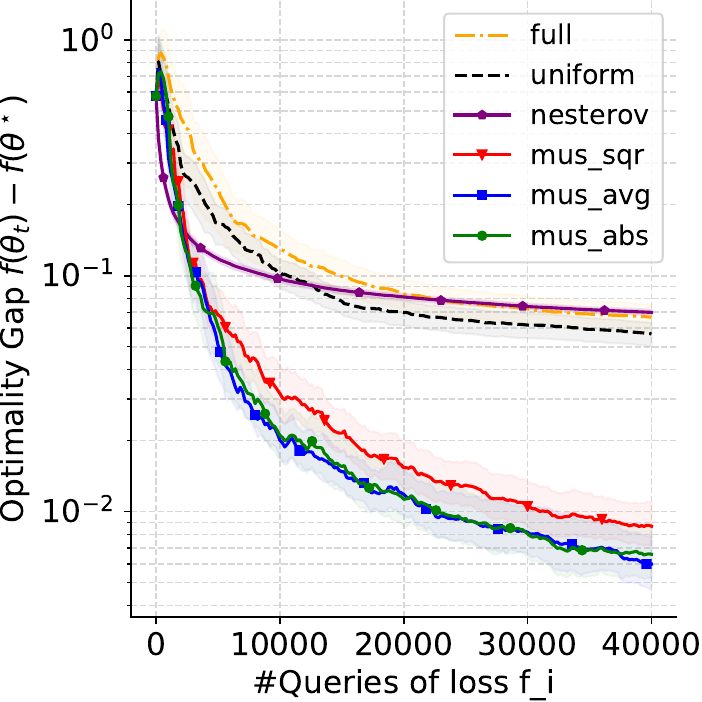}\label{logistic_exp_n1000_p100}}
  \subfigure[$n=1000, p=200$]{
  \includegraphics[scale=0.26]{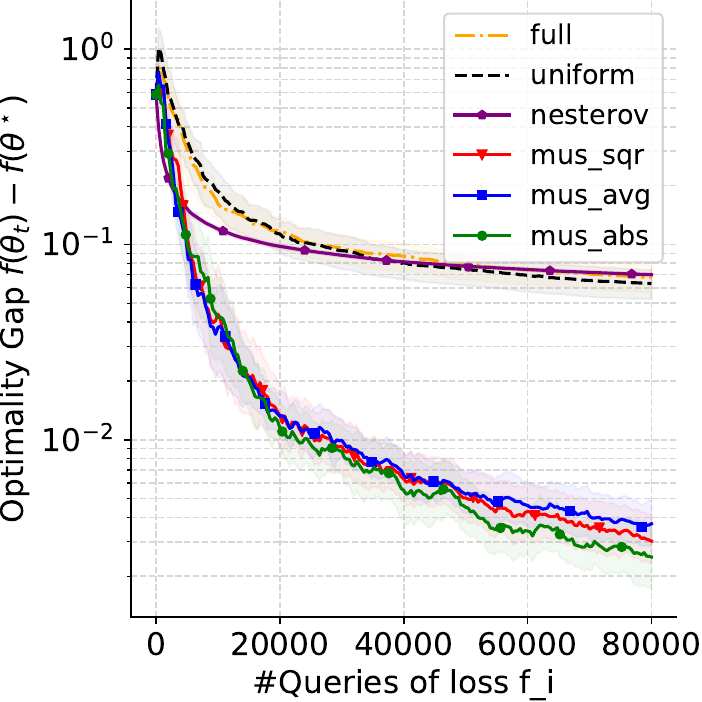}\label{logistic_exp_n1000_p200}}
  \caption{$[f(\theta_t)-f^\star]$ for logistic Regression with $n=1000$ and $p=20,50,100,200$}
\label{fig:logistic_exp_n1000}

  \centering
  \subfigure[$n=2000, p=20$]{
  \includegraphics[scale=0.26]{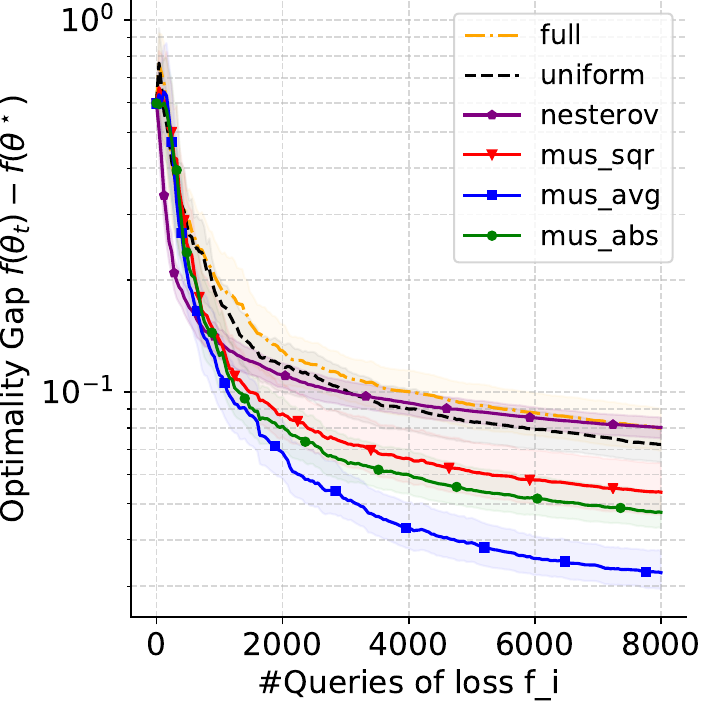}
  \label{fig:logistic_exp_n2000_p20}}
  \subfigure[$n=2000, p=50$]{
  \includegraphics[scale=0.26]{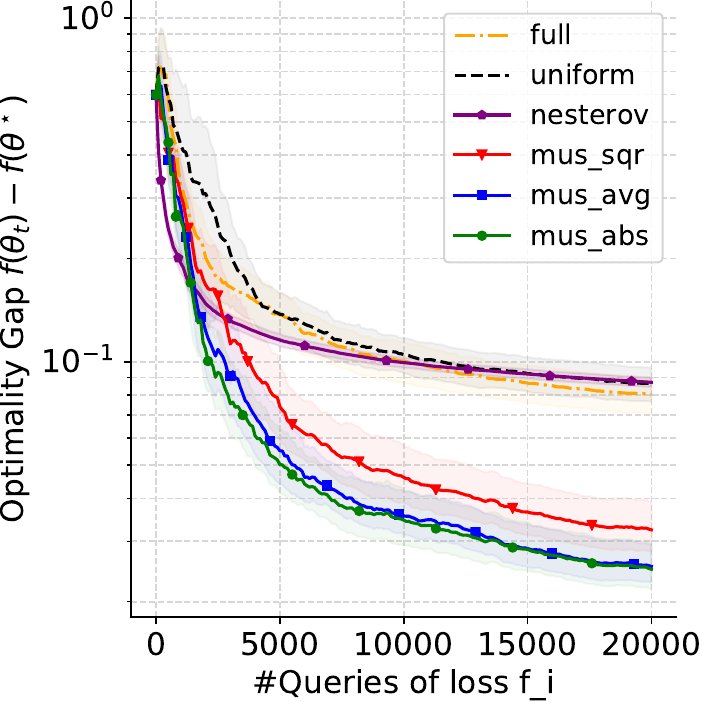}\label{fig:logistic_exp_n2000_p50}}
  \subfigure[$n=2000, p=100$]{
  \includegraphics[scale=0.26]{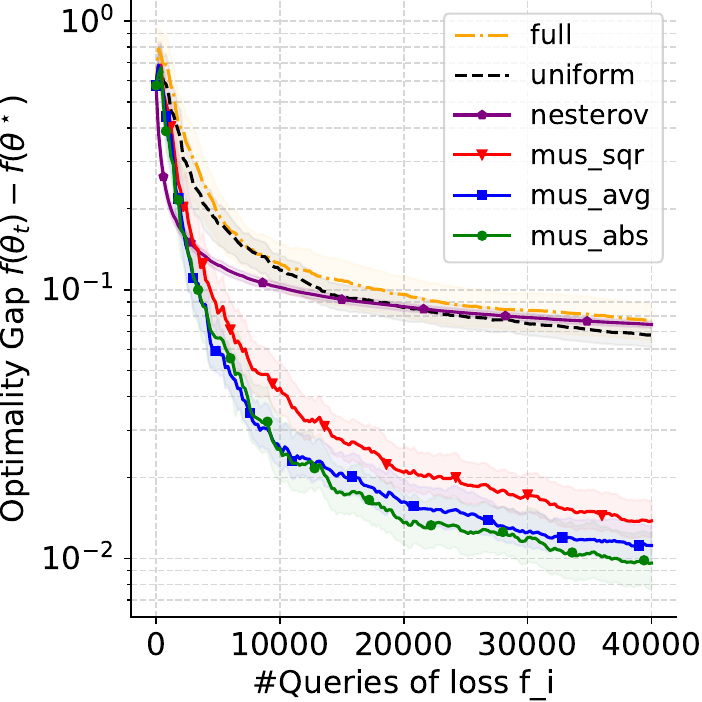}\label{logistic_exp_n2000_p100}}
  \subfigure[$n=2000, p=200$]{
  \includegraphics[scale=0.26]{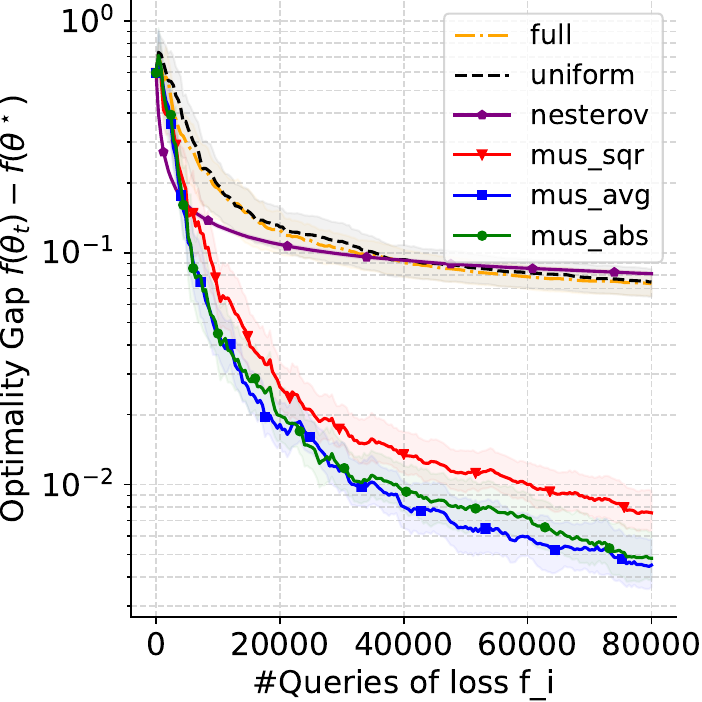}\label{logistic_exp_n2000_p200}}
  \caption{ $[f(\theta_t)-f^\star]$ for logistic Regression with $n=2000$ and $p=20,50,100,200$}
\label{fig:logistic_exp_n2000}

  \centering
  \subfigure[$n=5000, p=20$]{
  \includegraphics[scale=0.26]{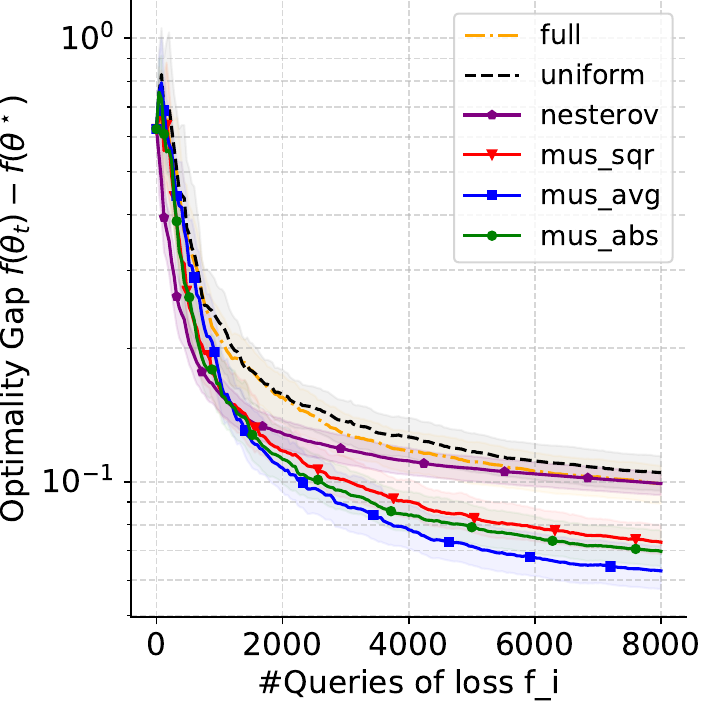}
  \label{fig:logistic_exp_n5000_p20}}
  \subfigure[$n=5000, p=50$]{
  \includegraphics[scale=0.26]{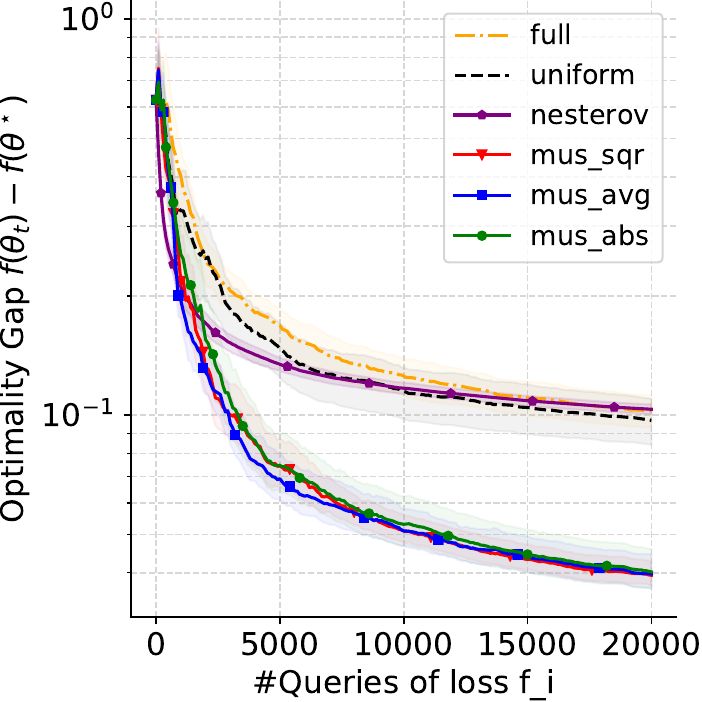}\label{fig:logistic_exp_n5000_p50}}
  \subfigure[$n=5000, p=100$]{
  \includegraphics[scale=0.26]{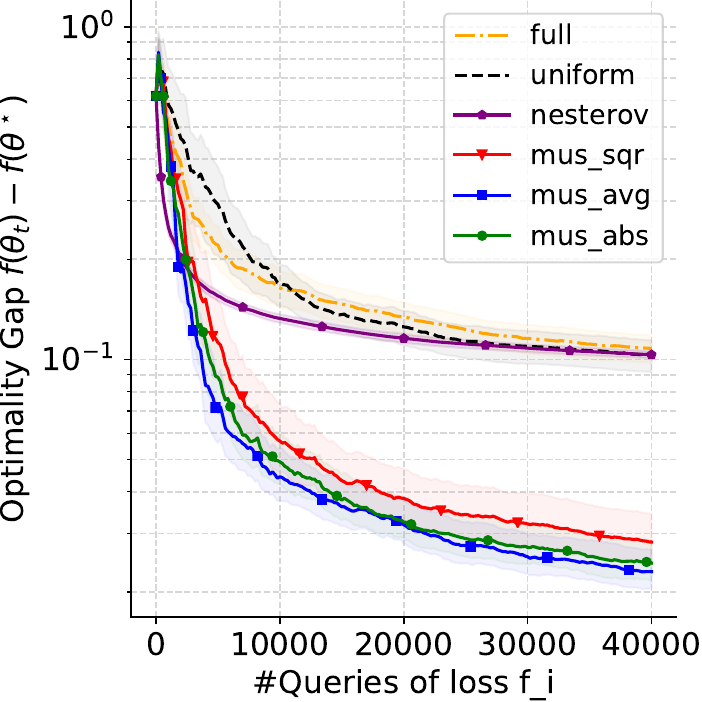}\label{logistic_exp_n5000_p100}}
  \subfigure[$n=5000, p=200$]{
  \includegraphics[scale=0.26]{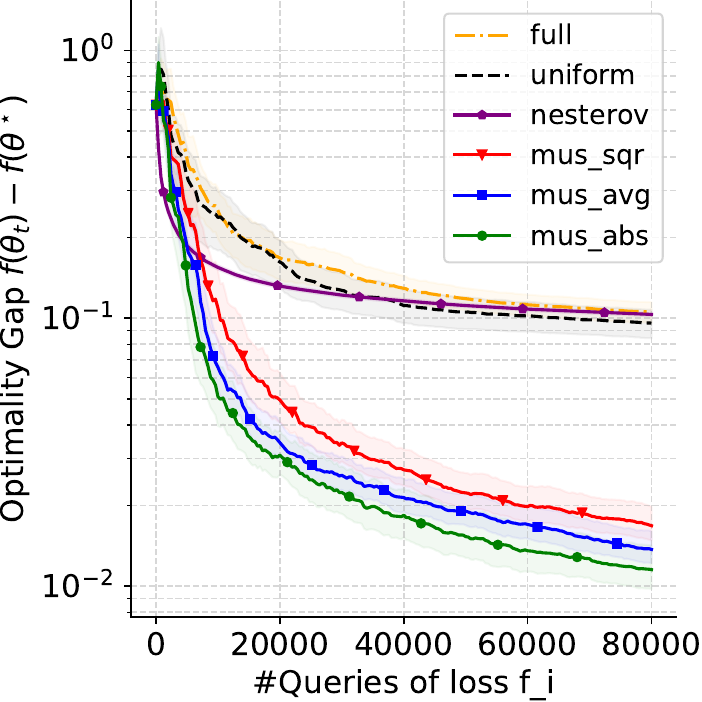}\label{logistic_exp_n5000_p200}}
  \caption{$[f(\theta_t)-f^\star]$ for logistic Regression with $n=5000$ and $p=20,50,100,200$}
\label{fig:logistic_exp_n5000}
\end{figure}
\newpage
\subsection{Effect of Importance Sampling (IS) on Ridge Regression}
We consider the same setting as in Subsection \ref{subsec:zo_ridge} and study the effect of using importance sampling weights in the update rule of MUSKETEER. Indeed, MUSKETEER update rule is defined with the following biased gradient estimate $\theta_{t+1} = \theta_{t} - \gamma_{t+1}D(\zeta_{t+1})g_t$ and the importance sampling (IS) strategy consists in adding $D_t^{-1}$ to reach an unbiased estimate
\begin{align*}
\theta_{t+1} = \theta_{t} - \gamma_{t+1}D_{t}^{-1}D(\zeta_{t+1})g_t.
\end{align*}
For the different configurations, we compare the MUSKETEER methods with their importance sampling counterparts. The Figures below show that the importance sampling methods perform similarly to the uniform coordinate sampling strategy and are therefore sub-optimal.
\begin{figure}[h]
  \centering
  \subfigure[$n=1000,p=50$]{
  \includegraphics[scale=0.26]{graph/graphs_zo_appendix/zo_ridge_n1000_p50.pdf}}
   \includegraphics[scale=0.26]{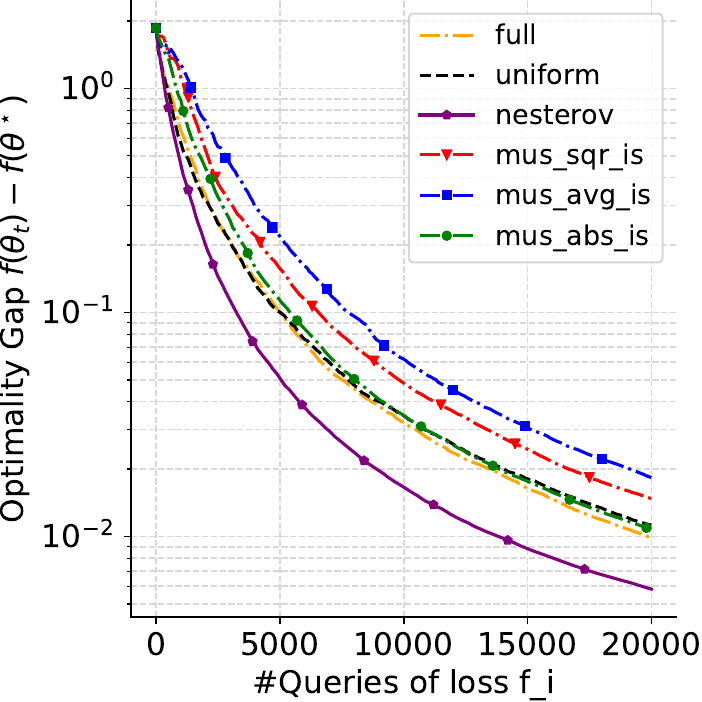}
  \subfigure[$n=1000, p=200$]{
  \includegraphics[scale=0.26]{graph/graphs_zo_appendix/zo_ridge_n1000_p200.pdf}}
  \includegraphics[scale=0.26]{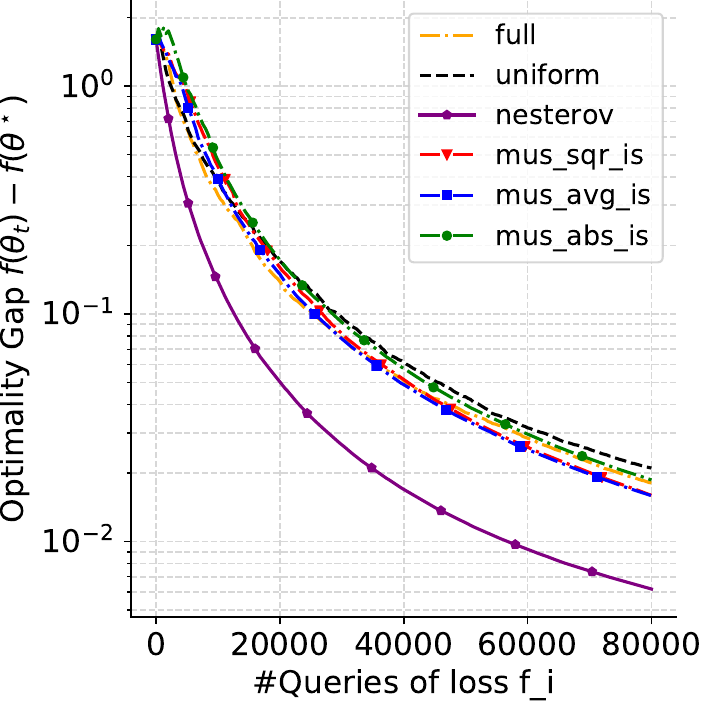}
  \caption{$[f(\theta_t)-f^\star]$ for Ridge Regression with $n=1000$ and $p=50,200$}
  
  \centering
  \subfigure[$n=2000, p=50$]{
  \includegraphics[scale=0.26]{graph/graphs_zo_appendix/zo_ridge_n2000_p50.pdf}
\includegraphics[scale=0.26]{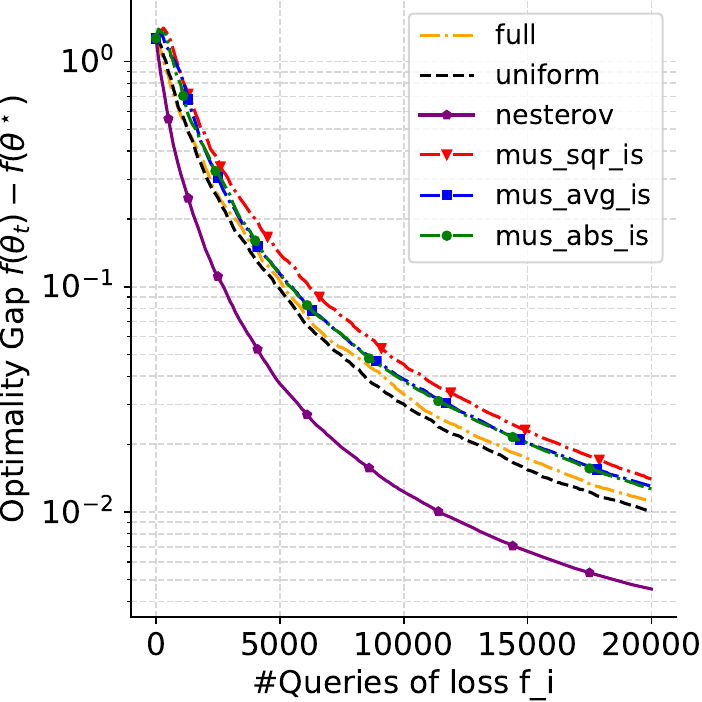}}
  \subfigure[$n=2000, p=200$]{
  \includegraphics[scale=0.26]{graph/graphs_zo_appendix/zo_ridge_n2000_p200.pdf}
  \includegraphics[scale=0.26]{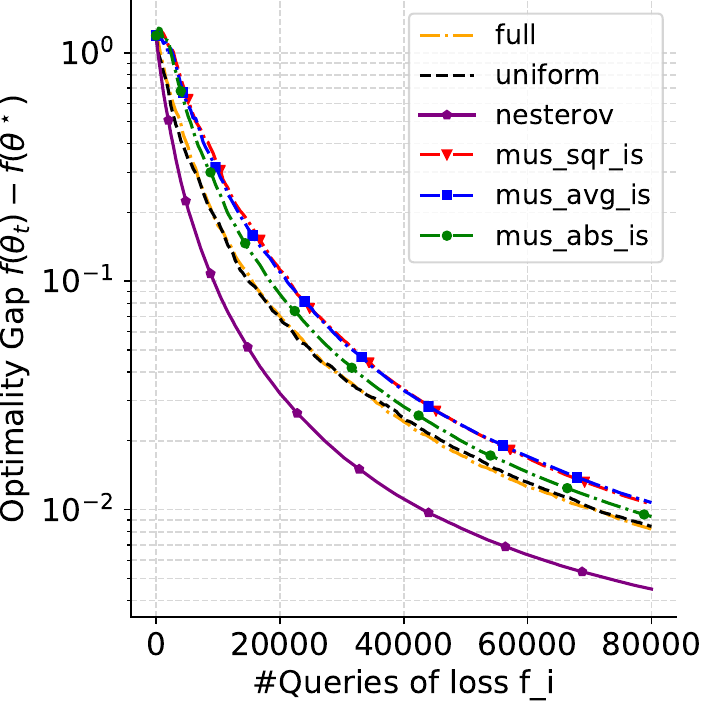}}
  \caption{$[f(\theta_t)-f^\star]$ for Ridge Regression with $n=2000$ and $p=50,200$}

  \centering

  \subfigure[$n=5000, p=50$]{
  \includegraphics[scale=0.26]{graph/graphs_zo_appendix/zo_ridge_n5000_p50.pdf}
  \includegraphics[scale=0.26]{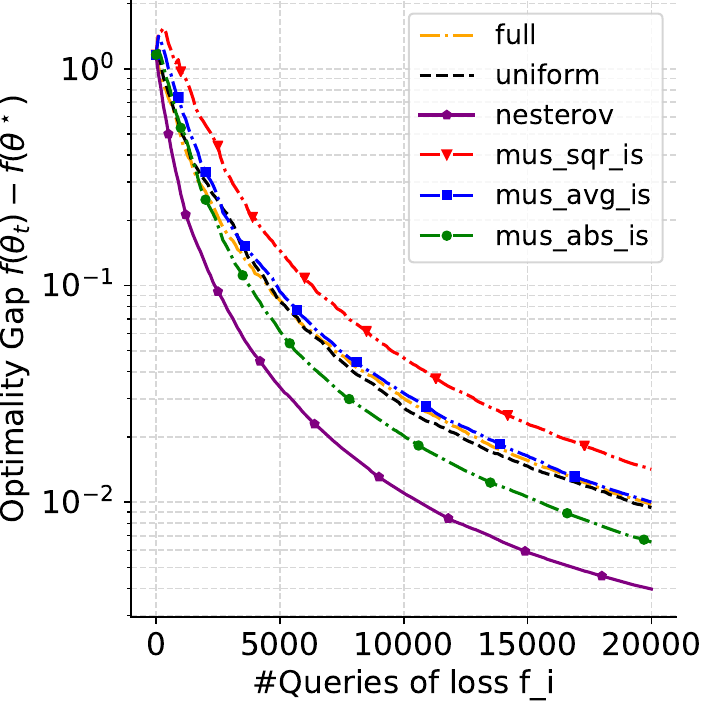}}
  \subfigure[$n=5000, p=200$]{
  \includegraphics[scale=0.26]{graph/graphs_zo_appendix/zo_ridge_n5000_p200.pdf}
  \includegraphics[scale=0.26]{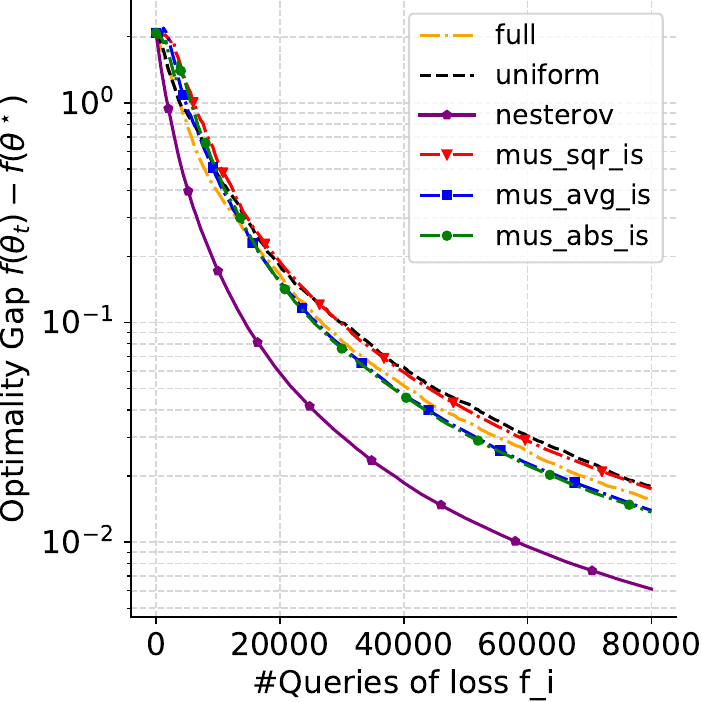}}
  \caption{$[f(\theta_t)-f^\star]$ for Ridge Regression with $n=5000$ and $p=50,200$}

\end{figure}

\newpage
\subsection{Effect of Importance Sampling (IS) on Logistic Regression}
We consider the same setting as in Subsection \ref{subsec:zo_log} and study the effect of using importance sampling weights in the update rule of MUSKETEER. Indeed, MUSKETEER update rule is defined with the following biased gradient estimate $\theta_{t+1} = \theta_{t} - \gamma_{t+1}D(\zeta_{t+1})g_t$ and the importance sampling (IS) strategy consists in adding $D_t^{-1}$ to reach an unbiased estimate
\begin{align*}
\theta_{t+1} = \theta_{t} - \gamma_{t+1}D_{t}^{-1}D(\zeta_{t+1})g_t.
\end{align*}
For the different configurations, we compare the MUSKETEER methods with their importance sampling counterparts. The Figures below show that the importance sampling methods perform similarly to the uniform coordinate sampling strategy and are therefore sub-optimal.
\begin{figure}[h]
  \centering
  \subfigure[$n=1000,p=50$]{
  \includegraphics[scale=0.26]{graph/graphs_zo_appendix/zo_log_n1000_p50.pdf}}
   \includegraphics[scale=0.26]{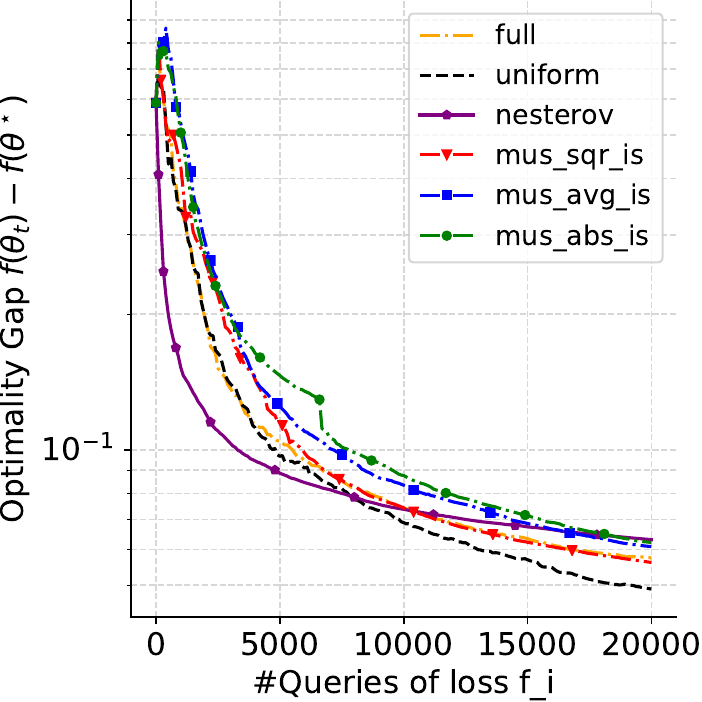}
  \subfigure[$n=1000, p=200$]{
  \includegraphics[scale=0.26]{graph/graphs_zo_appendix/zo_log_n1000_p200.pdf}}
  \includegraphics[scale=0.26]{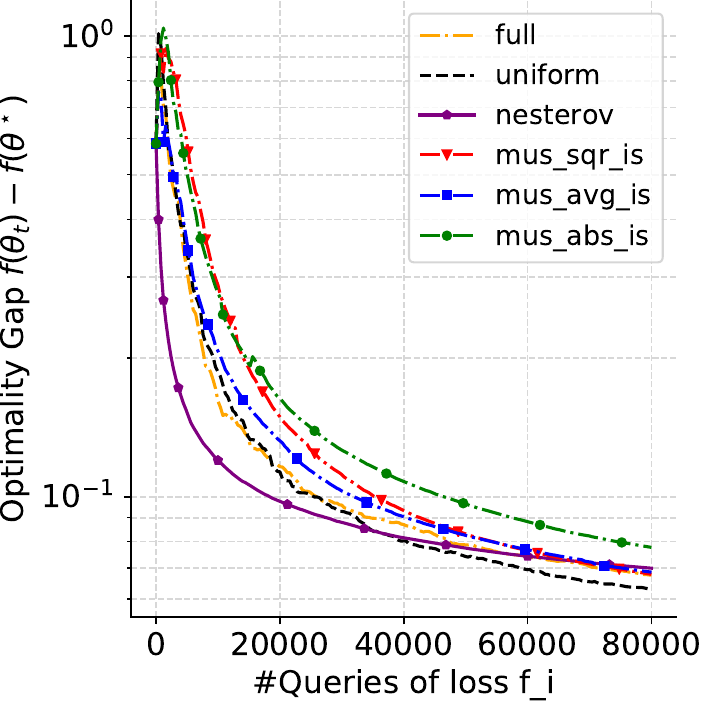}
  \caption{$[f(\theta_t)-f^\star]$ for Logistic Regression with $n=1000$ and $p=50,200$}
  
  \centering
  \subfigure[$n=2000, p=50$]{
  \includegraphics[scale=0.26]{graph/graphs_zo_appendix/zo_log_n2000_p50.pdf}
\includegraphics[scale=0.26]{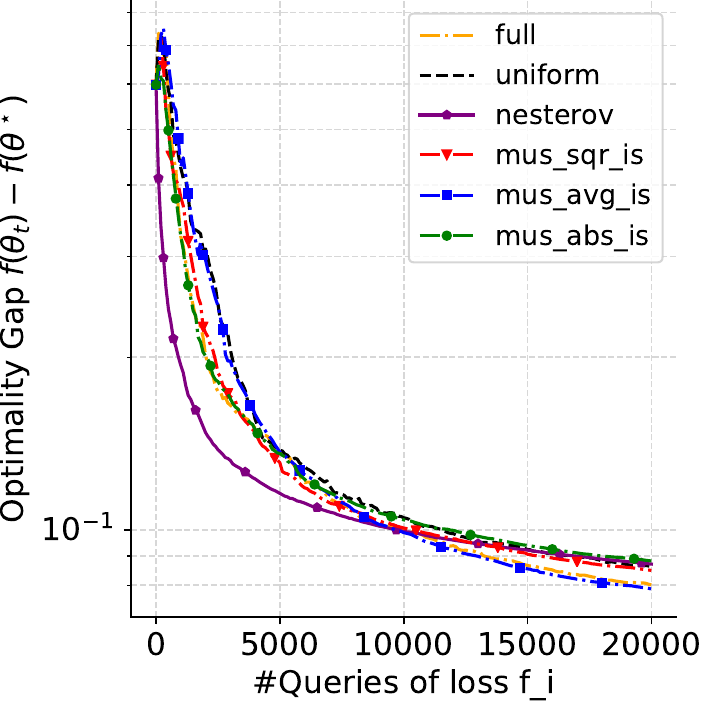}}
  \subfigure[$n=2000, p=200$]{
  \includegraphics[scale=0.26]{graph/graphs_zo_appendix/zo_log_n2000_p200.pdf}
  \includegraphics[scale=0.26]{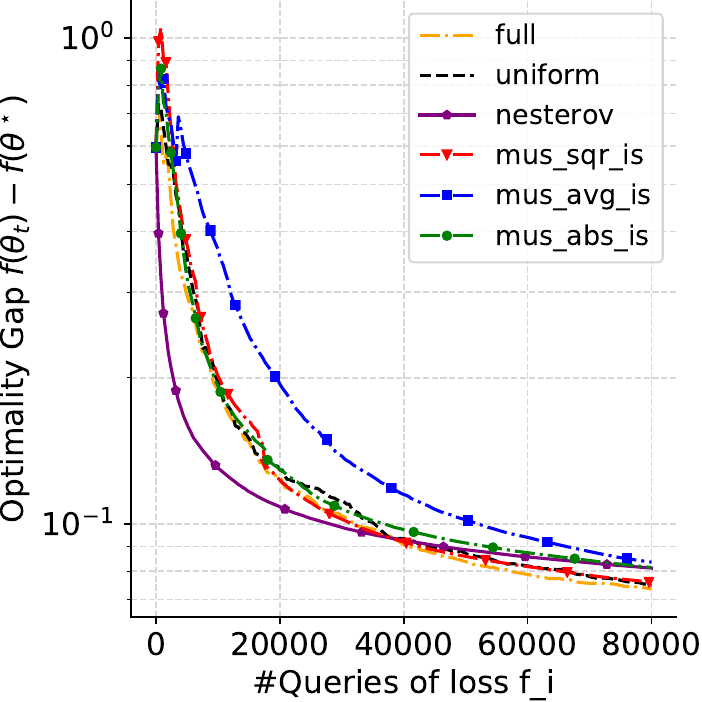}}
  \caption{$[f(\theta_t)-f^\star]$ for Logistic Regression with $n=2000$ and $p=50,200$}

  \centering

  \subfigure[$n=5000, p=50$]{
  \includegraphics[scale=0.26]{graph/graphs_zo_appendix/zo_log_n5000_p50.pdf}
  \includegraphics[scale=0.26]{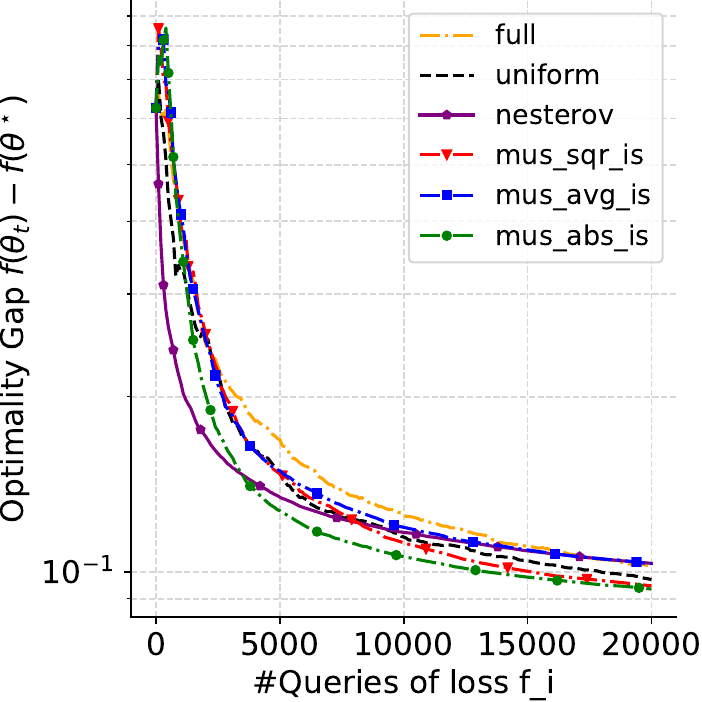}}
  \subfigure[$n=5000, p=200$]{
  \includegraphics[scale=0.26]{graph/graphs_zo_appendix/zo_log_n5000_p200.pdf}
  \includegraphics[scale=0.26]{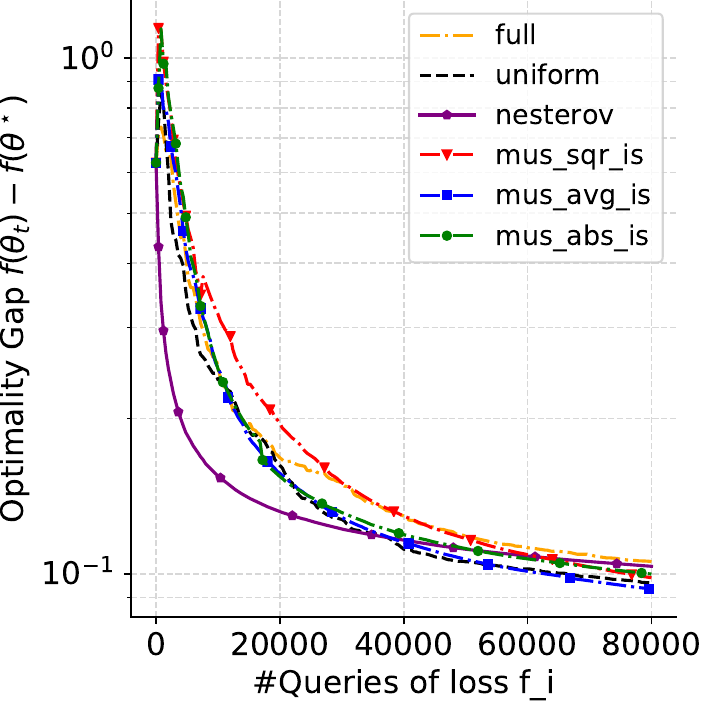}}
  \caption{$[f(\theta_t)-f^\star]$ for Logistic Regression with $n=5000$ and $p=50,200$}

\end{figure}

\newpage
\section{Further Experiments with stochastic first order methods} \label{sec:more_num}

\subsection{Comparing learning rates}

This section investigates the effect of different learning rates $\gamma_k = \gamma/k$ with $\gamma \in \{0.5; 1; 1.5; 2\}$. It reveals a safe behavior of MUSKETEER as it performs better than the other methods in all configurations with a stronger difference when dealing with small values of $\gamma$. We consider the Ridge regression problem with regularization parameter $\mu=1/n$ and run several experiments in the setting $n=5,000$ samples and dimension $p \in \{20;100;200\}$. We endow the data matrix $X$ with a block structure. The columns are drawn as $X[:,k] \sim \mathcal{N}(0,\sigma_k^2 I_n)$ with $\sigma_k^2 = k^{-\alpha}$ for all $k\in \llbracket 1,p \rrbracket$. The parameter $\alpha$ of block structure is $\alpha=8$. The gradient estimate $g$ is computed using mini-batches of size $4$. The different Figures below present the evolution of the optimality gap $t \mapsto[f(\theta_t)-f^\star]$ averaged over $20$ independent runs for $N=100$ iterations with normalized passes over coordinates.

\begin{figure}[h]
  \centering
  \subfigure[$p=20, \gamma=0.5$]{
  \includegraphics[scale=0.26]{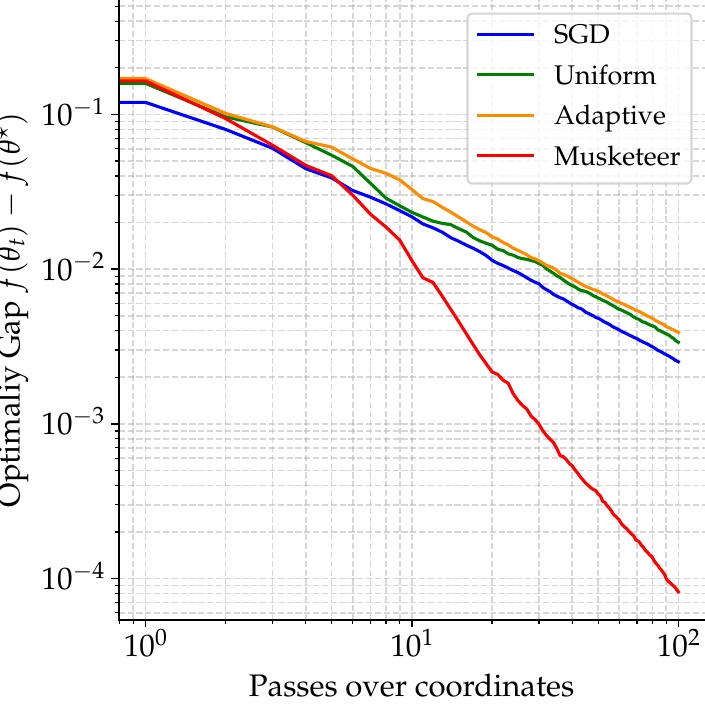}
  \label{fig:ridge_a05_p20}}
  \subfigure[$p=20, \gamma=1$]{
  \includegraphics[scale=0.26]{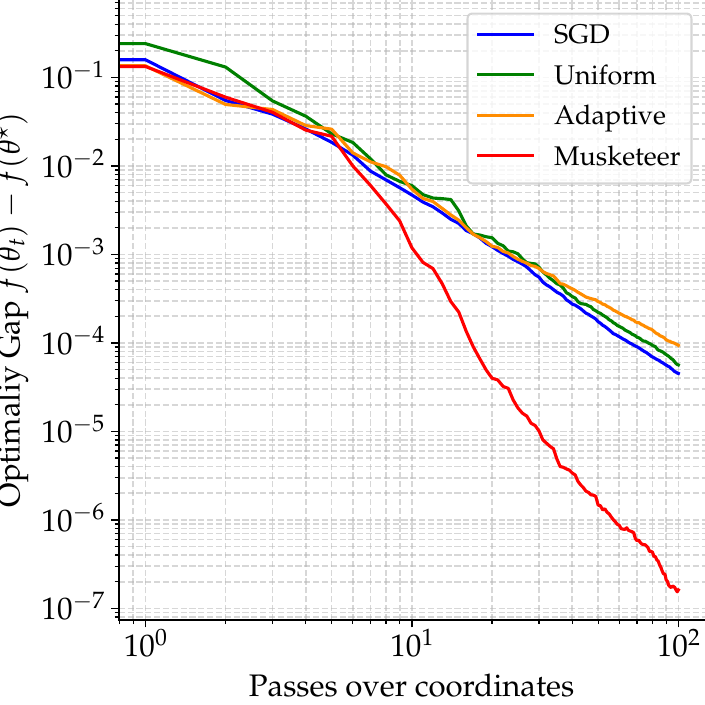}\label{fig:ridge_a1_p20}}
  \subfigure[$p=20, \gamma=1.5$]{
  \includegraphics[scale=0.26]{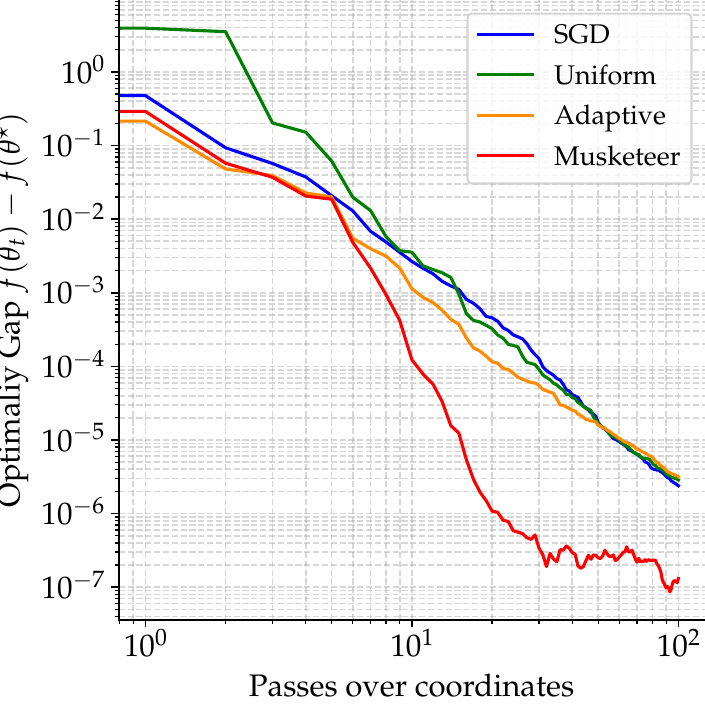}\label{ridge_a15_p20}}
  \subfigure[$p=20, \gamma=2$]{
  \includegraphics[scale=0.26]{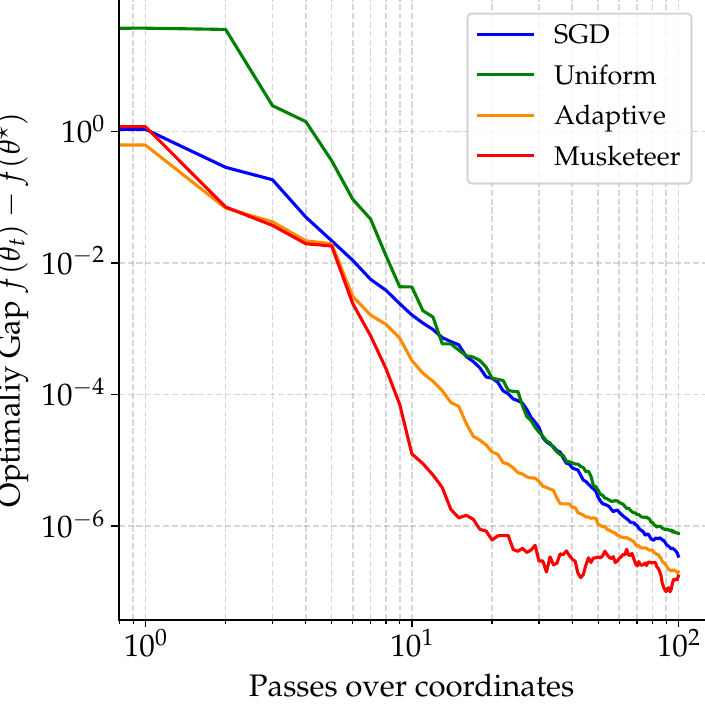}\label{ridge_a2_p20}}
  \caption{$[f(\theta_t)-f^\star]$ for Ridge Regression with $p=20$ and $\gamma \in \{0.5; 1; 1.5; 2\}$}
\label{fig:ridge_p20}

  \centering
  \subfigure[$p=100, \gamma=0.5$]{
  \includegraphics[scale=0.26]{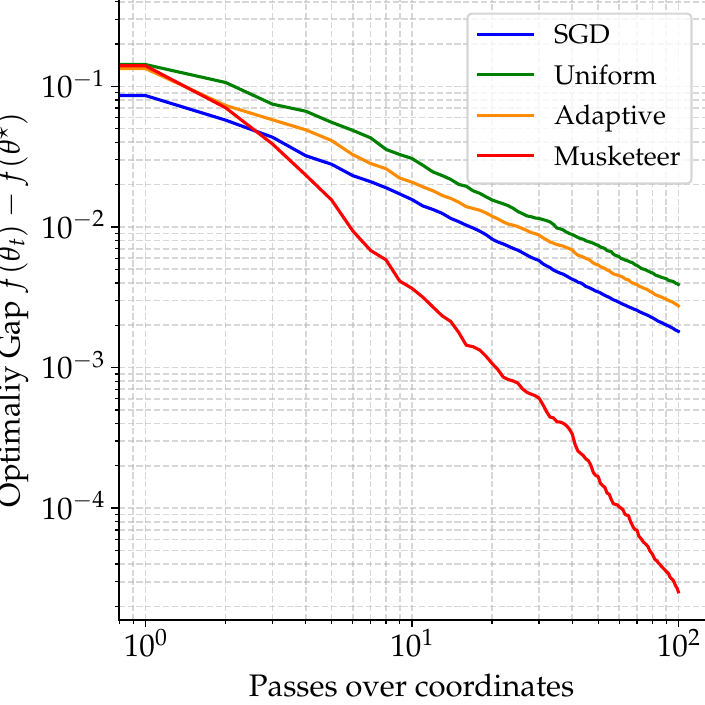}
  \label{fig:ridge_a05_p100}}
  \subfigure[$p=100, \gamma=1$]{
  \includegraphics[scale=0.26]{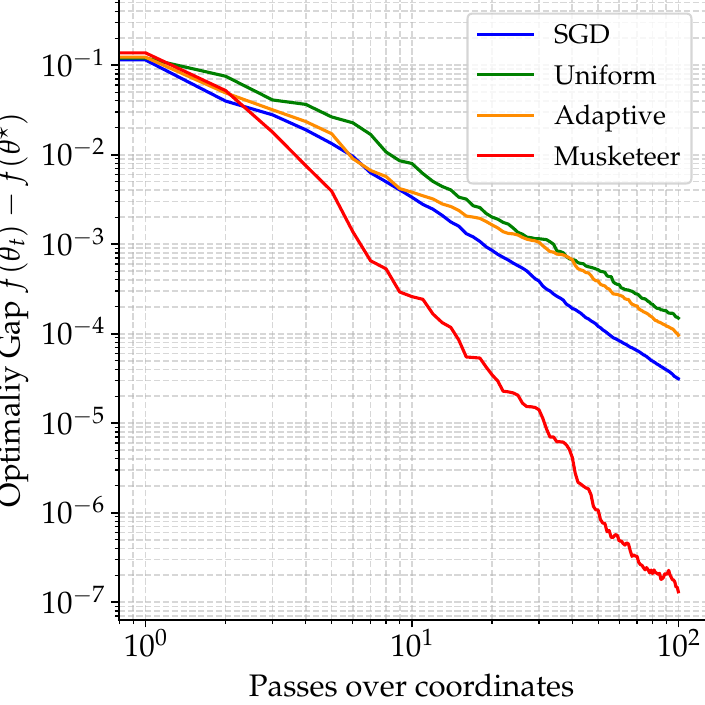}\label{fig:ridge_a1_p100}}
  \subfigure[$p=100, \gamma=1.5$]{
  \includegraphics[scale=0.26]{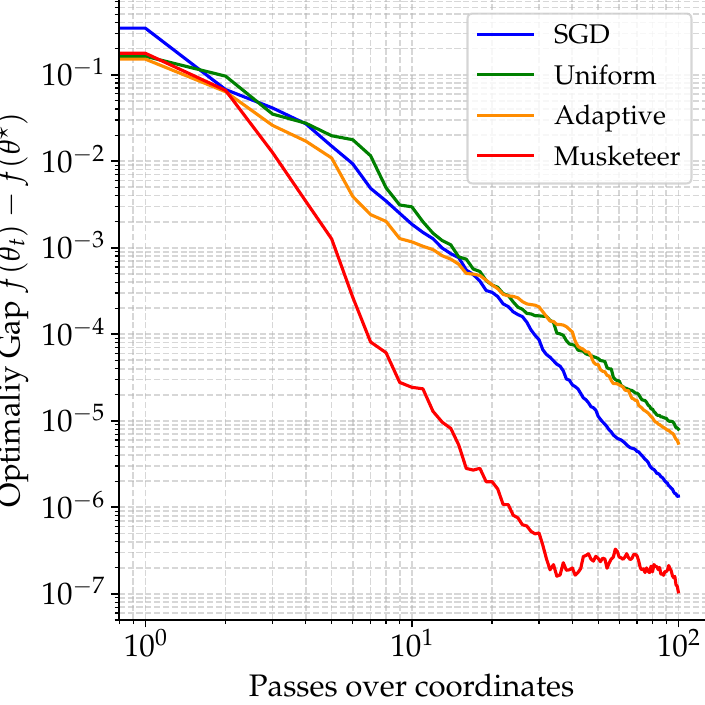}\label{ridge_a15_p100}}
  \subfigure[$p=100, \gamma=2$]{
  \includegraphics[scale=0.26]{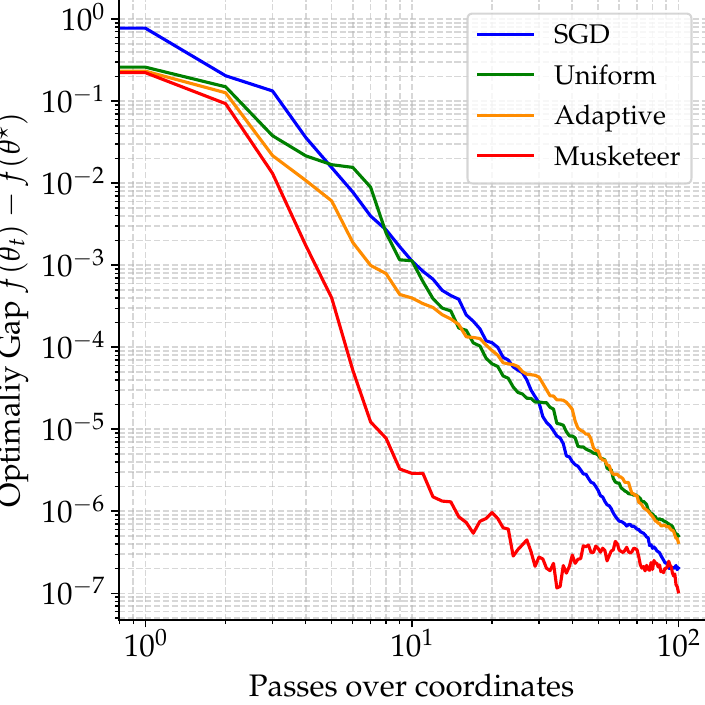}\label{ridge_a2_p100}}
  \caption{ $[f(\theta_t)-f^\star]$ for Ridge Regression with $p=100$ and $\gamma \in \{0.5; 1; 1.5; 2\}$}
\label{fig:ridge_p100}

  \centering
  \subfigure[$p=200, \gamma=0.5$]{
  \includegraphics[scale=0.26]{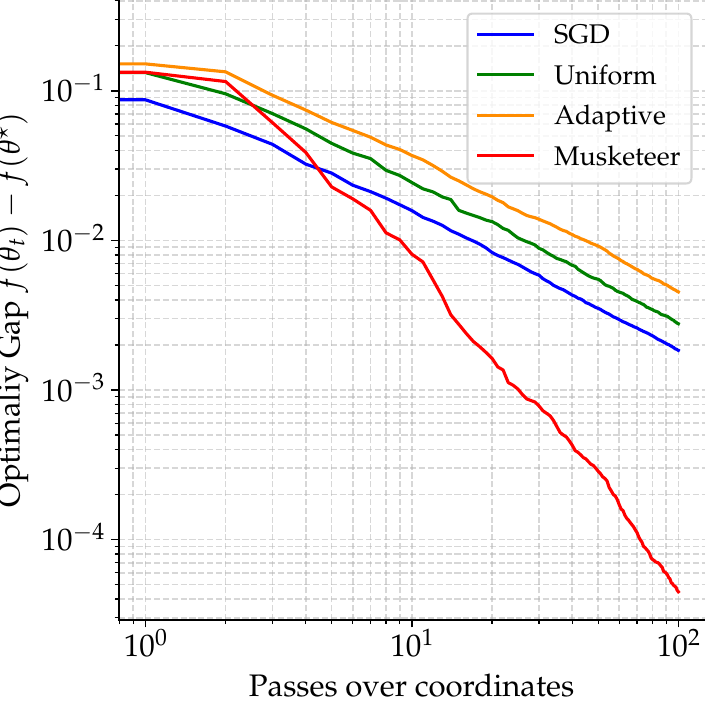}
  \label{fig:ridge_a05_p200}}
  \subfigure[$p=200, \gamma=1$]{
  \includegraphics[scale=0.26]{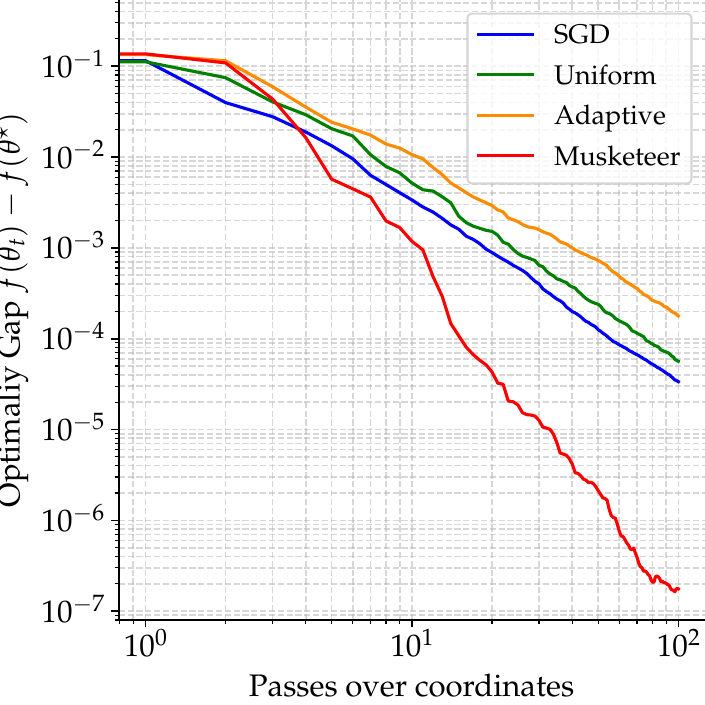}\label{fig:ridge_a1_p200}}
  \subfigure[$p=200, \gamma=1.5$]{
  \includegraphics[scale=0.26]{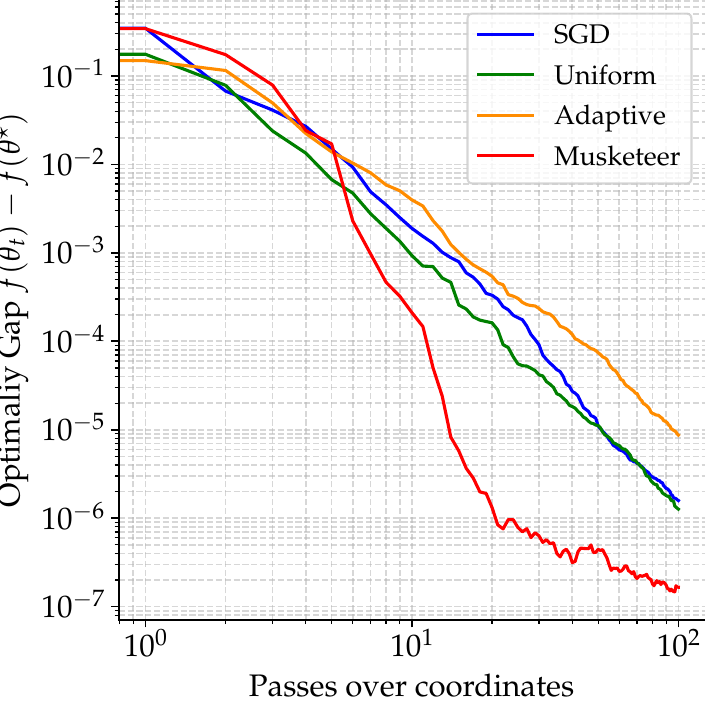}\label{ridge_a15_p200}}
  \subfigure[$p=200, \gamma=2$]{
  \includegraphics[scale=0.26]{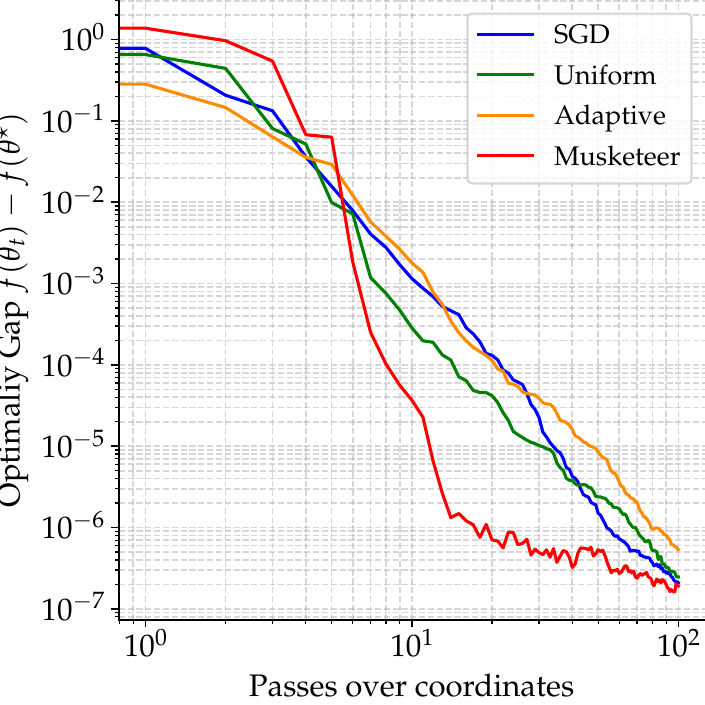}\label{ridge_a2_p200}}
  \caption{$[f(\theta_t)-f^\star]$ for Ridge Regression with $p=200$ and $\gamma \in \{0.5; 1; 1.5; 2\}$}
\label{fig:ridge_p200}
\end{figure}

\newpage
\subsection{Ridge Regression with different settings of $(n,p)$}
We consider the Ridge regression problem with the classical regularization parameter value $\mu=1/n$ and run several experiments in various settings of $(n,p)$. We endow the data matrix $X$ with a block structure. The columns are drawn as $X[:,kB+1:kB+B] \sim \mathcal{N}(0,\sigma_k^2 I_n)$ with $\sigma_k^2 = k^{-\alpha}$ for all $k\in \llbracket 0,(p/B)-1 \rrbracket$. The parameter $B$ is the block-size and is set to $B=5$ for the Ridge regression. The parameter $\alpha$ represents the block structure and is set to $\alpha=10$. The data sampling process $\xi$ of gradient estimate $g$ is computed using mini-batches of size $8$. The different Figures below present the evolution of the optimality gap $t \mapsto[f(\theta_t)-f^\star]$ averaged over $20$ independent runs for $N=1000$ iterations with normalized passes over coordinates. The learning rates is the same for all methods, fixed to $\gamma_k = 1/k$. The different settings are: number of samples $n \in \{1,000;2,000;5,000\}$ and dimension $p \in \{20;50;100;200\}$.

\begin{figure}[h]
  \centering
  \subfigure[$n=1000, p=20$]{
  \includegraphics[scale=0.26]{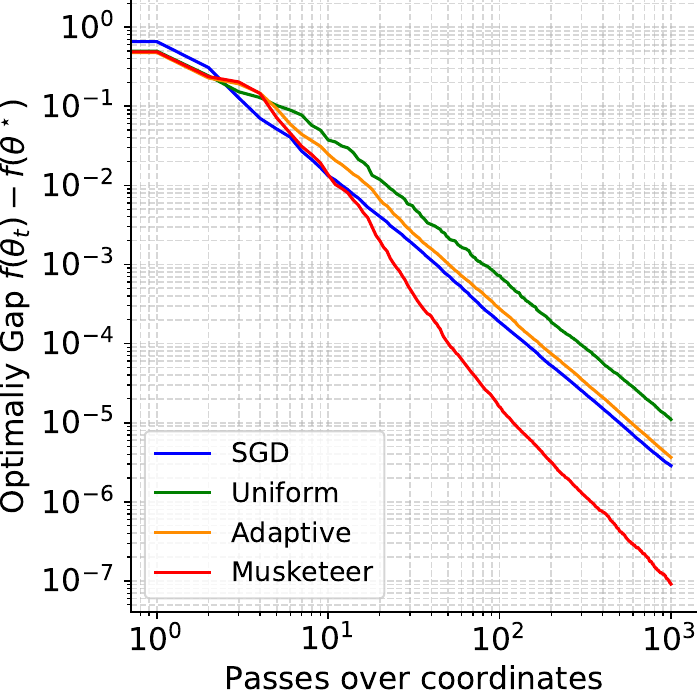}
  \label{fig:ridge_n1000_p20}}
  \subfigure[$n=1000, p=50$]{
  \includegraphics[scale=0.26]{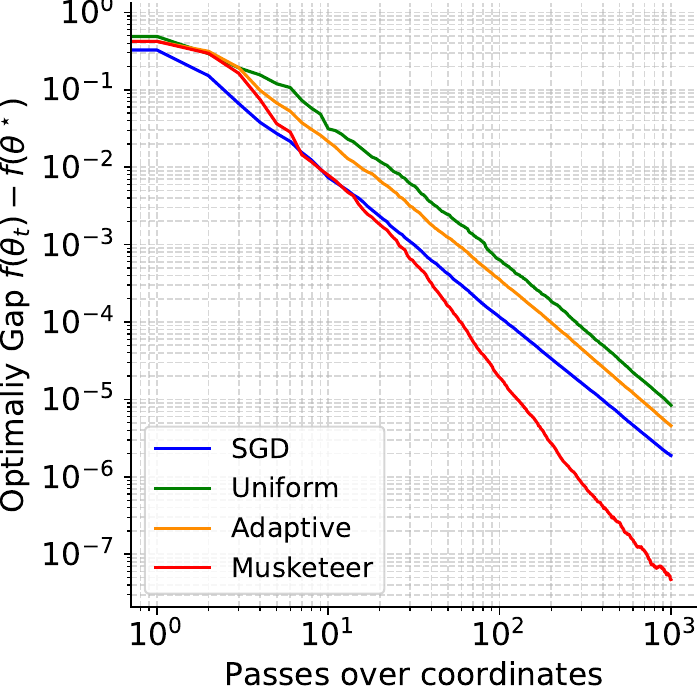}\label{fig:ridge_n1000_p50}}
  \subfigure[$n=1000, p=100$]{
  \includegraphics[scale=0.26]{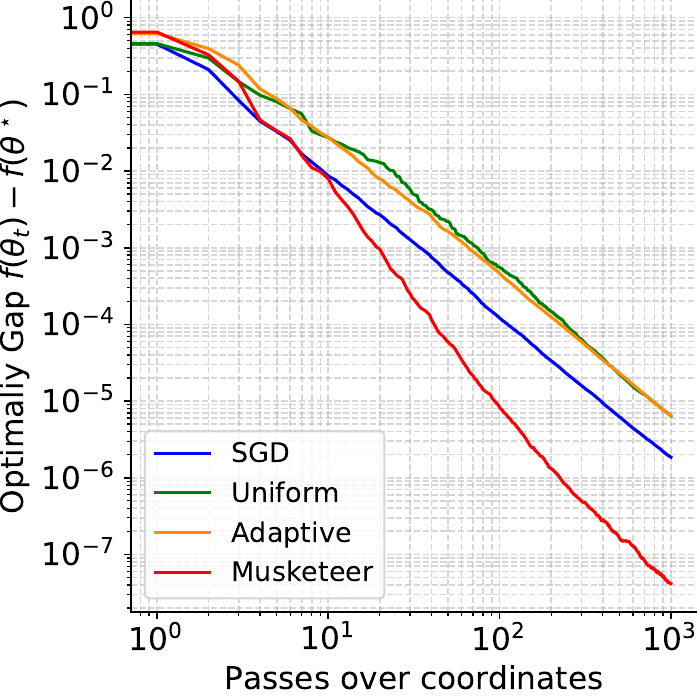}\label{ridge_n1000_p100}}
  \subfigure[$n=1000, p=200$]{
  \includegraphics[scale=0.26]{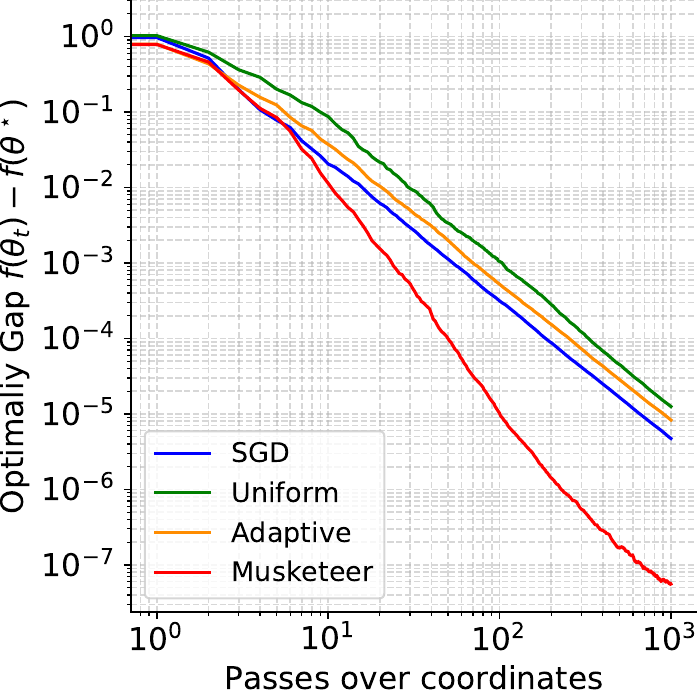}\label{ridge_n1000_p200}}
  \caption{$[f(\theta_t)-f^\star]$ for Ridge Regression with $n=1000$ and $p=20,50,100,200$}
\label{fig:ridge_n1000}

  \centering
  \subfigure[$n=2000, p=20$]{
  \includegraphics[scale=0.26]{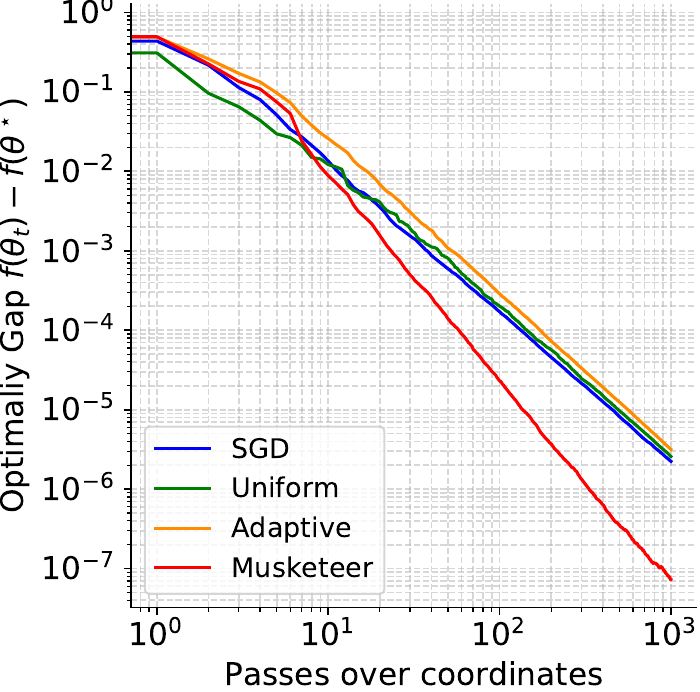}
  \label{fig:ridge_n2000_p20}}
  \subfigure[$n=2000, p=50$]{
  \includegraphics[scale=0.26]{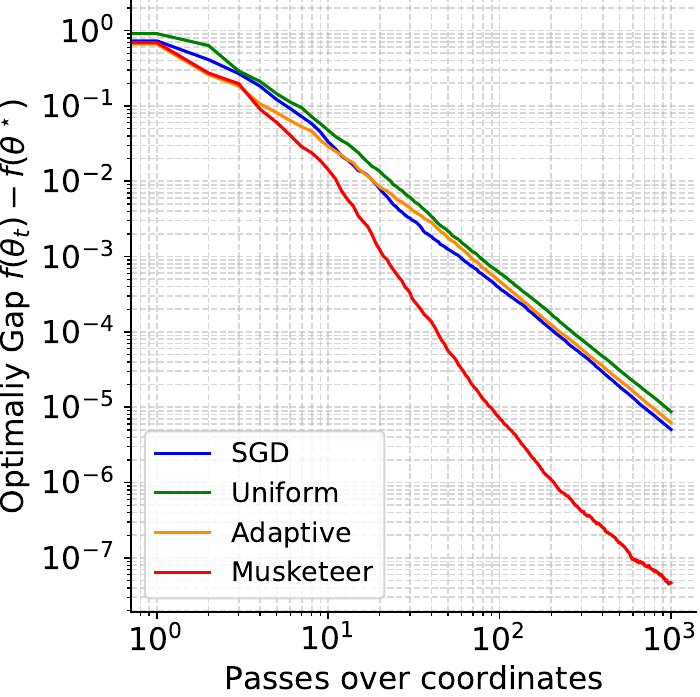}\label{fig:ridge_n2000_p50}}
  \subfigure[$n=2000, p=100$]{
  \includegraphics[scale=0.26]{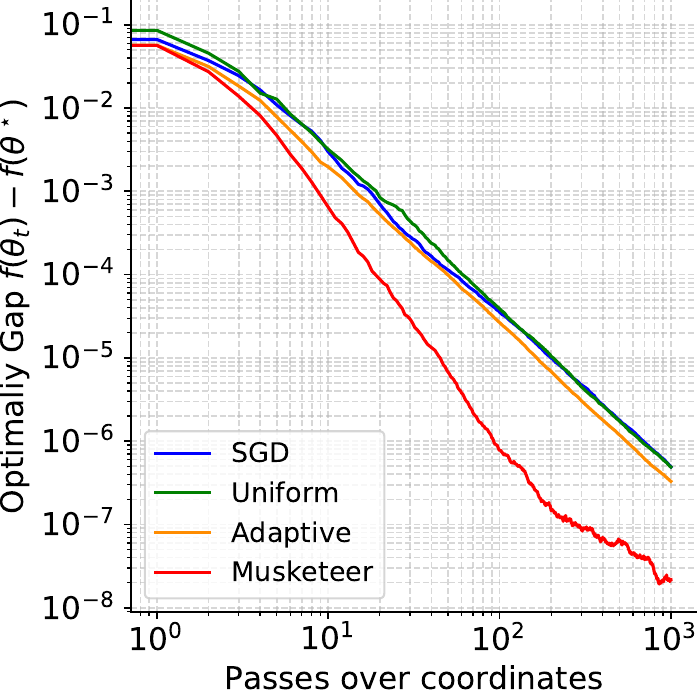}\label{ridge_n2000_p100}}
  \subfigure[$n=2000, p=200$]{
  \includegraphics[scale=0.26]{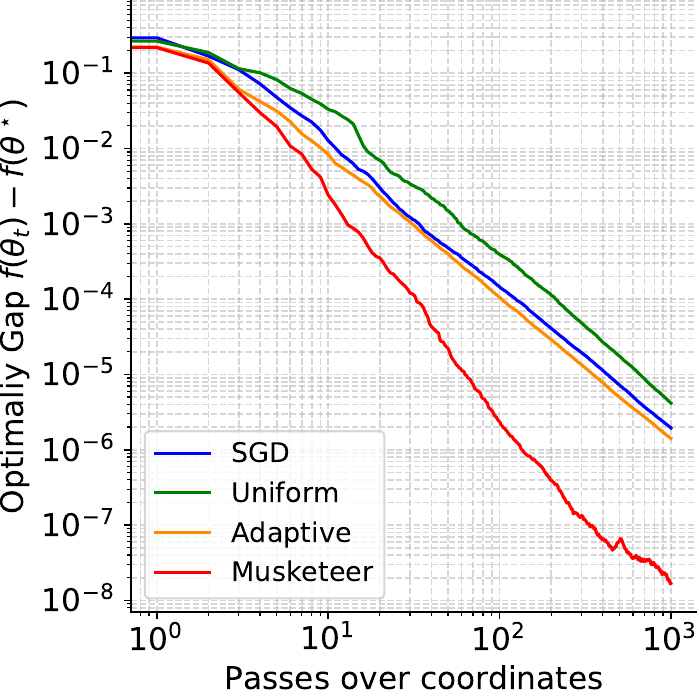}\label{ridge_n2000_p200}}
  \caption{$[f(\theta_t)-f^\star]$ for Ridge Regression with $n=2000$ and $p=20,50,100,200$}
\label{fig:ridge_n2000}

  \centering
  \subfigure[$n=5000, p=20$]{
  \includegraphics[scale=0.26]{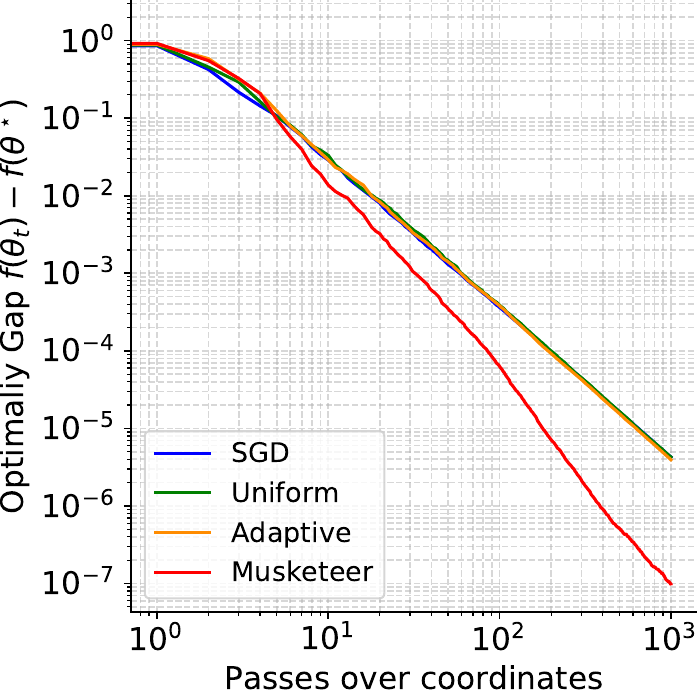}
  \label{fig:ridge_n5000_p20}}
  \subfigure[$n=5000, p=50$]{
  \includegraphics[scale=0.26]{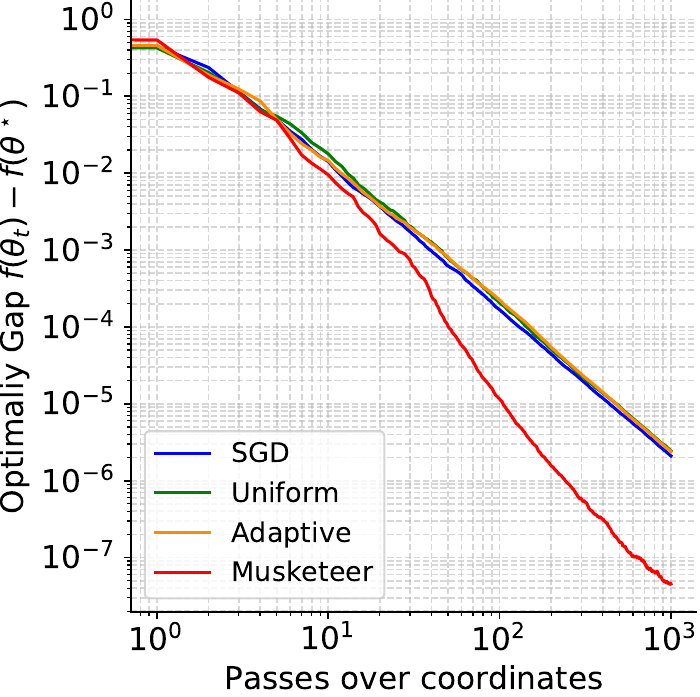}\label{fig:ridge_n5000_p50}}
  \subfigure[$n=5000, p=100$]{
  \includegraphics[scale=0.26]{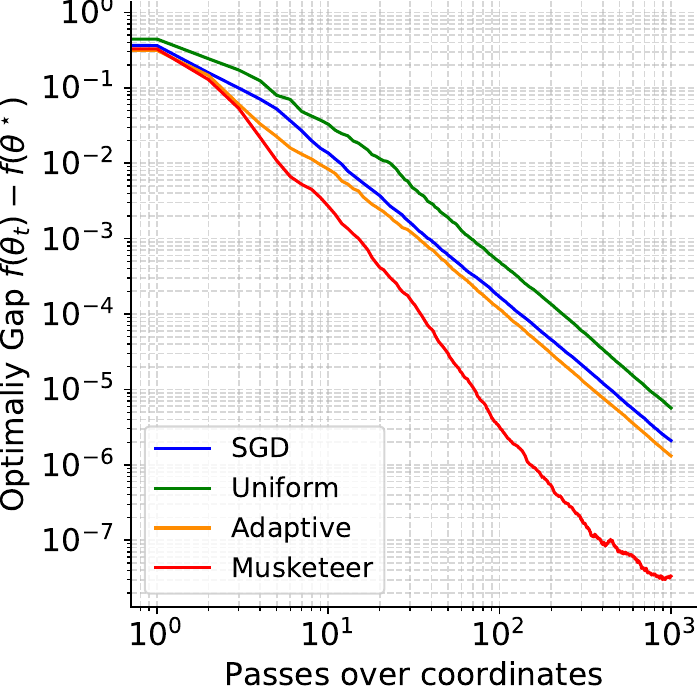}\label{ridge_n5000_p100}}
  \subfigure[$n=5000, p=200$]{
  \includegraphics[scale=0.26]{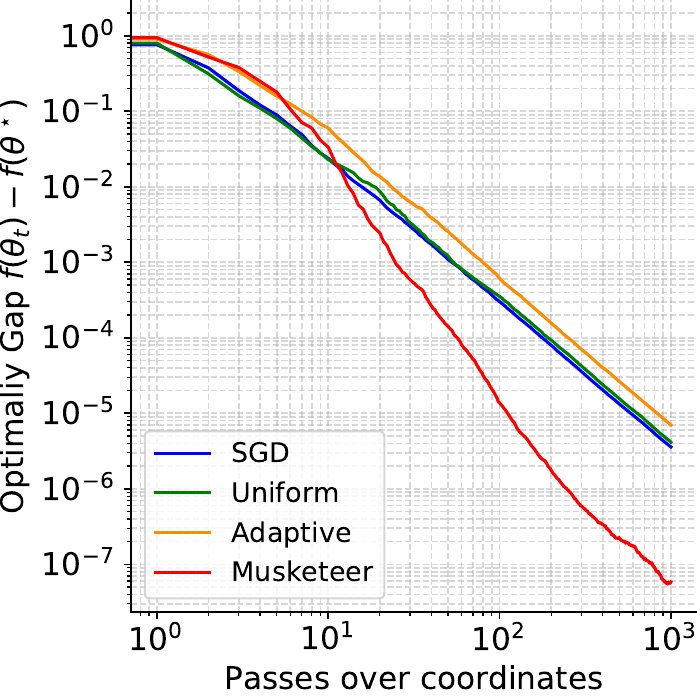}\label{ridge_n5000_p200}}
  \caption{$[f(\theta_t)-f^\star]$ for Ridge Regression with $n=5000$ and $p=20,50,100,200$}
\label{fig:ridge_n5000}
\end{figure}

\newpage
\subsection{Logistic Regression with different settings of $(n,p)$}

We consider the $\ell_2$-Logistic regression problem with the classical regularization parameter value $\mu=1/n$ and run several experiments in various settings of $(n,p)$. We endow the data matrix $X$ with a block structure. The columns are drawn as $X[:,kB+1:kB+B] \sim \mathcal{N}(0,\sigma_k^2 I_n)$ with $\sigma_k^2 = k^{-\alpha}$ for all $k\in \llbracket 1,(p/B)-1 \rrbracket$. The parameter $B$ is the block-size and is set to $B=2$ for the Logistic regression. The parameter $\alpha$ represents the block structure and is set to $\alpha=5$. The data sampling process $\xi$ of gradient estimate $g$ is computed using mini-batches of size $32$. The different Figures below present the evolution of the optimality gap $t \mapsto[f(\theta_t)-f^\star]$ averaged over $20$ independent runs for $N=1000$ iterations with normalized passes over coordinates. The learning rates is the same for all methods, fixed to $\gamma_k = 1/k$. The different settings are: number of samples $n \in \{1,000;2,000;5,000\}$ and dimension $p \in \{20;50;100;200\}$.

\begin{figure}[h]
  \centering
  \subfigure[$n=1000, p=20$]{
  \includegraphics[scale=0.26]{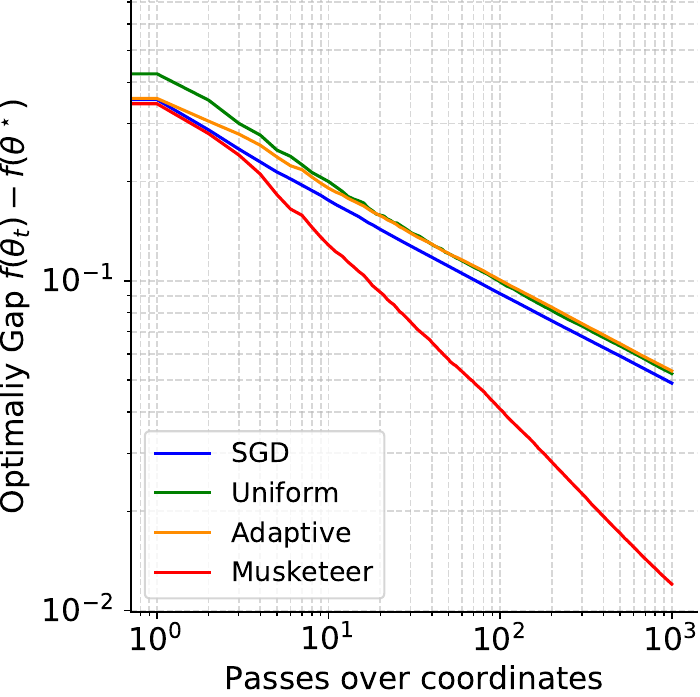}
  \label{fig:logistic_n1000_p20}}
  \subfigure[$n=1000, p=50$]{
  \includegraphics[scale=0.26]{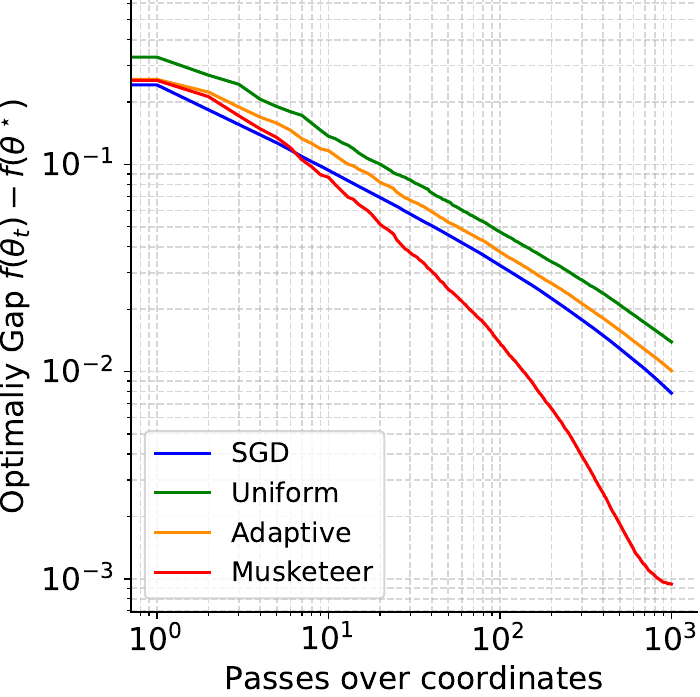}\label{fig:logistic_n1000_p50}}
  \subfigure[$n=1000, p=100$]{
  \includegraphics[scale=0.26]{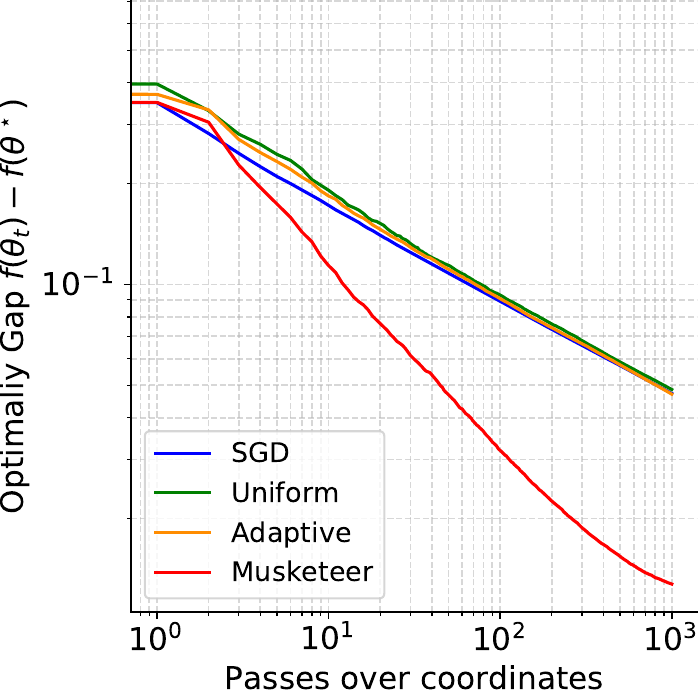}\label{logistic_n1000_p100}}
  \subfigure[$n=1000, p=200$]{
  \includegraphics[scale=0.26]{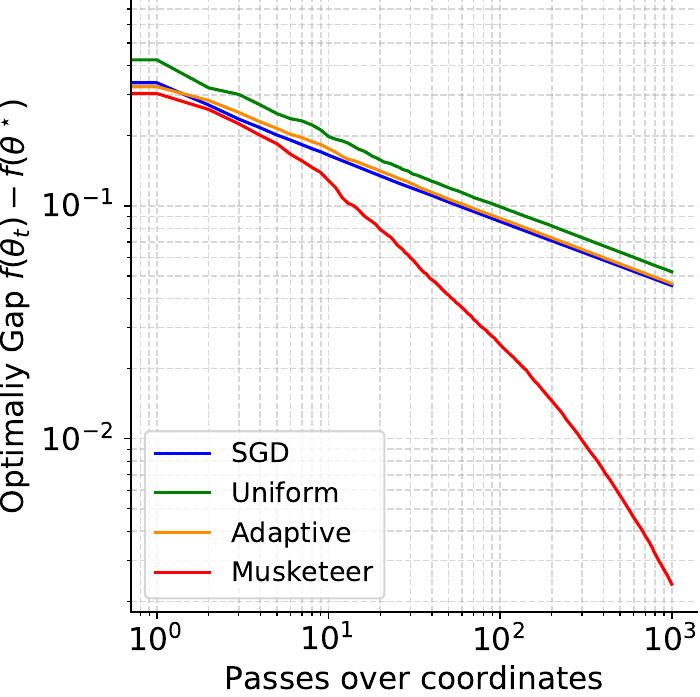}\label{logistic_n1000_p200}}
  \caption{$[f(\theta_t)-f^\star]$ for logistic Regression with $n=1000$ and $p=20,50,100,200$}
\label{fig:logistic_n1000}

  \centering
  \subfigure[$n=2000, p=20$]{
  \includegraphics[scale=0.26]{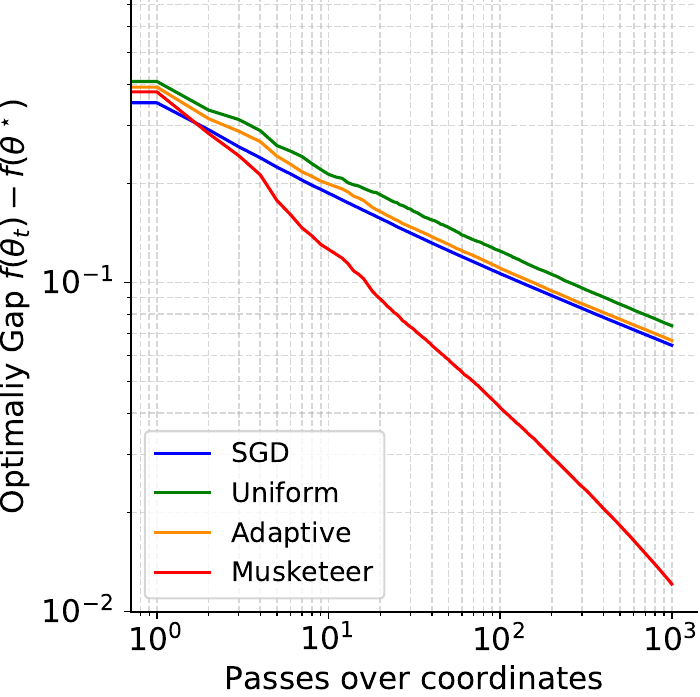}
  \label{fig:logistic_n2000_p20}}
  \subfigure[$n=2000, p=50$]{
  \includegraphics[scale=0.26]{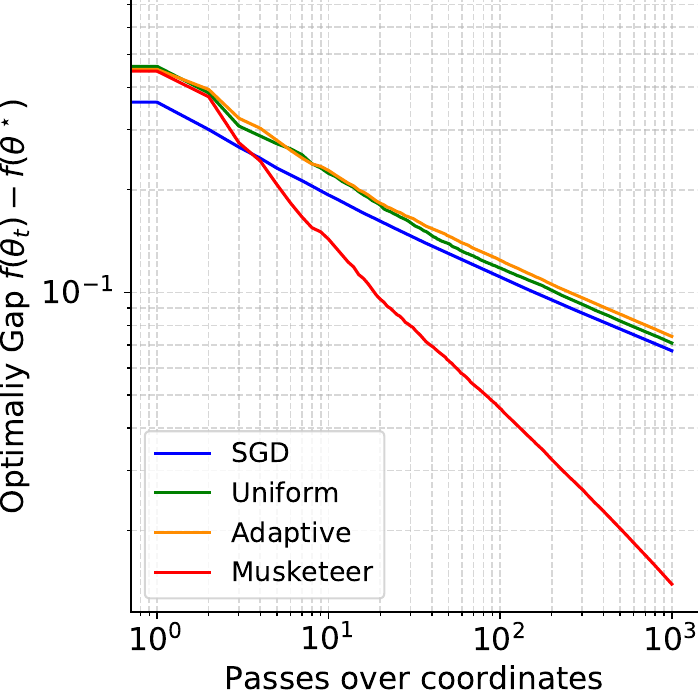}\label{fig:logistic_n2000_p50}}
  \subfigure[$n=2000, p=100$]{
  \includegraphics[scale=0.26]{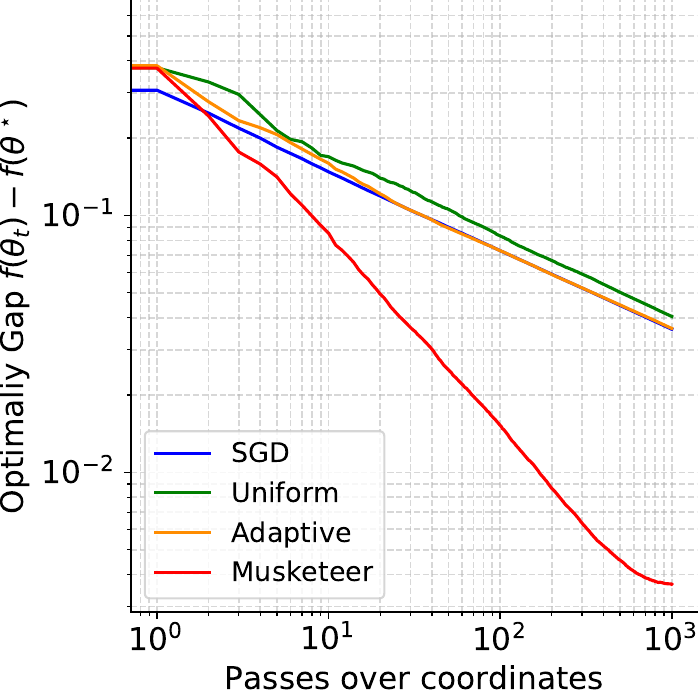}\label{logistic_n2000_p100}}
  \subfigure[$n=2000, p=200$]{
  \includegraphics[scale=0.26]{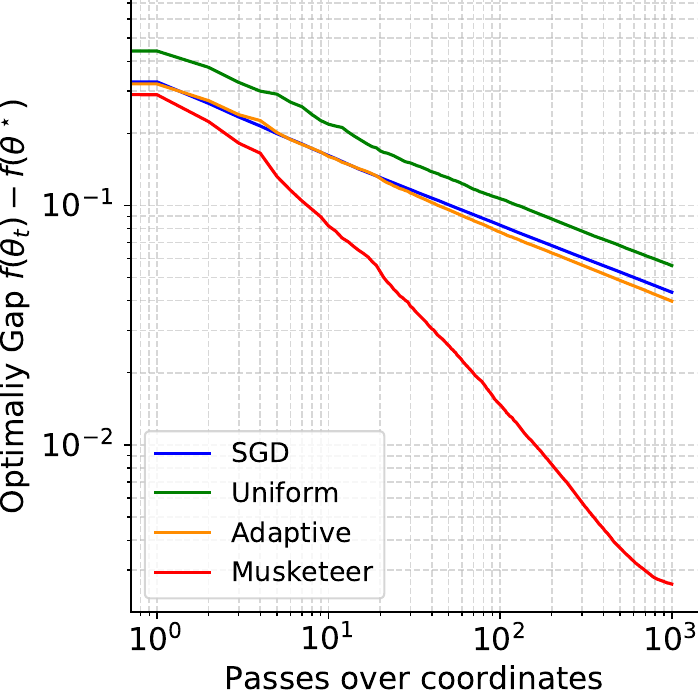}\label{logistic_n2000_p200}}
  \caption{ $[f(\theta_t)-f^\star]$ for logistic Regression with $n=2000$ and $p=20,50,100,200$}
\label{fig:logistic_n2000}

  \centering
  \subfigure[$n=5000, p=20$]{
  \includegraphics[scale=0.26]{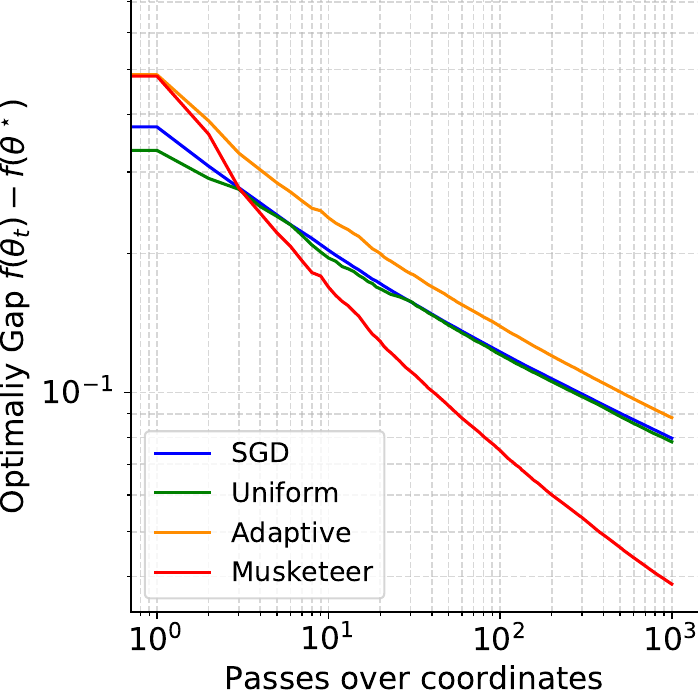}
  \label{fig:logistic_n5000_p20}}
  \subfigure[$n=5000, p=50$]{
  \includegraphics[scale=0.26]{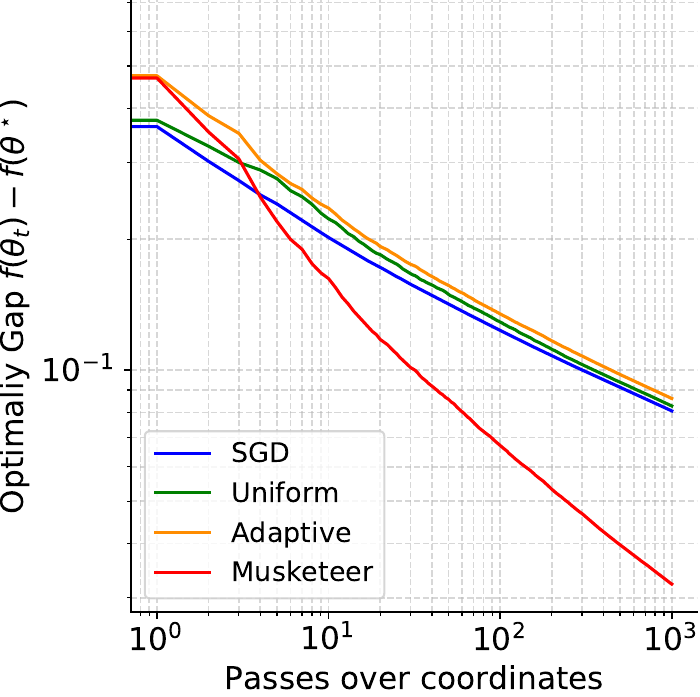}\label{fig:logistic_n5000_p50}}
  \subfigure[$n=5000, p=100$]{
  \includegraphics[scale=0.26]{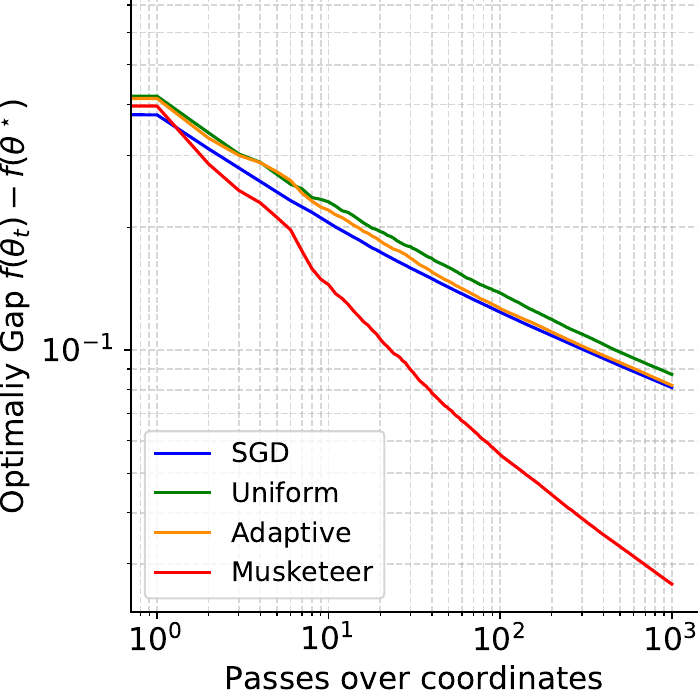}\label{logistic_n5000_p100}}
  \subfigure[$n=5000, p=200$]{
  \includegraphics[scale=0.26]{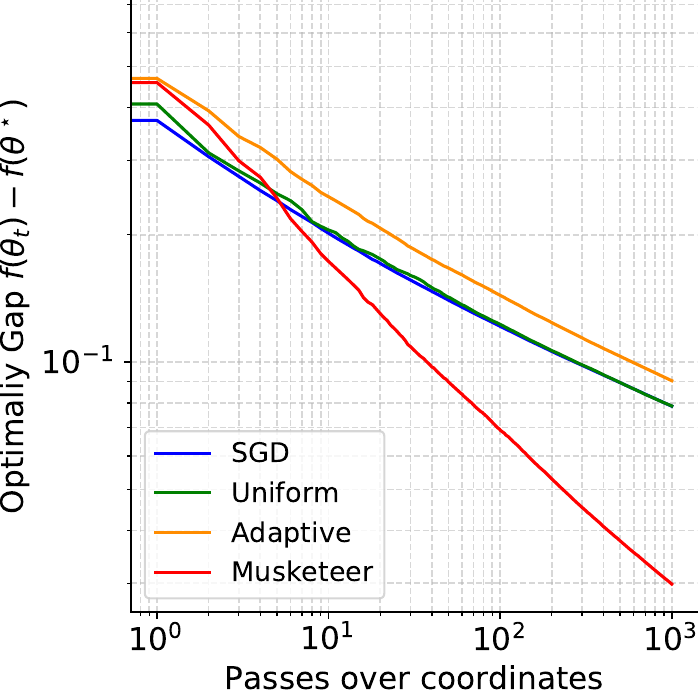}\label{logistic_n5000_p200}}
  \caption{$[f(\theta_t)-f^\star]$ for logistic Regression with $n=5000$ and $p=20,50,100,200$}
\label{fig:logistic_n5000}
\end{figure}

\newpage
\bibliography{ref}

\end{document}